\documentclass{article}
\usepackage{style/unites}
\AtBeginDocument{\fontsize{12}{14}\selectfont}

\usepackage[utf8]{inputenc}
\usepackage[T1]{fontenc}
\usepackage{microtype}

\usepackage{amsmath}
\usepackage{amssymb}
\usepackage{amsfonts}
\usepackage{amsthm}
\usepackage{mathtools}
\usepackage{mathrsfs}
\usepackage{physics}
\usepackage{braket}
\usepackage{slashed}
\usepackage{nicefrac}
\usepackage{textcomp}
\usepackage{dsfont}
\usepackage{bbm}
\usepackage{bm}

\usepackage{graphicx}
\usepackage{subcaption}
\usepackage[export]{adjustbox}
\usepackage{float}
\usepackage{booktabs}
\usepackage{dcolumn}
\newcolumntype{d}[1]{D{.}{.}{#1}}
\usepackage{bigstrut, tabularx, multirow, makecell, diagbox}
\usepackage{colortbl}
\usepackage{tabularray}
\UseTblrLibrary{booktabs}
\usepackage{threeparttable}
\usepackage{tablefootnote}
\usepackage{fontawesome5}

\usepackage{placeins}
\usepackage{caption}
\usepackage{footnote}
\usepackage{enumitem}
\usepackage{multicol}
\usepackage{xspace}
\usepackage{titletoc}
\usepackage{titlesec}
\usepackage[bottom]{footmisc}
\usepackage{setspace}

\usepackage{wrapfig}
\usepackage{tikz}
\usepackage{quantikz}
\usepackage{dashbox}
\usepackage{mdframed}
\usepackage{marvosym}
\usepackage{pifont}
\usepackage{CJK}
\usepackage{url}

\usepackage[table,x11names]{xcolor}
\usepackage[most]{tcolorbox}
\tcbuselibrary{breakable}
\usetikzlibrary{decorations.pathreplacing, fit}

\definecolor{primaryblue}{HTML}{0066CC}
\definecolor{accentcyan}{HTML}{00D4AA}
\definecolor{warmorange}{HTML}{FF6B35}
\definecolor{deepgray}{HTML}{2C3E50}
\definecolor{lightgray}{HTML}{F8F9FA}
\definecolor{gradientstart}{HTML}{667eea}
\definecolor{gradientend}{HTML}{764ba2}

\definecolor{citecolor}{HTML}{0071bc}
\definecolor{citeblue}{RGB}{0, 113, 188}
\definecolor{linkcolor}{HTML}{9A4D92}
\definecolor{firebrick}{rgb}{0.698,0.133,0.133}

\definecolor{paleviolet}{HTML}{E1EEFC}
\definecolor{CarolinaUltraLight}{HTML}{E7F4FC}
\definecolor{lightgrey}{RGB}{247, 247, 247}
\definecolor{shadecolor}{HTML}{EFEFEF}
\definecolor{lightyellow}{rgb}{1.0, 0.95, 0.7}
\definecolor{lightblue}{rgb}{0.90, 0.95, 1.0}
\definecolor{light-gray}{gray}{0.95}

\definecolor{darkgrey}{rgb}{0.5, 0.5, 0.5}
\definecolor{darkgreen}{rgb}{0, 0.5, 0}
\definecolor{mydarkblue}{rgb}{0,0.08,0.45}
\definecolor{mydarkblue2}{rgb}{0.133, 0.133, 0.698}
\definecolor{echodrk}{HTML}{0099cc}
\definecolor{mymauve}{rgb}{0.58,0,0.82}
\definecolor{midnightblue}{rgb}{0.1,0.1,0.44}
\definecolor{oxfordblue}{rgb}{0.0,0.13,0.28}
\definecolor{prussianblue}{rgb}{0.0,0.19,0.33}
\definecolor{coolteal}{rgb}{0, 0.45, 0.45}
\definecolor{olive}{rgb}{0.1, 0.3, 0}
\definecolor{mypurple}{rgb}{0.5,0,0.5}
\definecolor{almond}{rgb}{0.94, 0.87, 0.8}

\definecolor{blue_ampEncoding}{HTML}{DAE8FC}
\definecolor{green_encoder}{HTML}{D5E8D4}
\definecolor{purple_decoder}{HTML}{E1D5E7}
\definecolor{yellow_measure}{HTML}{FFF2CC}
\definecolor{gray_block}{HTML}{F5F5F5}
\definecolor{pink_dru}{HTML}{FAD9D5}
\definecolor{orange_v}{HTML}{FAD7AC}

\definecolor{colorA}{rgb}{1,0,0}
\definecolor{colorB}{rgb}{0,0.3,1}
\definecolor{colorC}{rgb}{0.9,0.8,0.2}
\definecolor{colorD}{rgb}{0,0.65,0}
\definecolor{lesslightgray}{rgb}{0.5,0.5,0.5}
\definecolor{fundamental}{RGB}{55, 110, 111}
\definecolor{Gred}{RGB}{219, 50, 54}
\definecolor{ToCgreen}{RGB}{0, 128, 0}
\definecolor{Sepia}{RGB}{112, 66, 20}
\definecolor{Dblue}{rgb}{0,0.08,0.45}
\definecolor{Blue}{rgb}{0, 0, 0.8}
\definecolor{blue}{rgb}{0,0,1}
\definecolor{UNCblue!10}{rgb}{0.84,0.91,0.98}
\definecolor{RowAlt}{rgb}{0.98,0.98,0.99}

\definecolor{CarolinaBlue}{HTML}{7BAFD4}        %
\definecolor{CarolinaLightBlue}{HTML}{B3D4E5}   %
\definecolor{CarolinaUltraLight}{HTML}{E8F4F8}  %
\definecolor{CarolinaText}{HTML}{1C2B33}        %

\usepackage[pagebackref=true,breaklinks=true,colorlinks,hyperfootnotes=false]{hyperref}
\hypersetup{
  colorlinks,
  citecolor=citeblue,
  linkcolor=firebrick,
  urlcolor=firebrick
}
\usepackage[nameinlink,capitalize,noabbrev]{cleveref}

\titlespacing\section{0pt}{4pt plus 4pt minus 2pt}{-2pt plus 2pt minus 2pt}
\titlespacing\subsection{0pt}{2pt plus 4pt minus 2pt}{-2pt plus 2pt minus 2pt}
\titlespacing\subsubsection{0pt}{2pt plus 4pt minus 2pt}{-2pt plus 2pt minus 2pt}

\makeatletter
\def\th@remark{%
  \thm@headfont{\bfseries}%
  \normalfont %
  \thm@preskip\topsep \divide\thm@preskip\tw@
  \thm@postskip\thm@preskip
}
\makeatother

\theoremstyle{definition}

\newtheorem{theorem}{Theorem}[section]
\tcolorboxenvironment{theorem}{
  breakable,
  colback=black!10,
  colframe=white,
  width=\linewidth, 
  enlarge left by=0pt,
  enlarge right by=0pt,
  boxsep=5pt,
  boxrule=0pt,
  left=0pt,right=0pt,top=0pt,bottom=0pt,
  arc=8pt,
  before skip=\topsep,
  after skip=\topsep
}

\newtheorem{lemma}{Lemma}[section]
\tcolorboxenvironment{lemma}{
  breakable,
  colback=black!10,
  colframe=white,
  width=\linewidth,
  enlarge left by=0pt,
  enlarge right by=0pt,
  boxsep=5pt,
  boxrule=0pt,
  left=0pt,right=0pt,top=0pt,bottom=0pt,
  arc=8pt,
  before skip=\topsep,
  after skip=\topsep
}

\tcolorboxenvironment{corollary}{
  breakable,
  colback=black!10,
  colframe=white,
  width=\linewidth,
  enlarge left by=0pt,
  enlarge right by=0pt,
  boxsep=5pt,
  boxrule=0pt,
  left=0pt,right=0pt,top=0pt,bottom=0pt,
  arc=8pt,
  before skip=\topsep,
  after skip=\topsep
}

\newtheorem{proposition}{Proposition}[section]
\tcolorboxenvironment{proposition}{
  breakable,
  colback=black!10,
  colframe=white,
  width=\linewidth,
  enlarge left by=0pt,
  enlarge right by=0pt,
  boxsep=5pt,
  boxrule=0pt,
  left=0pt,right=0pt,top=0pt,bottom=0pt,
  arc=8pt,
  before skip=\topsep,
  after skip=\topsep
}

\tcolorboxenvironment{definition}{
  breakable,
  colback=black!10,
  colframe=white,
  width=\linewidth,
  enlarge left by=0pt,
  enlarge right by=0pt,
  boxsep=5pt,
  boxrule=0pt,
  left=0pt,right=0pt,top=0pt,bottom=0pt,
  arc=8pt,
  before skip=\topsep,
  after skip=\topsep
}

\tcolorboxenvironment{assumption}{
  breakable,
  colback=black!10,
  colframe=white,
  width=\linewidth,
  enlarge left by=0pt,
  enlarge right by=0pt,
  boxsep=5pt,
  boxrule=0pt,
  left=0pt,right=0pt,top=0pt,bottom=0pt,
  arc=8pt,
  before skip=\topsep,
  after skip=\topsep
}

\tcolorboxenvironment{claim}{
  breakable,
  colback=black!10,
  colframe=white,
  width=\linewidth,
  enlarge left by=0pt,
  enlarge right by=0pt,
  boxsep=5pt,
  boxrule=0pt,
  left=0pt,right=0pt,top=0pt,bottom=0pt,
  arc=8pt,
  before skip=\topsep,
  after skip=\topsep
}

\tcolorboxenvironment{problem}{
  breakable,
  colback=black!10,
  colframe=white,
  width=\linewidth,
  enlarge left by=0pt,
  enlarge right by=0pt,
  boxsep=5pt,
  boxrule=0pt,
  left=0pt,right=0pt,top=0pt,bottom=0pt,
  arc=8pt,
  before skip=\topsep,
  after skip=\topsep
}

\tcolorboxenvironment{question}{
  breakable,
  colback=black!10,
  colframe=white,
  width=\linewidth,
  enlarge left by=0pt,
  enlarge right by=0pt,
  boxsep=5pt,
  boxrule=0pt,
  left=0pt,right=0pt,top=0pt,bottom=0pt,
  arc=8pt,
  before skip=\topsep,
  after skip=\topsep
}

\newtcolorbox{titleblock}{
  enhanced,
  frame hidden,
  colback=CarolinaUltraLight,
  colframe=CarolinaUltraLight,
  boxrule=0pt,
  arc=10pt,
  left=14pt,
  right=14pt,
  top=14pt,
  bottom=14pt,
  width=\linewidth,
  before skip=12pt plus 4pt,
  after skip=12pt plus 4pt,
  grow to left by=1.5pt,
  grow to right by=1.5pt,
  before upper={
    \setlength{\parindent}{0cm}
    \setlength{\parskip}{0.5cm}
  }
}

\crefname{theorem}{Theorem}{Theorems}
\crefname{proposition}{Proposition}{Propositions}
\crefname{lemma}{Lemma}{Lemmas}
\crefname{corollary}{Corollary}{Corollaries}
\crefname{definition}{Definition}{Definitions}
\crefname{assumption}{Assumption}{Assumptions}
\crefname{remark}{Remark}{Remarks}
\crefname{problem}{Problem}{Problems}
\crefname{property}{Property}{property}
\crefname{question}{Question}{Questions}

\numberwithin{equation}{section}
\numberwithin{theorem}{section}
\numberwithin{proposition}{section}
\numberwithin{definition}{section}
\numberwithin{lemma}{section}
\numberwithin{assumption}{section}
\numberwithin{remark}{section}

\def\1{\bm{1}}

\makeatletter
\let\save@mathaccent\mathaccent
\newcommand*\if@single[3]{%
    \setbox0\hbox{${\mathaccent"0362{#1}}^H$}%
    \setbox2\hbox{${\mathaccent"0362{\kern0pt#1}}^H$}%
    \ifdim\ht0=\ht2 #3\else #2\fi
}
\newcommand*\rel@kern[1]{\kern#1\dimexpr\macc@kerna}
\newcommand*\widebar[1]{\@ifnextchar^{{\wide@bar{#1}{0}}}{\wide@bar{#1}{1}}}
\newcommand*\wide@bar[2]{\if@single{#1}{\wide@bar@{#1}{#2}{1}}{\wide@bar@{#1}{#2}{2}}}
\newcommand*\wide@bar@[3]{%
    \begingroup
    \def\mathaccent##1##2{%
        \let\mathaccent\save@mathaccent
        \if#32 \let\macc@nucleus\first@char \fi
        \setbox\z@\hbox{$\macc@style{\macc@nucleus}_{}$}%
        \setbox\tw@\hbox{$\macc@style{\macc@nucleus}{}_{}$}%
        \dimen@\wd\tw@
        \advance\dimen@-\wd\z@
        \divide\dimen@ 3
        \@tempdima\wd\tw@
        \advance\@tempdima-\scriptspace
        \divide\@tempdima 10
        \advance\dimen@-\@tempdima
        \ifdim\dimen@>\z@ \dimen@0pt\fi
        \rel@kern{0.6}\kern-\dimen@
        \if#31
        \overline{\rel@kern{-0.6}\kern\dimen@\macc@nucleus\rel@kern{0.4}\kern\dimen@}%
        \advance\dimen@0.4\dimexpr\macc@kerna
        \let\final@kern#2%
        \ifdim\dimen@<\z@ \let\final@kern1\fi
        \if\final@kern1 \kern-\dimen@\fi
        \else
        \overline{\rel@kern{-0.6}\kern\dimen@#1}%
        \fi
    }%
    \macc@depth\@ne
    \let\math@bgroup\@empty \let\math@egroup\macc@set@skewchar
    \mathsurround\z@ \frozen@everymath{\mathgroup\macc@group\relax}%
    \macc@set@skewchar\relax
    \let\mathaccentV\macc@nested@a
    \if#31
    \macc@nested@a\relax111{#1}%
    \else
    \def\gobble@till@marker##1\endmarker{}%
    \futurelet\first@char\gobble@till@marker#1\endmarker
    \ifcat\noexpand\first@char A\else
    \def\first@char{}%
    \fi
    \macc@nested@a\relax111{\first@char}%
    \fi
    \endgroup
    }
\makeatother

\DeclareMathAlphabet{\mathsfit}{\encodingdefault}{\sfdefault}{m}{sl}
\SetMathAlphabet{\mathsfit}{bold}{\encodingdefault}{\sfdefault}{bx}{n}

\renewcommand{\arraystretch}{1.15}
\setlength{\tabcolsep}{6pt}

\newtheorem{ResearchQ}{RQ}
\tcolorboxenvironment{ResearchQ}{
  breakable,
  colback=black!10,
  colframe=white,%
  width=\dimexpr\linewidth\relax,%
  enlarge left by=0pt,%
  enlarge right by=0pt,%
  boxsep=5pt,%
  boxrule=0pt,
  left=0pt,right=0pt,top=0pt,bottom=0pt,
  arc=8pt
  before skip=\topsep,
  after skip=\topsep
}
\crefname{ResearchQ}{ResearchQ}{ResearchQs}

\usepackage{listings}
\lstset{
  literate={→}{{->}}2
}

\lstdefinestyle{plainmarkdown}{
  language={},
  basicstyle=\fontsize{12}{14}\selectfont\ttfamily,
  breaklines=true,
  breakatwhitespace=true,
  showstringspaces=false,
  showspaces=false,
  showtabs=false,
  columns=fullflexible,
  keepspaces=false,
  backgroundcolor=\color{gray!8},
  frame=single,
  rulecolor=\color{gray!40},
  framerule=0.8pt,
  xleftmargin=3pt,
  xrightmargin=3pt,
  framextopmargin=3pt,
  framexbottommargin=3pt,
  breakindent=0pt,
  postbreak=\mbox{\textcolor{red}{$\hookrightarrow$}\space},
}

\lstdefinestyle{markdownhighlight}{
  basicstyle=\ttfamily\footnotesize,
  breaklines=true,
  breakatwhitespace=true,
  showstringspaces=false,
  showspaces=false,
  showtabs=false,
  columns=fullflexible,
  keepspaces=false,
  backgroundcolor=\color{gray!8},
  frame=single,
  rulecolor=\color{gray!40},
  framerule=0.8pt,
  xleftmargin=3pt,
  xrightmargin=3pt,
  framextopmargin=3pt,
  framexbottommargin=3pt,
  breakindent=0pt,
  postbreak=\mbox{\textcolor{red}{$\hookrightarrow$}\space},
  escapeinside={(*@}{@*)},
}

\renewcommand{\arraystretch}{1.15}
\setlength{\tabcolsep}{6pt}
\definecolor{UNCblue!10}{rgb}{0.84,0.91,0.98}
\definecolor{RowAlt}{rgb}{0.98,0.98,0.99}

\begin{document}

\makeatletter
\def\blfootnote{\gdef\@thefnmark{}\@footnotetext}
\makeatother

\makeatletter
\pagestyle{fancy}
\fancyhf{}
\renewcommand{\headrulewidth}{1pt}
\chead{\small\bf TRUST: A Decentralized Framework for Auditing Large Language Model Reasoning

}
\cfoot{\thepage}
\thispagestyle{fancy}
\makeatother

\makeatletter
\def\icmldate#1{\gdef\@icmldate{#1}}
\icmldate{\today}
\makeatother

\makeatletter
\fancypagestyle{fancytitlepage}{
  \fancyhead{}
  \lhead{\includegraphics[height=0.8cm]{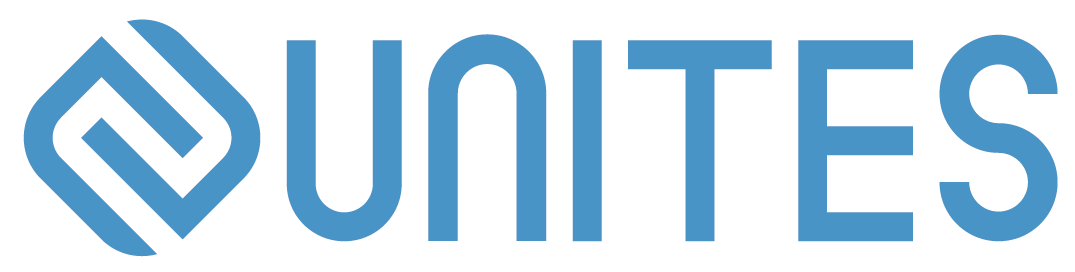}}
  \rhead{\it \@icmldate}
  \cfoot{}
}
\makeatother

\thispagestyle{fancytitlepage}

\vspace*{0.5em}

\noindent
\begin{titleblock}
    {\setlength{\parskip}{0cm}
     \raggedright
     {\setstretch{1.2}
      \LARGE\bfseries
      
      \par}
    }
    \vskip 0.2cm
    
    \begin{icmlauthorlist}
\mbox{Morris Yu-Chao Huang$^{\,1\,\textrm{\Letter}}$},
\mbox{Zhen Tan$^{\,2\,}$}, 
\mbox{Mohan Zhang$^{\,1\,}$}, 
\mbox{Pingzhi Li$^{\,1\,}$}, 
\mbox{Zhuo Zhang$^{\,3\,}$}, 
and \mbox{Tianlong Chen$^{\,1\,\textrm{\Letter}}$}
\end{icmlauthorlist}

$^{1\,}$UNITES Lab, University of North Carolina at Chapel Hill  
\quad 
$^{2\,}$Arizona State University
\quad 
$^{3\,}$Columbia University

$^{\textrm{\Letter}}$ Corresponding Author: \{morris, tianlong\}@cs.unc.edu

    \vskip 0.2cm

Large Language Models generate complex reasoning chains that reveal their decision-making, yet verifying the faithfulness and harmlessness of these intermediate steps remains a critical unsolved problem. Existing auditing methods are centralized, opaque, and hard to scale, creating significant risks for deploying proprietary models in high-stakes domains. We identify four core challenges: 
(1) \textit{Robustness}: Centralized auditors are single points of failure, prone to bias or attacks. 
(2) \textit{Scalability}: Reasoning traces are too long for manual verification.
(3) \textit{Opacity}: Closed auditing undermines public trust. 
(4) \textit{Privacy}: Exposing full reasoning risks model theft or distillation. 
We propose TRUST, a transparent, decentralized auditing framework that overcomes these limitations via: 
(1) A consensus mechanism among diverse auditors, guaranteeing correctness under up to $30\%$ malicious participants. 
(2) A hierarchical DAG decomposition of reasoning traces, enabling scalable, parallel auditing. 
(3) A blockchain ledger that records all verification decisions for public accountability.
(4) Privacy-preserving segmentation, sharing only partial reasoning steps to protect proprietary logic. We provide theoretical guarantees for the security and economic incentives of the TRUST framework. Experiments across multiple LLMs (GPT-OSS, DeepSeek-r1, Qwen) and reasoning tasks (math, medical, science, humanities) show TRUST effectively detects reasoning flaws and remains robust against adversarial auditors. Our work pioneers decentralized AI auditing, offering a practical path toward safe and trustworthy LLM deployment.

\end{titleblock}

\clearpage
\vspace{1em}
{\LARGE \bfseries Table of Contents}

{
\setlength{\parskip}{-0em}
\startcontents[sections]
\printcontents[sections]{}{1}{}
}

\setlength{\parskip}{.5em}
\clearpage

\section{Introduction}\label{sec:intro}
The capabilities of large language models (LLMs) have expanded from text generation to complex, multi-step reasoning, leading to the development of Large Reasoning Models (LRMs) that produce explicit reasoning traces~\citep{wei2022chain}. While offering a view of a model's logical flow, this explicit reasoning also exposes potential flaws, including logical errors, a lack of faithfulness to the model's true internal state~\citep{turpin2023language}, and safety vulnerabilities. The verification of these intermediate steps is a critical prerequisite for the safe and reliable deployment of LRMs in high-stakes domains such as medicine~\citep{singhal2023large}, law~\citep{chalkidis2021lexglue}, and finance~\citep{wang2023fingpt}. The urgency of this task is underscored by emerging regulatory frameworks like the EU AI Act~\citep{com2021laying} and the NIST AI RMF~\citep{ai2023artificial}, which mandate rigorous documentation and monitoring~\citep{oecd2019principles}. 
However, existing auditing methods are misaligned with this paradigm, as their centralized, opaque, and unscalable nature creates unacceptable risks. Specifically, they either rely on a single trusted entity, cannot process the volume and complexity of reasoning traces, or force a dangerous trade-off between public transparency and the protection of proprietary models.

The inadequacy of current auditing systems stems from four interconnected challenges. A primary issue is a lack of \textbf{Robustness} since systems relying on a single auditor, whether a human expert or another LLM, constitute a ``single point of failure'' and are vulnerable to targeted attacks like prompt injection~\citep{zou2023universal,perez2022ignore} and susceptible to systemic biases~\citep{bender2021dangers,liang2022holistic}. Compounding this issue is a severe \textbf{Scalability} bottleneck. The volume and combinatorial complexity of reasoning traces from modern LRMs, especially those employing branching search~\citep{lightman2023let,yao2023tree}, make comprehensive manual verification practically and economically infeasible, a fact evidenced by the massive human effort required for existing process supervision datasets~\citep{bai2022training,lightman2023let}. Furthermore, the \textbf{Opacity} of internal auditing processes at proprietary model providers erodes public trust and prevents independent verification of safety claims, conflicting with established principles of transparent reporting~\citep{bommasani2023foundation,mitchell2019model}. In parallel, addressing opacity creates a critical tension with \textbf{Privacy}, since exposing complete reasoning traces for public audit risks the theft of valuable intellectual property through model distillation~\citep{carlini2021extracting} and increases the surface area for extracting sensitive training data~\citep{nasr2023scalable}.

Addressing these simultaneous challenges of robustness, scalability, opacity, and privacy demands a new approach to the auditing paradigm. Our work is guided by the following research questions:
\begin{ResearchQ}
\textit{How can we design an auditing system that is robust to malicious participants and systemic bias without relying on a central trusted authority?}
\end{ResearchQ}
\begin{ResearchQ}
\textit{How can this system scale to audit complex reasoning traces while preserving the intellectual property of the model provider and ensuring public transparency?} 
\end{ResearchQ}
Answering these questions naturally leads to a framework that integrates decentralized consensus, privacy-preserving protocols, and a novel representation for reasoning itself.

\begin{figure}[htp!]
  \centering
  \includegraphics[width=0.6\textwidth]{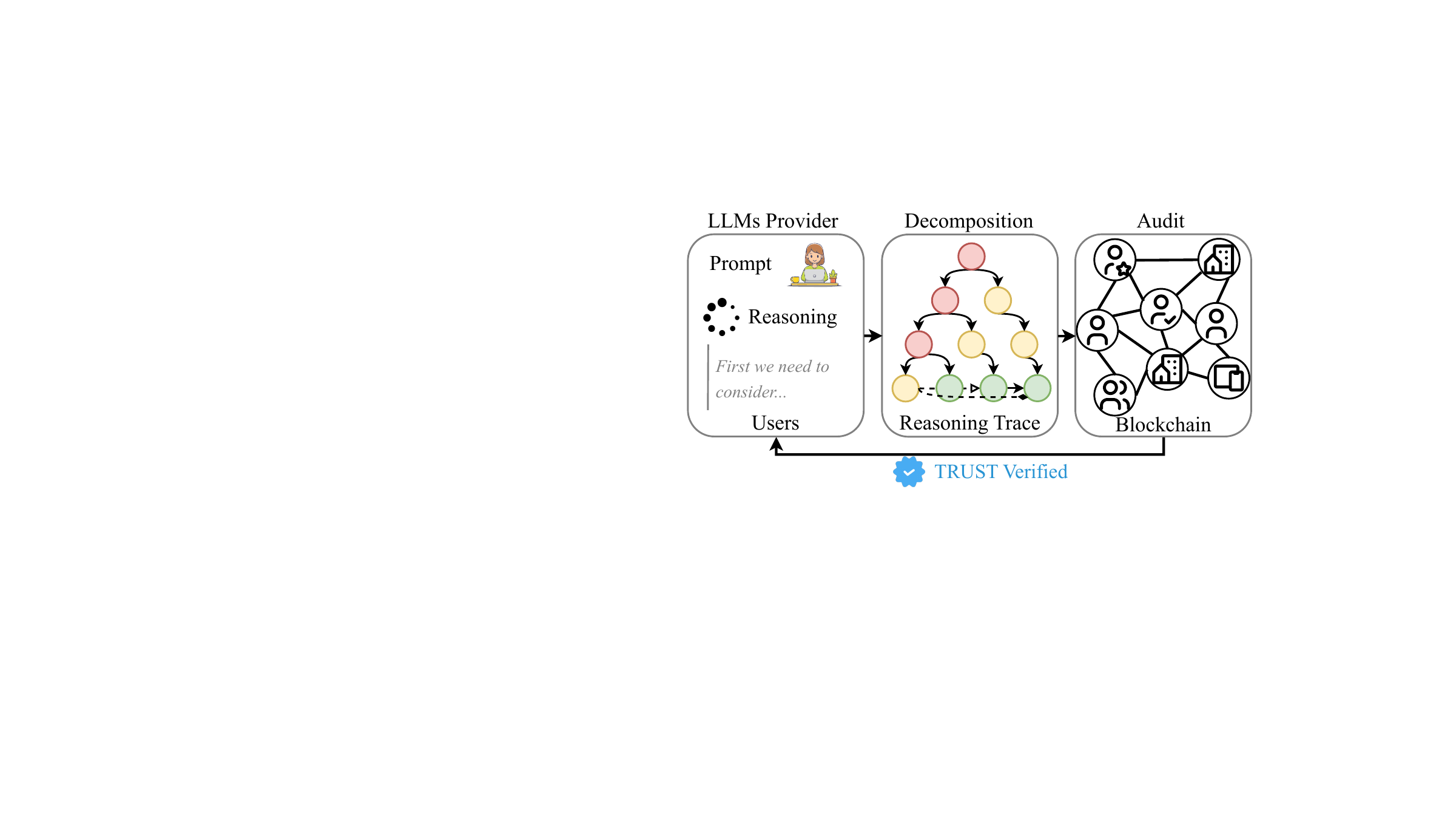}
  \caption{ 
  Reasoning traces (left) are decomposed into hierarchical segments (middle) and verified by a heterogeneous, multi-tier auditor network (right). 
  Audit outcomes are recorded on-chain, while raw trace is stored off-chain for privacy.}
  \label{fig:main_illustration}
\end{figure}
We introduce TRUST, a decentralized framework for auditing LLM reasoning (see \cref{fig:main_illustration}). 
To achieve \textbf{robustness}, TRUST establishes a consensus mechanism among a diverse, multi-tier set of auditors, drawing on principles from Byzantine Fault Tolerant systems~\citep{castro1999practical,lamport2019byzantine} to provably guarantee audit correctness even with a significant fraction of malicious participants. For \textbf{scalability}, the framework introduces a novel decomposition method that transforms reasoning traces into \textit{Hierarchical Directed Acyclic Graphs (HDAGs)}, a structured representation that permits parallel verification of atomic reasoning steps by a distributed network. To jointly address \textbf{opacity} and \textbf{privacy}, all verification decisions are recorded on a transparent blockchain ledger for public auditability, while the protocol preserves confidentiality by distributing only partial, disconnected trace segments to individual auditors, protecting proprietary logic from reconstruction.

The design of TRUST is supported by rigorous theoretical guarantees for security and economic viability. We prove a \textit{Safety-Profitability Guarantee} theorem, which formally ensures that under configurable statistical and economic conditions, the system achieves a target audit safety level while making honest participation profitable and malicious behavior result in a net loss. 
Our empirical validation spans multiple state-of-the-art LLMs (e.g., GPT-OSS~\citep{openai2025gptoss}, DeepSeek-r1~\citep{deepseekr1}, Qwen~\citep{yang2025qwen3}) and diverse reasoning tasks, and incorporates human-in-the-loop experiments with expert auditors to validate the multi-tier design. 
The results demonstrate that TRUST is highly effective at identifying reasoning flaws and is significantly more resilient to coordinated attacks than centralized baselines. 

In summary, our main contributions are:
\begin{itemize}[leftmargin=*]
    \item We introduce TRUST, the first decentralized auditing system for reasoning traces that achieve privacy-preserving verification without exposing the proprietary model.

    \item We develop a systematic approach to decompose Chain-of-Thought reasoning into Hierarchy Directed Acyclic Graphs (HDAGs) that enable modular verification coupled with a multi-tier verification, routing simple problems to automated validators and complex problems to human experts.

    \item We develop theoretically grounded incentive mechanisms which ensure that honest auditors profit while malicious actor incur losses, providing the foundation necessary for sustainable real-world deployment at scale.
    
    \item We conduct comprehensive experiments on diverse datasets (e.g., MMLU-Pro, GSM8K) and models (e.g., GPT-OSS, DeepSeek-r1), including human-in-the-loop studies, to demonstrate the effectiveness and robustness of TRUST against centralized baselines.
\end{itemize}

\section{Related Works}\label{sec:related}

\textbf{Reasoning Model Verification.} 
Chain-of-Thought (CoT) prompting has revolutionized LLM reasoning by exposing intermediate steps~\citep{wei2022chain}, evolving into sophisticated tree-based search methods~\citep{yao2023tree} and Large 
Reasoning Models that treat reasoning as a primary objective~\citep{jaech2024openai,guo2025deepseek}. However, these advances lack systematic verification mechanisms for generated reasoning traces, particularly for privacy-preserving and decentralized auditing~\citep{peng2025survey}.

\textbf{Auditing and Evaluation.} 
Current auditing approaches range from centralized ``LLM-as-a-judge'' methods~\citep{zheng2023judging} to Process Reward Models that provide step-by-step supervision~\citep{lightman2023let}. 
Recent work addresses service-level integrity through cryptographic verification~\citep{sun2025coin} and detection of model substitution~\citep{cai2025you}. While addressing inference integrity, these approaches lack unified frameworks for scalable semantic auditing with decentralized consensus.

\textbf{Decentralized Verification.} 
Foundational work in Byzantine Fault Tolerant consensus~\citep{castro1999practical} and Zero-Knowledge Proofs for ML~\citep{chen2024zkml,sun2024zkllm} provides primitives for verifiable computation. 
However, existing approaches focus on computational correctness rather than semantic quality verification through human-in-the-loop consensus processes.

Our work synthesizes these directions by introducing the first framework for decentralized, privacy-preserving semantic auditing of reasoning traces at scale.
Due to space limitation, we provide more comprehensive related works in \cref{sec:more_related}.

\section{Decentralized Auditing for Large Reasoning Models}\label{sec:method}

\begin{figure}[t!]
    \centering
    \includegraphics[width=\textwidth]{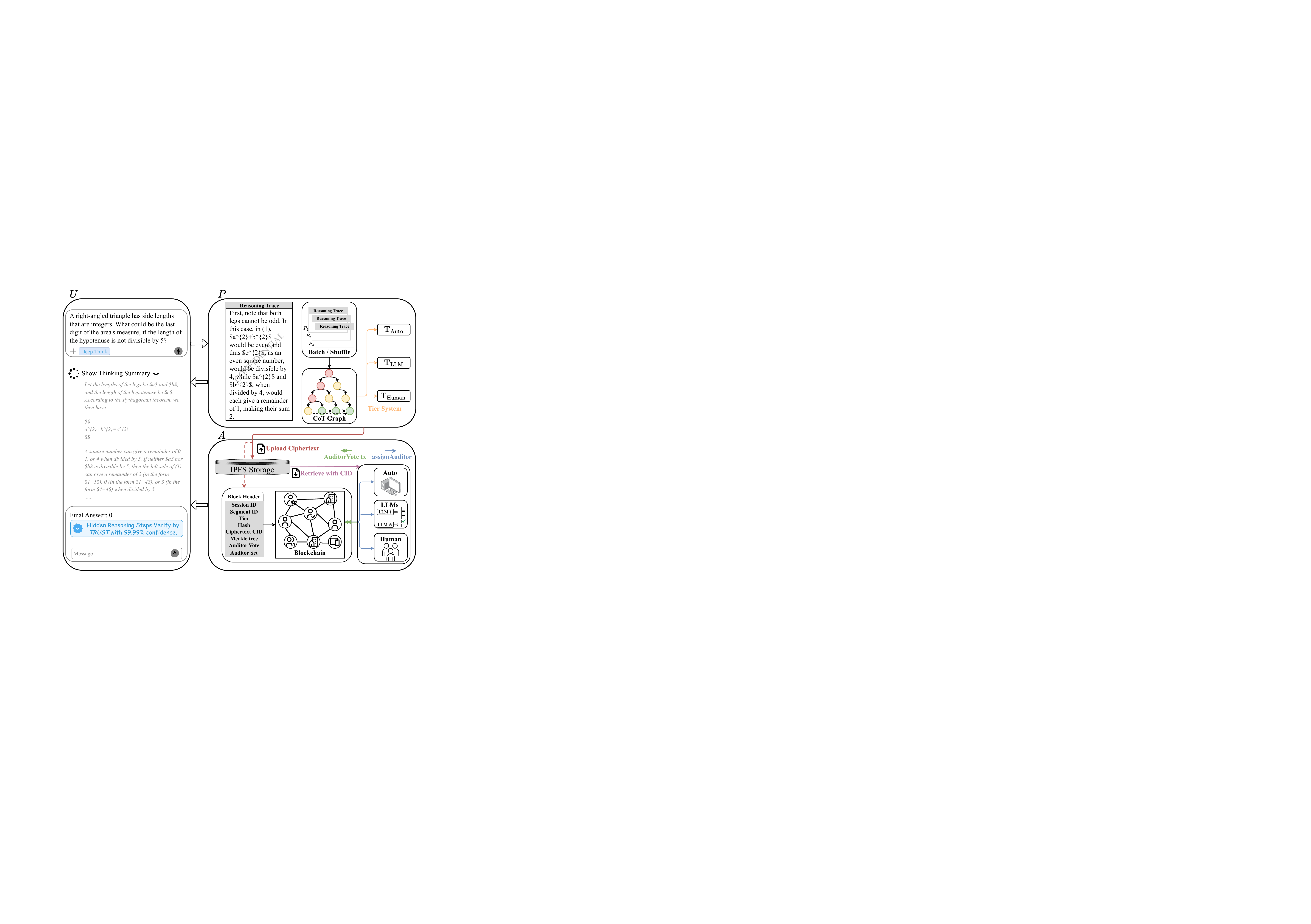}
    \caption{
    {Overview of TRUST.} 
    The TRUST framework decomposes Chain-of-Thought reasoning traces into hierarchical directed acyclic graphs (DAGs) for modular verification across a three-tier auditor system. 
    The process begins with a reasoning query (left panel) that generates intermediate reasoning steps, which are then decomposed into graph components and distributed across automated computers, LLM-based, and human auditors. 
    TRUST utilizes IPFS for decentralized storage of reasoning traces and blockchain technology for immutable audit records, vote aggregation, and consensus mechanisms. 
    Auditors verify reasoning segments independently, while cryptographic protocols ensure the privacy preservation of proprietary model internals. 
    The final verification result provides confidence guarantees for reasoning trace faithfulness and correctness for the end user.}
    \label{fig:main}
\end{figure}
As illustrated in \cref{fig:main_illustration}, \textsc{TRUST} can integrate either human or LLMs to audit \emph{faithfulness}, \emph{harmlessness}, and \emph{logical consistency} of chain-of-thought (CoT) reasoning. By operating on intermediate traces rather than final outputs alone, \textsc{TRUST} enables earlier and more comprehensive detection of reasoning flaws. \textsc{TRUST} features the following key innovations:

\begin{itemize}[leftmargin=*]
    \item \textbf{Batch \& Segmentation.} 
    Reasoning traces from multiple providers are \emph{batched} to anonymize source identity and mitigate provider-specific bias. 
    Traces are then \emph{segmented} into minimal, auditable units and stored as content-addressed objects in decentralized storage. 
    Segmentation protects proprietary logic: each auditor only sees the segment(s) they are assigned, preventing full-trace reconstruction.
    
    \item \textbf{Auditing \& Consensus.} 
      Heterogeneous auditors (computational checkers, LLMs, and humans) independently evaluate assigned segments. 
      Votes are submitted via a cryptographic \emph{commit--reveal} protocol: in the commit phase, auditors submit hashed votes; in the reveal phase, they disclose votes for verification against commitments. 
      Segment-level quorums validate local steps; a trace-level aggregator combines weighted segment outcomes to reach the final decision.

    \item \textbf{Blockchain \& Decentralized Storage.}
    A blockchain layer provides immutable audit trails and trustless consensus using a Proof-of-Stake (PoS)-style mechanism adapted for AI auditing. 
    Smart contracts orchestrate session lifecycle, auditor assignment (by stake and expertise), commit--reveal voting, and performance-based rewards/slashing. 
    Reasoning content is stored off-chain on IPFS; the blockchain records metadata, vote commitments, and final outcomes.
\end{itemize}

We formalize the three key participant parties in the TRUST ecosystem—\textbf{Provider}, \textbf{Auditor}, and \textbf{User}. 
TRUST supports both Business-to-Business (B2B) and Business-to-Consumer (B2C) settings, where the reasoning trace provider may be either a proprietary LLM vendor or an individual customer.
We provide the illustration of the parties involved in TRUST in \cref{fig:main}.
\begin{itemize}[leftmargin=*]
\item \textbf{Provider ($P$)}: A proprietary LRM vendor or customer submitting reasoning traces for audit. 
\item \textbf{Auditor ($A$)}: An anonymous seat (computer, LLM, or human) that verifies assigned segments.
\item \textbf{User ($U$)}: An end-user who consumes audited outputs and provenance via APIs or dashboards.
\end{itemize}

Given a reasoning trace with $S$ segments (including CoT and tool calls), \textsc{TRUST} maps the trace to a \emph{Hierarchical Directed Acyclic Graph (HDAG)} with five abstraction levels: \emph{Goal}, \emph{Strategy}, \emph{Tactic}, \emph{Step}, and \emph{Operation}.
This representation is \emph{problem-agnostic} (math, science, programming, general reasoning, etc.) and enables scalable, parallel verification because most nodes are independently auditable. 
Each node carries metadata (ID, summary, complexity, auditor type, and dependencies), and edges encode relationships (\emph{decomposes\_to}, \emph{depends\_on}, \emph{enables}, \emph{validates}, \emph{contradicts}, etc.). 
Formally, each segment $s\in\{1,\dots,S\}$ is assigned to a primary auditor type $\in\{\mathrm{\textbf{C}omputer},\mathrm{\textbf{L}LM},\mathrm{\textbf{H}uman}\}$:
\begin{align*}
  \underbrace{\text{Segment }1}_{\mathrm{\textbf{C}omputer}},\quad
  \underbrace{\text{Segment }2}_{\mathrm{\textbf{L}LM}},\quad
  \underbrace{\text{Segment }3}_{\mathrm{\textbf{H}uman}},\;\dots,\;
  \underbrace{\text{Segment }S}_{\mathrm{type}\in \{\mathrm{\textbf{C}, \textbf{L}, \textbf{H}}\}}.
\end{align*}

\subsection{Hierarchical Directed Acyclic Graphs (HDAGs)}
Prior work on CoT decomposition, such as DLCoT, introduced automatic frameworks for breaking down long reasoning traces into structured segments, primarily to generate high-quality data for model distillation~\citep{luo2025decon}. 
These works observe that CoTs can follow linear, tree, or more general network structures. 
DLCoT, for instance, applies macro-structure parsing to divide CoTs into four parts—\emph{Problem Restatement}, \emph{Approach Exploration}, \emph{Verification}, and \emph{Summary}—before further segmenting the approach and verification stages into stepwise units. 
Other lines of research~\citep{kothapalli2025cot} focus on extracting causal structures from token-level processing functions.

In contrast, we propose a general, problem-agnostic approach: decomposing CoTs into \emph{Hierarchical Directed Acyclic Graphs} (HDAGs). 
Our hierarchy consists of five abstraction levels: \emph{Goal}, \emph{Strategy}, \emph{Tactic}, \emph{Step}, and \emph{Operation}. 
This abstraction provides two key advantages. 
First, it is broadly applicable across domains—mathematics, science, engineering, and open-domain reasoning. 
Second, it enables scalable verification, since most nodes are independently auditable and can be naturally mapped to different auditor types (e.g., computer programs, LLMs, or human experts). 
An illustration is provided in \cref{fig:hdag}, where each node is annotated by difficulty and type (basic reasoning step, tool usage, or fact/premise). 
This hierarchical decomposition mirrors neural circuits in the frontal cortex, which process reasoning through multi-level evidence integration~\citep{morteza2019hierarchical}. 
Just as the brain organizes reasoning hierarchically rather than linearly, our HDAG design enables different reasoning components to be audited at the appropriate granularity. 
Edges capture logical relationships between nodes, including \emph{dependencies} (depends on, enables), \emph{structural links} (decomposes, refines), \emph{validation} (validates, exemplifies), and \emph{conflicts} (contradicts).

\begin{figure}[htp]
    \centering
    \includegraphics[width=0.35\textwidth]{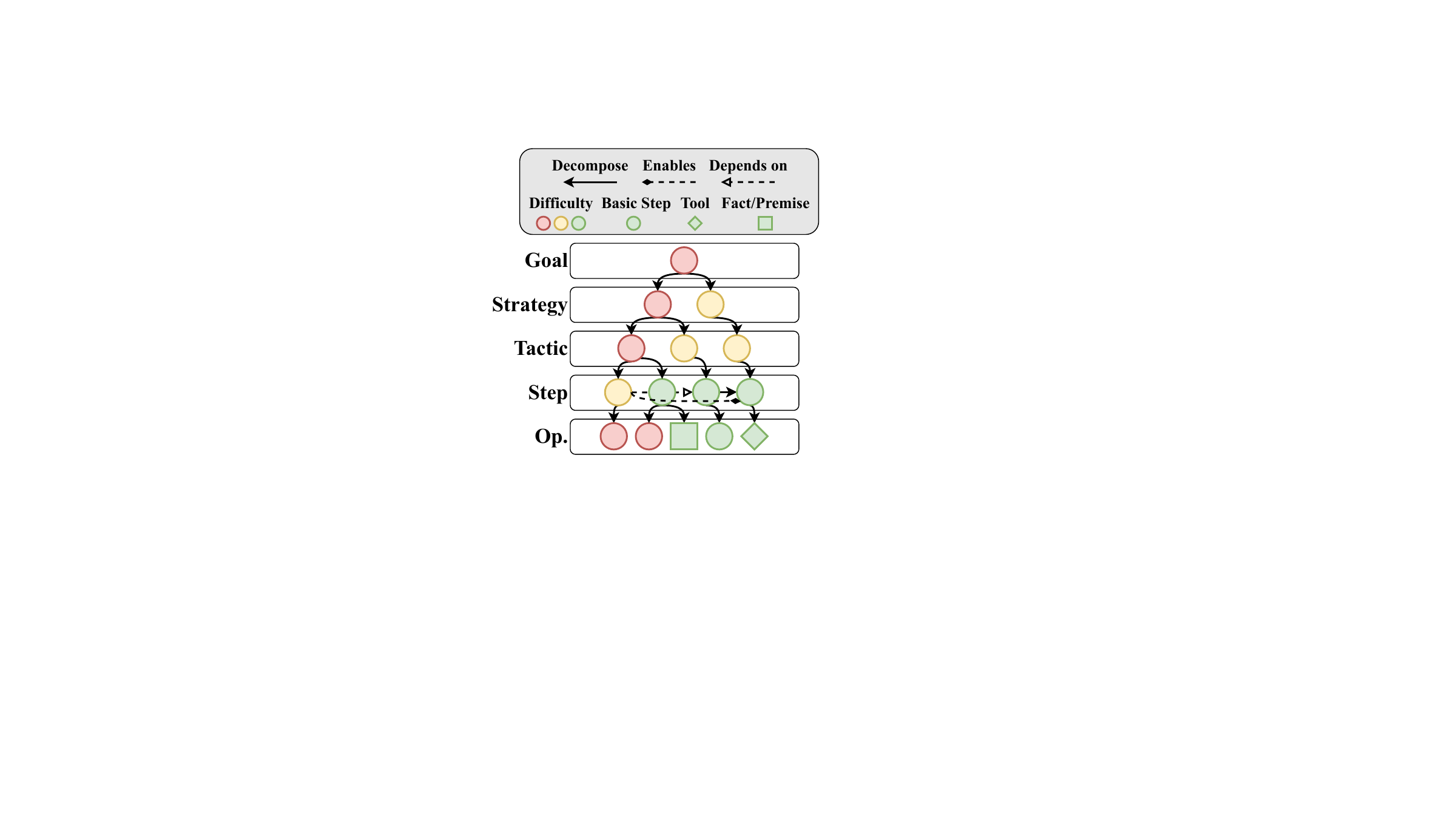}
    \caption{
    {Example of an HDAG.}
    Node color indicates the level of difficulty assigned to different types of auditors. 
    Node shape denotes the type of step and
    edges represent relationships between nodes.
    }
    \label{fig:hdag}
\end{figure}
Concretely, TRUST constructs HDAGs in five steps:
\begin{enumerate}[leftmargin=*, label=\textbf{Step \arabic*:}]
    \item \textbf{Identify Abstraction Levels.}  
    The raw problem statement, reasoning trace (with tool usage), and final output are parsed into semantic hierarchy levels.

    \item \textbf{Segment Within Each Level.}  
    Each level is further divided into granular units with associated metadata (IDs, complexity, summaries). 
    Difficulty annotations guide later auditor assignment.

    \item \textbf{Extract Relationships.}  
    Logical dependencies between segments are mapped into relations (\emph{decomposes\_to}, \emph{depends\_on}, \emph{enables}, \emph{validates}, etc.).

    \item \textbf{Assign Auditor Types.}  
    Segments are routed to auditor types from $\{\mathrm{Human}, \mathrm{Computer}, \mathrm{LLM}\}$, based on complexity and modality.

    \item \textbf{Refine and Construct HDAG.}  
    Segments and relationships are synthesized into a final auditable HDAG, with quality assurance checks.
\end{enumerate}

Further implementation details and examples are provided in \cref{app:implementation,sec:hdag_ex1,sec:hdag_ex2}.

\subsection{Auditing \& Consensus}

Reasoning traces vary in both complexity and type of reasoning step. 
In TRUST, each segment is routed to the most appropriate auditor type—\emph{Human}, \emph{LLM}, or \emph{Computer}—to ensure accurate and efficient verification. 
Decentralized participants either contribute computational resources for deterministic checks (e.g., arithmetic, tool calls) or provide expertise in evaluating semantic coherence and faithfulness. 
This heterogeneous auditor pool improves both accuracy (by matching segments to suitable verifiers) and robustness (by reducing vulnerability to malicious or biased auditors). 
We analyze the consensus process at three levels: \emph{seat}, \emph{segment}, and \emph{trace}.

\begin{enumerate}[leftmargin=*,itemsep=0.3em]

\item \textbf{Seat layer.}  
Within a segment $s$, each of the $k_{t(s)}$ auditor seats votes independently. 
Computer seats are assumed noiseless, while LLM and human seats have nonzero error rates $\epsilon_t$. 
Human seats may additionally be adversarial with probability $\rho_{\mathrm H}$.

\item \textbf{Segment layer.}  
For the segment $s$, define the segment pass indicator
$ B_s=\mathbf{1}\!\left[\#\{\text{correct votes}\}\ge q_{t(s)}\right], $
where $q_{t}=\lceil\tau\,k_t\rceil$ is the quorum threshold for type $t$. 
The exact pass probability for a segment of type $t$ with parameters $(k_t,\epsilon_t,\rho_t)$ (with $\rho_{\mathrm C}=\rho_{\mathrm L}=0$):
\begin{align}
  p_t
  = \Pr[B_s=1]
  = \sum_{m=0}^{k_t}\binom{k_t}{m}\rho_t^m(1-\rho_t)^{k_t-m}
    \sum_{c=q_t}^{k_t-m}
      \binom{k_t-m}{c}
      (1-\epsilon_t)^c\,\epsilon_t^{\,k_t-m-c},
\end{align}
where $m$ malicious seats vote incorrectly, among the $k_t-m$ honest seats, $c$ cast correct votes, $\epsilon_t$ is the error rate for type $t$, and $\rho_{\mathrm{H}}$ is the human adversarial probability

\item \textbf{Trace layer.}  
To aggregate across all $S$ segments, we assign weights $w_{t(s)}$ and define
$
W=\sum_{s=1}^{S} w_{t(s)}\,B_s, 
\quad W_\beta=\beta\sum_{s}w_{t(s)}
$
, where $w_t$ is the weight for segment of type $t$, and $W_\beta$ is the trace-level quorum threhold ($\beta \in (0, 1)$).
We then bound the failure probability $\Pr[W<W_\beta]$ using Hoeffding and Chernoff inequalities:
\begin{align}\label{eq:hoeffding-chernoff}
\Pr\!\bigl[ W < W_\beta \bigr]
\le
\underbrace{%
  \exp\!\Bigl[
    -2\,(\mu_{\text{vote}} - W_\beta)^{2}\bigl/\sigma_{\max}^{2}
  \Bigr]
}_{\text{Hoeffding}}
\land
\underbrace{%
  \min_{\lambda>0}
    \exp\!\Bigl(
      \lambda W_\beta
      +\sum_{s=1}^{S}
         \ln\!\bigl(
           p_s\,e^{-\lambda w_s} + (1-p_s)
         \bigr)
    \Bigr)
}_{\text{Chernoff}} .
\end{align}
Figure~\ref{fig:fail_bound_theory_exact} compares these bounds with the exact solution under representative parameters ($\epsilon_{\mathrm{C}}=0$, $\epsilon_{\mathrm{L}}=0.05$, $\epsilon_{\mathrm{H}}=0.30$, $\rho_{\mathrm{H}}=0.1$).
The full derivation on seat, segment, and trace levels results are provided in \cref{sec:appex_theory} and notation summarize in \cref{tab:notation}.

\begin{figure}[htp!]
    \centering
    \begin{subfigure}[b]{0.4\textwidth}
        \includegraphics[width=\textwidth]{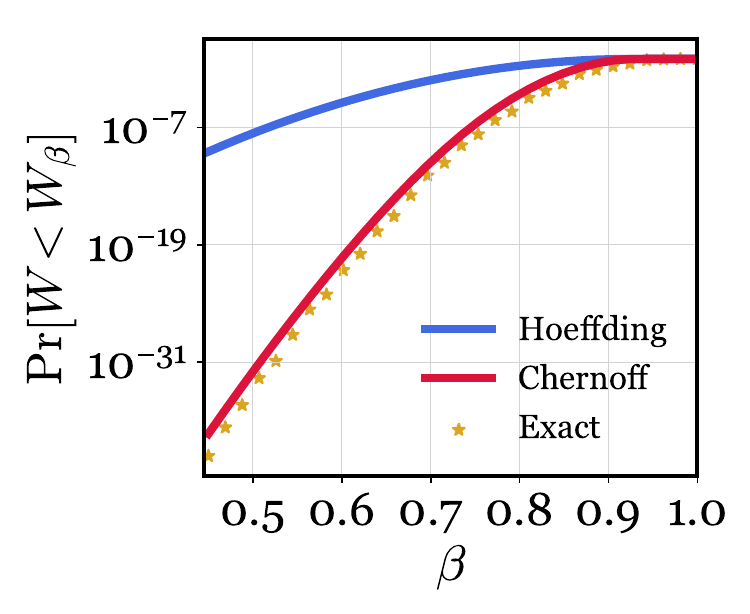}
        \label{fig:failure_vs_beta}
    \end{subfigure}
    \begin{subfigure}[b]{0.4\textwidth}
        \includegraphics[width=\textwidth]{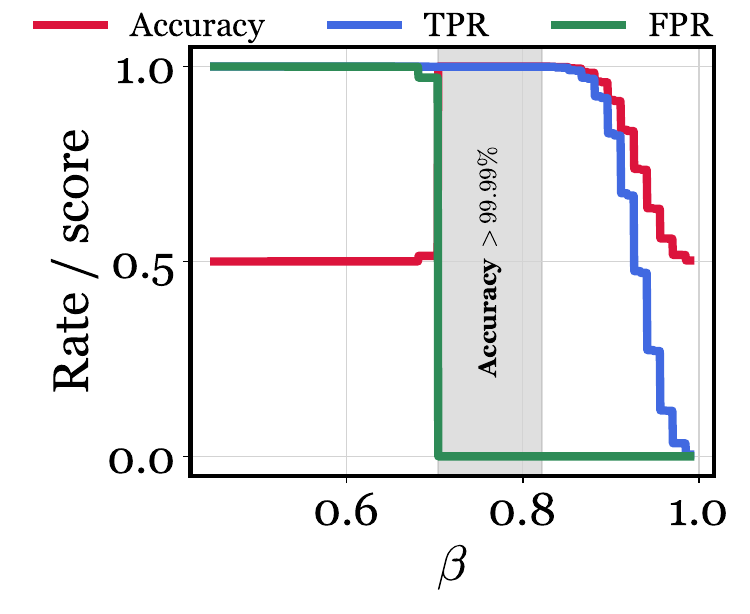}
        \label{fig:beta_vs_metrics}
    \end{subfigure}
    \caption{The parameters are $\epsilon_{\mathrm{C}}=0$, $\epsilon_{\mathrm{L}}=0.05$, $\epsilon_{\mathrm{H}}=0.30$, and $\rho_{\mathrm{H}}=0.1$. 
    (Left) Comparison of probability of failure of Hoeffding and Chernoff bounds and exact solution in \eqref{eq:hoeffding-chernoff} as a function of trace-level quorum threshold $\beta$.
    (Right) The true positive rate (TPR), false positive rate (FPR), and accuracy with different values of trace-level quorum threshold.
    The grey shaded area indicates the width of the trace-level quorum that achieves greater accuracy than $99.99\%$.
    }
    \label{fig:fail_bound_theory_exact}
\end{figure}
    
\end{enumerate}

\subsection{Economics Analysis}\label{sec:economics}

In this section, we provide the economic analysis of the TRUST framework on reputation, slashing, reward, statistical, and economic guarantees.
\paragraph{Reputation-Weighted Slashing and Rewards.}
Each human auditor seat~$i$ maintains a reputation score $r_i(t)\in[0,1]$, updated after every segment as
\(
r_i(t+1) = (1-\gamma)\,r_i(t) + \gamma\,\mathbf{1}[\text{vote correct}],
\)
where $\gamma\in(0,1]$ controls adaptation speed.  
Incorrect votes trigger a slashing probability
\(
p_{\mathrm{slash}}(r) = p_{\min}+(p_{\max}-p_{\min})(1-r),
\)
with $0<p_{\min}<p_{\max}\le1$, penalizing low-reputation seats more heavily.  
The per-segment payoff $X_i\in\{-P,\,0,\,R\}$ is defined as:  
$R$ for a correct vote, $0$ for an incorrect vote without slashing, and $-P$ for a slashed incorrect vote.  
For an honest seat with error rate $\epsilon_{\mathrm H}$, the expected payoff is
\(
\mu_{\mathrm H}(r) := \mathbb E[X_i] = (1-\epsilon_{\mathrm H})R - \epsilon_{\mathrm H}\,P\,p_{\mathrm{slash}}(r).
\)

Take parameters $R=6$, $P=8$, $p_{\min}=0.2$, $p_{\max}=0.5$, $\delta=0.2$, $\lambda=60$, and $\epsilon_{\mathrm{H}}=0.30$ for example, an honest seat achieves an expected per-segment payoff of $\mu_{\min}=0.7\times 6-0.3\times 8\times 0.5=3.0$, with variance $\sigma_{\mathrm{H}}^2=25.8$ and worst-case increment $b=6$. 
A malicious seat, by contrast, suffers an expected loss of $\mathbb{E}[X_{\mathrm{mal}}]=-0.5\times 8=-4.0$, with variance $16$ and worst-case increment $b=8$. 
Over a 24-hour window ($T=24$) with $1440$ segments, tail bounds from \cref{thm:safty_profit} show that the probability of an honest auditor ending with nonpositive payoff is at most $\exp\!\left(-\big(60\times 24\times 3^{2}\big)\,/\,\big(2\times 25.8+(2/3)\times 6\times 3\big)\right)\approx e^{-204}<10^{-80}$, while the probability of a malicious auditor breaking even or better is at most $\exp\!\left(-\big(60\times 24\times (0.2\cdot 8)^2\big)\,/\,\big(2\times 25.8+(2/3)\times 6\times 1.6\big)\right)\approx e^{-63.6}<10^{-27}$.

\section{Experimental Verification}\label{sec:exp}
In this section, we provide verification on TRUST on the annotated CoT dataset, open-source model-generated CoTs for de-bias, and safety and privacy results.

\subsection{Correctness and Faithfulness} 
We use 200 samples from the MMLU-Pro-CoT-Train dataset~\citep{mmlu_pro_cot_train}, which provides ground truth annotations for individual reasoning steps and final answers. 
This allows us to systematically evaluate the correctness and faithfulness of audits at both the step and trace levels. 
\begin{figure}[htpb!]
    \centering
    \includegraphics[width=0.40\textwidth]{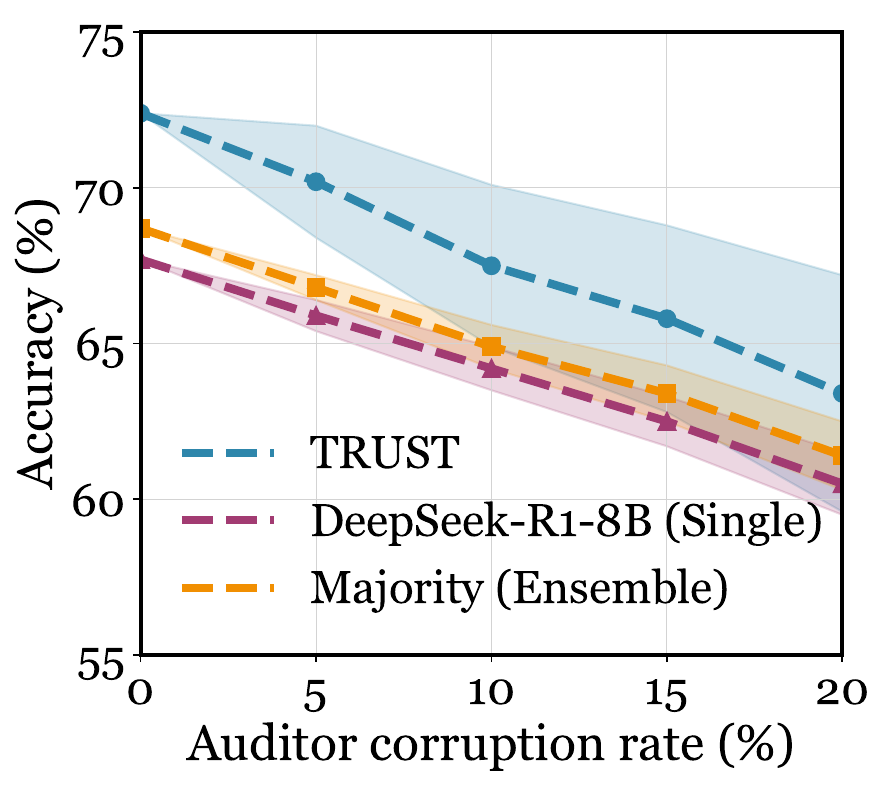}
    \caption{Correctness of Single, Ensemble (Centralized) with decentralized TRUST framework.}
    \label{fig:exp1}
\end{figure}
We compare TRUST against centralized approaches, including (i) single-LLM auditors (DeepSeek-R1-8B, Qwen2.5-7B, Mistral-7B, GPT-OSS-20B, LLaMA-3B) and (ii) ensemble-based voting schemes (majority, supermajority, weighted, unanimous). 
To stress-test robustness, we simulate auditor corruption by systematically flipping a proportion of segment-level votes, with corruption rates ranging from $5\%$ to $20\%$. 

Figure~\ref{fig:exp1} and Table~\ref{tab:trust_performance} summarize the results. 
At baseline (no corruption), TRUST achieves the highest accuracy ($72.4\%$), outperforming both single auditors (e.g., DeepSeek-R1-8B at $67.7\%$) and ensemble methods (e.g., majority voting at $68.7\%$). 
As corruption increases, all methods degrade, but TRUST degrades more gracefully: accuracy remains above $63\%$ even at $20\%$ corruption, while centralized ensembles drop below $61\%$ and single auditors fall closer to $60\%$. 
The performance gap widens with higher corruption rates, highlighting TRUST’s resilience to adversarial or biased auditors. 
\begin{table*}[htp!]
\centering
\caption{Performance comparison of TRUST (decentralized) vs.\ centralized approaches across corruption rates. Best is \textbf{bold}, second-best is \underline{underlined}.}
\label{tab:trust_performance}
\begingroup
\small
\setlength{\tabcolsep}{2pt}
\renewcommand{\arraystretch}{1.2}

\begin{tabularx}{\linewidth}{@{} l X c c c c c @{}}
\toprule
\textbf{Category} & \textbf{Method} & \textbf{Baseline} & \textbf{5\% Corr.} & \textbf{10\% Corr.} & \textbf{15\% Corr.} & \textbf{20\% Corr.} \\
\midrule
\rowcolor{UNCblue!10}\textbf{Decentralized} & TRUST
  & $\mathbf{72.4_{\pm 0.0}}$
  & $\mathbf{70.2_{\pm 1.8}}$
  & $\mathbf{67.5_{\pm 2.6}}$
  & $\mathbf{65.8_{\pm 3.0}}$
  & $\mathbf{63.4_{\pm 3.8}}$ \\
\midrule
\multirow[c]{12}{*}{\textbf{Centralized}} 
  & \multicolumn{6}{c}{\centering\textit{Ensemble Models}} \\
\cmidrule(lr){2-7}
& Majority Voting
  & $\underline{68.7}_{\pm 0.0}$
  & $\underline{66.8}_{\pm 0.4}$
  & $64.9_{\pm 0.7}$
  & $63.4_{\pm 0.9}$
  & $61.4_{\pm 1.1}$ \\
& Supermajority
  & $\underline{68.7}_{\pm 0.0}$
  & $66.8_{\pm 0.5}$
  & $\underline{65.0}_{\pm 0.7}$
  & $\underline{63.2}_{\pm 0.9}$
  & $\underline{61.2}_{\pm 0.9}$ \\
& Weighted Voting
  & $68.1_{\pm 0.0}$ & $66.4_{\pm 0.6}$ & $64.5_{\pm 0.7}$ & $62.7_{\pm 1.1}$ & $60.9_{\pm 0.9}$ \\
& Unanimous
  & $45.6_{\pm 0.0}$ & $46.1_{\pm 0.6}$ & $46.5_{\pm 0.9}$ & $46.8_{\pm 1.1}$ & $47.4_{\pm 1.0}$ \\
\cmidrule(lr){2-7}
& \multicolumn{6}{c}{\centering\textit{Single Models}} \\
\cmidrule(lr){2-7}
& DeepSeek-R1-8B
  & $67.7_{\pm 0.0}$ & $65.9_{\pm 0.5}$ & $64.2_{\pm 0.7}$ & $62.5_{\pm 0.8}$ & $60.5_{\pm 1.0}$ \\
& Qwen2.5-7B
  & $67.4_{\pm 0.0}$ & $65.7_{\pm 0.6}$ & $64.1_{\pm 0.7}$ & $62.1_{\pm 0.9}$ & $60.5_{\pm 1.0}$ \\
& Mistral-7B
  & $66.8_{\pm 0.0}$ & $65.2_{\pm 0.6}$ & $63.6_{\pm 0.8}$ & $61.8_{\pm 1.1}$ & $60.1_{\pm 1.1}$ \\
& DeepSeek-R1-1.5B
  & $64.1_{\pm 0.0}$ & $62.9_{\pm 0.5}$ & $61.2_{\pm 0.8}$ & $59.7_{\pm 1.1}$ & $58.5_{\pm 0.8}$ \\
& GPT-OSS-20B
  & $63.8_{\pm 0.0}$ & $62.5_{\pm 0.6}$ & $60.9_{\pm 0.7}$ & $59.7_{\pm 1.1}$ & $58.4_{\pm 1.0}$ \\
& LLaMA-3B
  & $52.1_{\pm 0.0}$ & $51.9_{\pm 0.6}$ & $51.7_{\pm 0.7}$ & $51.4_{\pm 1.0}$ & $51.3_{\pm 1.1}$ \\
\bottomrule
\end{tabularx}
\endgroup
\end{table*}

\subsection{Safety and Profitability} 
A central design goal of TRUST is to guarantee both \emph{statistical safety}—ensuring that the probability of a failed audit remains vanishingly small—and \emph{economic sustainability}—ensuring that honest auditors are consistently rewarded while malicious ones suffer provable losses. 

\cref{fig:reputation_profit} illustrates these dynamics empirically. 
On the left of Figure~\ref{fig:reputation_profit}, reputation scores naturally separate: honest auditors are reinforced with high reputation, while malicious and random guessers quickly lose credibility. 
On the right, profit trajectories diverge: honest participants earn steadily increasing rewards, while guessers and malicious seats accumulate losses due to repeated slashing. 
These empirical trends are formally supported by the \emph{Safety–Profitability Guarantee} (Theorem~\ref{thm:safty_profit_main}), which proves that, under appropriate statistical and economic parameters, honest auditors almost surely remain profitable while malicious participants incur provable long-term losses. 
The detailed derivation of these guarantees is provided in \cref{sec:appex_theory}.

\begin{figure}[htp!]
    \centering
    \begin{subfigure}[b]{0.4\textwidth}
        \includegraphics[width=\textwidth]{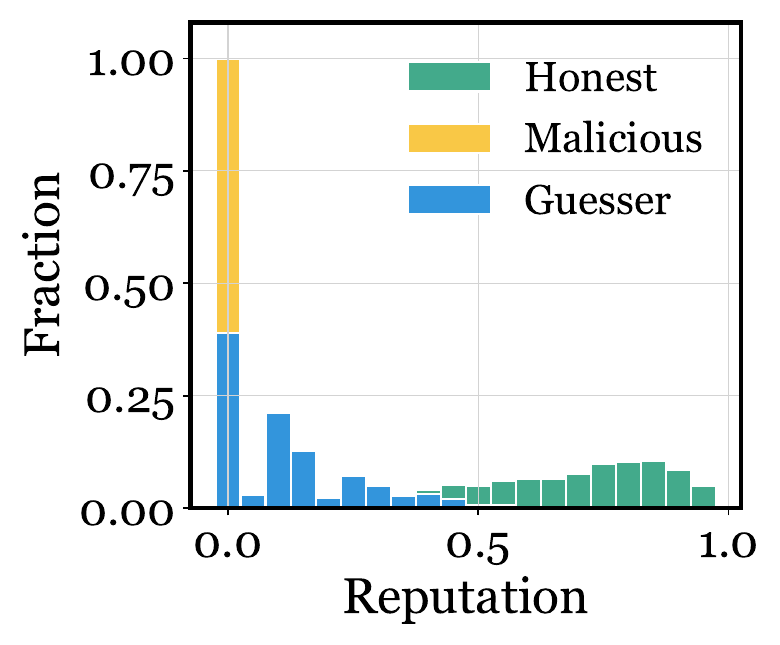}
    \end{subfigure}
    \begin{subfigure}[b]{0.4\textwidth}
        \includegraphics[width=\textwidth]{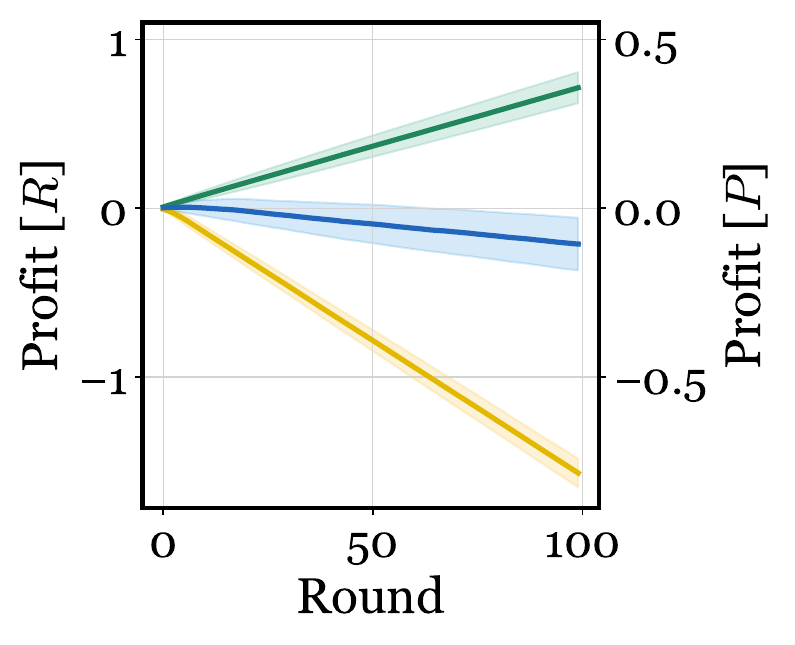}
    \end{subfigure}
    \caption{The parameters $\epsilon_{\mathrm{C}}=0$, $\epsilon_{\mathrm{L}}=0.05$, $\epsilon_{\mathrm{H}}=0.30$, and $\rho_{\mathrm{H}}=0.1$. 
    (Left) Repupation scores.
    (Right) Profit curves.
    }
    \label{fig:reputation_profit}
\end{figure}

\begin{theorem}[Safety–Profitability Guarantee]
\label{thm:safty_profit_main}
Fix a horizon \(T>0\), a target trace-failure probability
$\epsilon_{\mathrm{target}}\in(0,1)$ and a design constant
$\delta\in(0,1)$.
We have the following two dials to control the safety-profitability.

\begin{itemize}[leftmargin=*, itemsep=0.25em]
\item \textbf{Statistical dial.}  
Let \((k_t,q_t,w_t,\beta)\) be the vote parameters.
Write \(\mu_{\text{vote}}:=\mathbb E[W]\) and
\(\sigma_{\text{vote}}^{2}:=\sup\!\operatorname{Var}(W)\) for
\emph{one trace}.  Require
\[
  \mu_{\text{vote}}-W_\beta
  \;\ge\;
  \sqrt{\tfrac12\,
        \sigma_{\text{vote}}^{2}\,
        \ln\!\frac{\lambda T}{\epsilon_{\mathrm{target}}}}.
\]

\item \textbf{Economic dial.}  
Choose \((R,P,p_{\min},p_{\max})\) such that
\begin{align}
R
>\frac{\epsilon_{\mathrm H}}{1-\epsilon_{\mathrm H}}\,P\,p_{\max},
\quad
p_{\min}
\ge\frac{\delta}{1-\alpha},
\quad
\alpha
:=\frac{P p_{\max}}{R+P p_{\max}}.
\end{align}
\end{itemize}

With the expected minimum earn per round
\begin{align}
  \displaystyle
  \mu_{\min}:=(1-\epsilon_{\mathrm H})R-\epsilon_{\mathrm H}P p_{\max}>0,
\end{align}
the following hold:

\begin{enumerate}[label=(\alph*), leftmargin=*, itemsep=0.25em]
\item \textbf{Statistical safety.}
\begin{align}
  \Pr[\text{trace fails in }[0,T]]\le\epsilon_{\mathrm{target}}.
\end{align} 
\item \textbf{Honest profitability.}
      \begin{align}
        \Pr[U_{\mathrm{hon}}(T)\le0]
        \;\le\;
        \exp\!\Bigl[
          -\frac{\lambda T\,\mu_{\min}^{2}}
                 {2\sigma_{\mathrm{H}}^{2}+\tfrac23 b\mu_{\min}}
        \Bigr].
      \end{align}

\item \textbf{Malicious loss.}
      \begin{align}
        \Pr[U_{\mathrm{mal}}(T)\ge0]
        \;\le\;
        \exp\!\Bigl[
          -\frac{\lambda T\,(\delta P)^{2}}
                 {2\sigma_{\mathrm{H}}^{2}+\tfrac23 b\delta P}
        \Bigr],
        \quad
        \mathbb E[U_{\mathrm{mal}}(T)]
        \;\le\;
        -\lambda T\,\delta P.
      \end{align}
\end{enumerate}
\end{theorem}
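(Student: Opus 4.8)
The plan is to prove the three assertions independently, since each follows from a tail bound applied to a cumulative sum of bounded, independent-per-round random variables.

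\textbf{Part (a): statistical safety.} First I would model the trace-failure events over the horizon. Since TRUST processes traces at some rate, there are at most $\lambda T$ traces in $[0,T]$ (I would identify $\lambda$ here with the segment/trace arrival rate used in the economic parameterization). For a single trace, the failure event is $\{W < W_\beta\}$, and by the Hoeffding bound already established in \eqref{eq:hoeffding-chernoff}, $\Pr[W < W_\beta] \le \exp[-2(\mu_{\text{vote}} - W_\beta)^2/\sigma_{\max}^2]$. The statistical dial hypothesis $\mu_{\text{vote}} - W_\beta \ge \sqrt{\tfrac12 \sigma_{\text{vote}}^2 \ln(\lambda T/\epsilon_{\mathrm{target}})}$ is exactly what is needed to force this per-trace probability below $\epsilon_{\mathrm{target}}/(\lambda T)$ (matching $\sigma_{\max}^2$ with $\sigma_{\text{vote}}^2$, the sup-variance of $W$). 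A union bound over the $\le \lambda T$ traces then yields $\Pr[\text{any trace fails in }[0,T]] \le \epsilon_{\mathrm{target}}$. The only subtlety is bookkeeping the relation between the Hoeffding exponent's $\sigma_{\max}^2$ and the stated $\sigma_{\text{vote}}^2$; I would state the Hoeffding range parameter cleanly as $\sigma_{\text{vote}}^2 = \sum_s w_s^2$ (or its sup over adversarial configurations) so the square root inverts correctly.

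\textbf{Part (b): honest profitability.} The honest auditor's cumulative payoff is $U_{\mathrm{hon}}(T) = \sum_{j=1}^{N} X_i^{(j)}$ where $N$ (the number of segments in $[0,T]$) is $\lambda T$ and the $X_i^{(j)} \in \{-P, 0, R\}$ are i.i.d.\ with mean $\mu_{\mathrm H}(r) \ge \mu_{\min} = (1-\epsilon_{\mathrm H})R - \epsilon_{\mathrm H} P p_{\max} > 0$ (using $p_{\mathrm{slash}}(r) \le p_{\max}$ and the economic-dial inequality $R > \frac{\epsilon_{\mathrm H}}{1-\epsilon_{\mathrm H}} P p_{\max}$ for positivity). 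Each increment is bounded: $|X_i^{(j)} - \mu| \le b$ with $b = \max(R, P)$ — actually $b = R$ suffices as the "worst-case positive increment" in Bernstein's one-sided form — and has variance $\sigma_{\mathrm H}^2$. Applying Bernstein's inequality to $-U_{\mathrm{hon}}(T) + N\mu_{\min}$ gives $\Pr[U_{\mathrm{hon}}(T) \le 0] = \Pr[\sum (\mu - X) \ge N\mu] \le \exp[-N\mu_{\min}^2/(2\sigma_{\mathrm H}^2 + \tfrac23 b \mu_{\min})]$, which with $N = \lambda T$ is exactly the claimed bound.

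\textbf{Part (c): malicious loss.} A malicious seat votes incorrectly (by definition of the adversarial strategy), so every round it is subject to slashing with probability $\ge p_{\min}$; its per-round payoff has mean $\le -p_{\min} P \le -\delta P$ where the last step uses the economic-dial condition $p_{\min} \ge \delta/(1-\alpha) \ge \delta$. Hence $\mathbb E[U_{\mathrm{mal}}(T)] \le -\lambda T \delta P$ directly by linearity. For the tail bound, I would again invoke Bernstein on $U_{\mathrm{mal}}(T) + \lambda T \delta P$ (equivalently on the deviation above the negative mean): $\Pr[U_{\mathrm{mal}}(T) \ge 0] \le \Pr[U_{\mathrm{mal}}(T) - \mathbb E[U_{\mathrm{mal}}(T)] \ge \lambda T \delta P] \le \exp[-\lambda T (\delta P)^2/(2\sigma_{\mathrm H}^2 + \tfrac23 b \delta P)]$, using the malicious-seat variance (which the theorem conflates with $\sigma_{\mathrm H}^2$, or one bounds it by the same $\sigma$) and worst-case increment $b$. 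The main obstacle throughout is not any single inequality but getting the constants and the identification of $\lambda T$ with the round count to line up precisely with the stated exponents — in particular making sure the Bernstein form used (with the $\tfrac23 b$ factor in the denominator) matches the paper's chosen version, and handling the reputation-dependence of $p_{\mathrm{slash}}$ by the uniform bounds $p_{\min} \le p_{\mathrm{slash}}(r) \le p_{\max}$ so that the $X_i$ can be treated as having a fixed worst-case mean rather than a time-varying one.
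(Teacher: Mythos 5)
Your approach is conceptually the same as the paper's: Hoeffding for part~(a), then a Bernstein-type tail bound for parts~(b) and~(c), with the economic-dial conditions supplying the worst-case per-round mean $\mu_{\min}$ (honest) and $-\delta P$ (malicious) via the uniform bounds $p_{\min}\le p_{\mathrm{slash}}(r)\le p_{\max}$. The one genuine divergence is how the number of rounds/traces in $[0,T]$ is handled. You treat it as the deterministic quantity $N=\lambda T$ and apply standard (fixed-$n$) Bernstein. The paper instead models it as a Poisson random variable $N_T\sim\mathrm{Poisson}(\lambda T)$, conditions on $N_T=n$, Chernoff-bounds the conditional tail using an MGF lemma (its Lemma~C.1, a Bernstein-type MGF inequality), averages over $N_T$ via the Poisson generating function $\mathbb E[z^{N_T}]=\exp(\lambda T(z-1))$, and only then optimizes the Chernoff parameter $\theta$ by hand — a route that collapses, after a Taylor expansion, to the same exponent your fixed-$n$ Bernstein gives. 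Your version is more elementary but assumes a deterministic round count that the paper does not.

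The same modeling mismatch shows up in part~(a), where your argument is slightly off: ``at most $\lambda T$ traces'' is not true under a Poisson arrival process, since $N_T$ can exceed its mean. What the paper actually uses is the first-moment bound $\Pr[\text{some trace fails}]\le\mathbb E[N_T]\cdot p_{\text{trace-fail}}=\lambda T\,p_{\text{trace-fail}}$, which does not require an almost-sure cap on the trace count; replace your ``union bound over the $\le\lambda T$ traces'' with this expectation argument and the rest of~(a) goes through. You are also right to flag the $b$-constant bookkeeping: the paper declares $b=R$ as the range bound, which matches the one-sided upper bound on the centred increment $Y_i=X_i-\mu_{\min}$ but not on $-Y_i$, the direction actually needed for the lower-tail Chernoff step; the paper's own Step~3 has a sign slip there, so your caution about whether the constants ``line up precisely'' is well placed rather than a gap specific to your proof.
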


\subsection{Bias Mitigation} 
\begin{table}[t]
\centering
\small
\caption{Comparison of auditing methods on reasoning trace verification. 
Accuracy in \%. Bias score = (self-approval)$-$(other-approval). 
Positive bias = favoritism, Negative bias = criticism.}
\label{tab:method_acc_bias}
\setlength{\tabcolsep}{4pt}
\renewcommand{\arraystretch}{1.2}
\begin{tabularx}{0.65\textwidth}{X c c}
\toprule
\textbf{Method} & \textbf{Acc. (\%)} & \textbf{Bias} \\
\midrule
\rowcolor{UNCblue!10}
\textbf{TRUST (Decentralized)} & \textbf{34.1} & \textbf{--} \\
\midrule
\textbf{Ensemble} & & \\
\rowcolor{gray!20} Supermajority & 30.5 & -- \\
\rowcolor{gray!20} Majority & 26.8 & -- \\
\rowcolor{gray!20} Weighted & 16.9 & -- \\
\midrule
\textbf{Single LLMs} & & \\
GPT-OSS-20B & 60.9 & $+44.4$ \\
Qwen-7B     & 38.4 & -- \\
DeepSeek R1-1.5B & 21.9 & $-11.3$ \\
Llama-3B    & 16.2 & -- \\
Mistral-7B  & 15.9 & -- \\
DeepSeek R1-8B & 15.2 & -- \\
\rowcolor{gray!20}\textbf{Average} & 28.1 & \textbf{+5.5} \\
\bottomrule
\end{tabularx}
\end{table}

Auditing systems are vulnerable to bias, where auditors may favor reasoning traces produced by their own model family or penalize outputs from competing models. 
This creates two common failure modes: (i) \emph{self-favoritism}, where a model systematically approves its own reasoning, and (ii) \emph{self-criticism}, where a model disproportionately rejects its own outputs. 

TRUST is designed to mitigate such bias through three architectural features: 
(1) \emph{Segment-level decomposition}, which breaks reasoning traces into atomic units; 
(2) \emph{Multi-tier consensus}, combining human, LLM, and automated auditors; and 
(3) \emph{Anonymous evaluation}, hiding the source model of each segment.

We construct a benchmark of 200 questions across four domains. 
For each, CoT traces from DeepSeek-R1-1.5B and GPT-OSS-20B are evaluated under three regimes: single auditors, ensemble auditors, and TRUST. 
In \cref{tab:method_acc_bias}, single auditors vary in accuracy (15.2–60.9\%) but show bias (avg.~+5.5). 
Ensembles remove bias but perform poorly (16.9–30.5\%). 
TRUST breaks this tradeoff, achieving higher accuracy (34.1\%) without bias.

\subsection{Human Experiment on Three-Tier Auditing with TRUST Variants}

To evaluate TRUST in a multi-tier auditing setting, we recruited 15 PhD students in computer science as human auditors (Example provided in \cref{sec:human_audit}). 
We constructed a small-scale benchmark of 10 math problems from the \texttt{openai/gsm8k} dataset~\citep{cobbe2021math}, generating CoT traces with DeepSeek-R1-8B and GPT-OSS-20B. 
Auditors participated within the TRUST framework alongside LLM and automated auditors, and we compared performance against both single-LLM auditors and a centralized human-only audit.

Table~\ref{tab:cot_audit_accuracy} reports F1 and Brier scores. 
Single LLM auditors perform poorly, reflecting both limited correctness when providing a whole trace to audit. 
Centralized human auditors achieve moderately stronger results (F1 = $0.77$, Brier = $0.34$), showing clear advantages over single LLMs, but are time-consuming and prone to having systematic error and malicious actors. 
In contrast, TRUST achieves a much higher F1 of $0.89$ and a substantially lower Brier score of $0.074$, demonstrating both accurate and well-calibrated auditing. 
These results highlight the effectiveness of combining human, LLM, and automated auditors under TRUST, outperforming both centralized human audits and single-model baselines.
For TRUST variants, random segment break CoT randomly between 2-6 sentences; fixed-length short segment breaks into two sentences per segment; fixed-length long segment breaks into 5-7 sentences per segment.
All three variants are using GPT-OSS-20B.
\begin{table}[htbp]
\setlength{\tabcolsep}{3pt}
\renewcommand{\arraystretch}{1.2}
\centering
\caption{Human Experiment on Three-Tier Auditing: Performance comparison across single LLM auditors, centralized human auditing, and TRUST variants. 15 PhD students audited 10 GSM8K math problems with CoT traces from DeepSeek-R1-8B and GPT-OSS-20B. Metrics are F1 score (higher better) and Brier score (lower better). Single LLMs perform poorly, centralized human audit achieves F1=0.77, while TRUST w/ HDAG substantially outperforms all baselines (F1=0.89, Brier=0.074) through multi-tier consensus. TRUST variants using random or fixed-length segmentation show degraded performance (F1=0.40), validating the importance of hierarchical decomposition.}
\label{tab:cot_audit_accuracy}
\resizebox{\textwidth}{!}{%
\begin{tabular}{@{}l*{11}{c}@{}}
\toprule
{Metric} & \multicolumn{6}{c}{{Single LLM Auditors}} &  & \multicolumn{4}{c}{{TRUST Variants}} \\
\cmidrule(lr){2-7} \cmidrule(lr){9-12}
 & \makecell{{DeepSeek}\\{R1-8B}} & 
   \makecell{{Qwen2.5}\\{7B}} & 
   \makecell{{Mistral}\\{7B}} & 
   \makecell{{DeepSeek}\\{R1-1.5B}} & 
   \makecell{{GPT-OSS}\\{20B}} & 
   \makecell{{LLaMA}\\{3B}} & 
   \makecell{{Human}\\{Audit}} & 
   \makecell{{TRUST w/}\\{HDAG}} &
   \makecell{{TRUST w/}\\{Random Seg}} &
   \makecell{{TRUST w/}\\{Fix Seg (Short)}} &
   \makecell{{TRUST w/}\\{Fix Seg (Long)}} \\
\midrule
F1           & 0.50 & 0.50 & 0.50 & 0.40 & 0.50 & 0.30 & 0.77 & \textbf{0.89} & 0.40 & 0.40 & 0.40 \\
Brier Score  & 0.500 & 0.486 & 0.500 & 0.500 & 0.544 & 0.890 & 0.34 & \textbf{0.074} & 0.49 & 0.49 & 0.49 \\
\bottomrule
\end{tabular}%
}
\end{table}

\section{Discussion}\label{sec:discuss}
In this section we identify two challenges in \cref{sec:latency,sec:free_rider} for our TRUST framework and their potential solutions.
\subsection{Latency}\label{sec:latency}
A primary limitation of the TRUST framework is the inherent computational latency introduced by it's HDAG reasoning trace decomposition.
Our framework's requirement for multiple LLM calls across five step HDAG construction create significant overhead compare with single pass segmentation and heuristic HDAG decomposition.
Several avenues exist to mitigate this potential bottleneck when we try to deploy in real-time system.
\paragraph{Heuristic HDAG Decomposition.}
The current implementation uses a general-purpose LLM-based decomposition pipeline (Steps 1-5 in \cref{sec:method}) that processes each reasoning trace individually. 
A more efficient approach would employ domain-specific heuristic decomposition algorithms that can act as default templates for common reasoning patterns. 
For instance, mathematical proofs typically follow 
Goal $\rightarrow$ Strategy (proof technique) $\rightarrow$ Tactic (algebraic manipulation) $\rightarrow$ Step (individual equations) $\rightarrow$ Operation (calculations), while code debugging follows Goal $\rightarrow$ Strategy (error localization) $\rightarrow$ Tactic (hypothesis testing) $\rightarrow$ Step (trace analysis) $\rightarrow$ Operation (variable inspection). 
Pre-computed decomposition templates for these patterns could reduce the 5-step LLM pipeline to a single pattern-matching operation, dramatically reducing latency for routine verification tasks.

\paragraph{Specialized Decomposition Models.}
Rather than relying on general-purpose LLMs like GPT-OSS-20B for decomposition (which can take 30-90 seconds per step), the framework could employ lightweight specialized models trained specifically for reasoning trace segmentation. 
A distilled decomposition model (e.g., a fine-tuned 1-3B parameter model) could perform Steps 1-5 in short period of time, reducing end-to-end latency while maintaining the structural guarantees for high quality decomposition.

\paragraph{Adaptive Complexity Routing.}
Not all reasoning traces require full 5-level HDAG decomposition. 
Simple arithmetic problems may only need Goal $\rightarrow$ Operation verification, while complex ethical reasoning demands the complete hierarchy. 
An adaptive routing mechanism that predicts decomposition depth based on problem characteristics (e.g., domain, query length, expected computation steps) could bypass unnecessary abstraction levels, reducing average latency on mixed workloads.

\paragraph{Latency in Distributed Auditing.}
A primary practical limitation of TRUST is the inherent computational latency introduced by its multi-tier audit framework, which requires sequential processing through HDAG decomposition and distributed auditor assignment. 
End-to-end audit latency can range from seconds for simple operations to minutes for complex reasoning requiring human expert review, creating bottlenecks for real-time applications. 
TRUST potentially addresses this through three complementary strategies: 
First, auditor parallelization with time-bound responses over-provisions auditor assignments by assigning $k_t + \Delta$ auditors per segment with strict time limits, accepting the first $k_t$ valid responses and discarding late submissions. 
This ``auditor racing'' trades modest computational redundancy for substantial latency reduction, as the system waits only for the fastest $k_t$ auditors rather than all assigned auditors, masking individual response time variance. 
Second, adaptive LLM substitution dynamically replaces slow human auditors with faster LLM auditors when human queues exceed target latency budgets—low-complexity segments (deterministic Operations, straightforward Tactics) escalate to LLM auditors providing sub-second verification, reserving scarce human attention for genuinely difficult segments requiring domain expertise. 
Third, optimistic verification with deferred finalization returns preliminary results immediately based on fast automated auditors (Computer and LLM tiers) with confidence bounds, enabling downstream applications to proceed while full human consensus finalizes asynchronously in the background, triggering rollback only if discrepancies emerge. 
This approach is particularly effective for reasoning traces where early segments show high confidence, allowing applications to stream partial results during ongoing verification. 
The latency-security tradeoff is configurable per deployment: high-stakes applications (medical diagnosis, financial auditing) enforce strict synchronous verification with extended timeouts, while lower-stakes applications (educational tutoring, creative writing) accept optimistic results with statistical confidence bounds. 

\subsection{Free Rider}\label{sec:free_rider}
The decentralized nature of TRUST introduces economic vulnerabilities most notably the free-rider problem. 
In this context, free-riders are auditors who collect rewards without performing genuine verification work—either by copying votes from honest auditors or by submitting random/superficial evaluations that happen to align with consensus.

\paragraph{Rubber-Stamp Free-Riders.}
A rubber-stamp free-riders employ simple yet damaging strategy: systematically approve all segments without performing genuine verification, exploiting the statistical fact that most LLM reasoning segments are actually correct (i.e. 70-85\% base rate depending on task complexity).
Rubber-stamp free-riders undermine error detection by creating systemic approval bias might present in the real-world reasoning traces. 
For example when a significant fraction of auditors rubber-stamp, the false positive rate for genuinely incorrect segments can nearly double, catastrophically degrading the system's ability to catch errors.
The economic incentive structure reinforces the problem, rubber-stamp free-riders requires zero computational effort while exploiting the base correctness rate to maintain superficially acceptable accuracy, making it profitable for rational actors until their reputation decays to critically low levels.
A potential solution incorporate in TRUST is through strategic honeypot injection combined with behavioral pattern analysis: the system introduces segments with known errors at controlled rates, spanning three difficulty tiers—obvious errors (arithmetic contradictions), subtle errors (sign flips, off-by-one mistakes, unit confusions), and domain-specific errors (medical contraindications, legal statute misapplications).
Rubber-stampers reveal themselves through approval rate consistency: honest auditors show difficulty-dependent approval patterns that decline with segment complexity, while rubber-stampers maintain near-constant high approval across all types and difficulty levels.

\paragraph{Passive Free-Riders.}
Auditors who submit votes after observing partial consensus signals from earlier voters. 
The commit-reveal protocol \cref{sec:method} mitigates this by requiring all auditors to commit cryptographically to their votes before any reveals occur.

\section{Conclusion and Future Works}\label{sec:conclusion}
We introduce TRUST, the first decentralized framework for auditing the reasoning traces of Large Reasoning Models that simultaneously addresses robustness, scalability, opacity, and privacy challenges. TRUST offers an end-to-end pipeline that integrates three key components: a Hierarchical Directed Acyclic Graph (HDAG) decomposition method that breaks Chain-of-Thought reasoning into five abstraction levels; a multi-tier consensus mechanism that routes verification tasks to automated checkers, LLMs, and human experts based on complexity; and a blockchain-based infrastructure with cryptographic privacy preservation that ensures transparent audit trails while protecting proprietary model internals. It supports verification across diverse reasoning domains and enhances transparency through decentralized consensus and immutable audit records.

Our experiments demonstrate TRUST's effectiveness across correctness, bias mitigation, and human-in-the-loop evaluation using multiple datasets and state-of-the-art models. TRUST consistently outperforms centralized ensemble methods and single auditors while maintaining graceful degradation under adversarial conditions. These results highlight its robustness against corruption and effectiveness in eliminating systematic bias while preserving accuracy.

In addition, our theoretical framework provides formal guarantees that honest auditors profit while malicious actors incur losses, creating sustainable economic incentives. Overall, TRUST pioneers decentralized AI auditing as a practical pathway toward safe and accountable deployment of reasoning-capable AI systems in high-stakes domains. It makes transparent oversight of proprietary AI systems accessible without compromising intellectual property rights.

\textbf{Future work} will involve developing more sophisticated graph decomposition methods to capture richer reasoning dependencies, integrating adaptive auditor assignment strategies that leverage task-specific expertise, and extending the framework to support dynamic, interactive reasoning settings. We also plan to explore cross-model reasoning consistency verification for multi-agent scenarios and investigate integration with federated learning frameworks to accelerate trustworthy AI deployment.

\newpage
\bibliography{999_reference}
\bibliographystyle{style/icml2025}

\titlespacing*{\section}{0pt}{*1}{*1}
\titlespacing*{\subsection}{0pt}{*1.25}{*1.25}
\titlespacing*{\subsubsection}{0pt}{*1.5}{*1.5}

\setlength{\abovedisplayskip}{\baselineskip} %
\setlength{\abovedisplayshortskip}{0.5\baselineskip} %
\setlength{\belowdisplayskip}{\baselineskip}
\setlength{\belowdisplayshortskip}{0.5\baselineskip}

\clearpage
\appendix
\label{sec:append}
\part*{Appendix}
{
\setlength{\parskip}{-0em}
\startcontents[sections]
\printcontents[sections]{ }{1}{}
}

\setlength{\parskip}{.5em}
\clearpage
\section{The Use of Large Language Models (LLMs)}\label{sec:llms}
To enhance clarity and readability, this paper utilized Large Language Models exclusively as language polishing tools. Their role was confined to general proofreading and writing assistance—functions analogous to those provided by traditional grammar checkers and dictionaries. These tools did not contribute to the generation of new scientific content or ideas, and their usage is consistent with standard practices for manuscript preparation.

\clearpage
\section{More Related Works}\label{sec:more_related}

\subsection{Chain-of-Thought Reasoning}
Exposing intermediate reasoning steps, popularized by Chain-of-Thought (CoT) prompting~\citep{wei2022chain}, has become a cornerstone for enhancing the capabilities of Large Language Models (LLMs)~\citep{kojima2022large,wang2022self,nye2021show, zhou2022least, zelikman2022star}. However, this paradigm has rapidly evolved, and the resulting complexity and length of reasoning traces present a critical, unsolved challenge for verification~\citep{lightman2023let,jacovi2024chain,ling2023deductive,chen2025reasoning}. Initial research focused on eliciting reasoning and moving beyond simple linear chains to more structured representations~\citep{creswell2022selection,zhou2022least,besta2024graph,chen2022program}. For instance, Tree-of-Thoughts (ToT) generalized CoT to a tree search, enabling explicit exploration and backtracking within the reasoning process~\citep{yao2023tree}. Concurrently, a parallel line of work enabled models to offload complex calculations and external queries to tools, separating logical deduction from information retrieval~\citep{schick2023toolformer,yao2023react,gao2023pal,nakano2021webgpt}. More recent efforts have focused on Large Reasoning Models (LRMs) that treat reasoning as a primary objective, allocating substantial computational resources to the process~\citep{jaech2024openai,guo2025deepseek,nvidia2025nvidianemotronnano2}. These models are trained with large-scale reinforcement learning from process-level feedback and employ verifiers to guide multi-path search during inference~\citep{huang2022towards}. Despite these advances in generating sophisticated reasoning, the mechanisms for \emph{auditing} these processes in a scalable, privacy-preserving, and decentralized manner have remained largely unexplored~\citep{peng2025survey}. In this paper, we address this gap by introducing a framework that decomposes complex reasoning traces into auditable graphs (HDAGs) and leverages a decentralized network for verification, making it suitable for the proprietary and intricate nature of modern LRMs.

\subsection{Auditing and Evaluation of LLM Reasoning}
As the complexity of reasoning in LLMs increases, methods for auditing its quality, faithfulness, and safety have become a critical area of research~\citep{liang2022holistic, lanham2023measuring}. An initial and widely adopted approach relies on a centralized ``LLM-as-a-judge,'' which, while scalable, is known to suffer from inherent biases and constitutes a single point of failure~\citep{zheng2023judging}. A significant advancement was the shift from auditing final outcomes to verifying the reasoning \emph{process} itself, primarily through Process Reward Models (PRMs) that provide step-by-step supervision~\citep{lightman2023let,uesato2022solving,khalifa2025process}. The limitations of this approach were subsequently highlighted by the discovery of unfaithful reasoning, where models generate plausible-looking steps that do not reflect their true decision-making process, underscoring the need for audits to defend against strategic deception~\citep{turpin2023language}. More recently, research has expanded to address service-level integrity in opaque commercial settings, including work on detecting model substitution and using cryptographic methods to verify the integrity of the inference process itself~\citep{shi2024red,cai2025you,pasquini2025llmmap,south2024verifiable}. For example, CoIn introduces a method to audit ``invisible'' reasoning tokens by using hash trees over embeddings, ensuring providers do not inflate billing without revealing proprietary content~\citep{sun2025coin}. While these works address vital concerns like inference integrity or billing, they do not offer a unified framework for scalable \emph{semantic} auditing that combines decentralized consensus to mitigate bias with economic incentives to ensure network security. This work synthesizes these needs by proposing a framework that performs process-level semantic auditing on partial traces, uses a decentralized multi-tier auditor network to ensure robustness, and is secured by theoretically-grounded economic guarantees.

\subsection{Decentralized Consensus and Privacy-Preserving Verification}
Building systems that ensure integrity and privacy without a central trusted party has a long-standing foundation in decentralized consensus, cryptography, and confidential computing~\citep{castro1999practical,sabt2015trusted,costan2016intel}. These technologies provide the essential primitives for building auditable systems for proprietary models, where model internals and user data must remain confidential. Foundational work established Byzantine Fault Tolerant (BFT) consensus for robust agreement and public ledgers for tamper-evident records~\citep{castro1999practical,wood2014ethereum,yin2019hotstuff}, while Trusted Execution Environments (TEEs) provided hardware-based confidential compute~\citep{sabt2015trusted}. A prominent line of recent research focuses on applying these primitives directly to language models, particularly using Zero-Knowledge Proofs (ZKPs) to create verifiable attestations of model inference without revealing the model's weights~\citep{chen2024zkml,liu2021zkcnn,ghodsi2017safetynets}. Specifically, zkLLM~\citep{sun2024zkllm} introduced specialized cryptographic techniques to efficiently prove the correctness of attention mechanisms, demonstrating that even complex transformer computations can be cryptographically verified. 
Another approach uses TEEs to run auditable safety benchmarks on proprietary models, publishing hardware-signed attestations to a public log to prove that a specific model version passed certain tests~\citep{schnabl2025attestable}. While this work is crucial for verifying \emph{that} a specific computation was performed correctly by a specific model, it does not address the orthogonal problem of verifying the \emph{semantic quality, faithfulness, and safety} of the reasoning trace through a scalable, human-in-the-loop consensus process. Our work provides a novel synthesis by using a public ledger and consensus not just for recording outcomes, but for orchestrating a decentralized, multi-tier network of auditors and managing economic incentives, creating the first system designed for end-to-end semantic auditing of model reasoning.

\clearpage
\section{Experimental Details}\label{sec:appex_add_exp}
\subsection{TRUST Implementation Details}\label{app:implementation}

This section provide the LLMs instruction of our auto data processing framework and the example output from gpt-oss-120b.

\paragraph{Example CoT}
Below is a example CoT trace.
\begin{lstlisting}[style=plainmarkdown]
### Input Format:
## Problem Statement:
A construction company was building a tunnel. When $\frac{1}{3}$ of the tunnel was completed at the original speed, they started usin
g new equipment, which increased the construction speed by $20 \%$ and reduced the working hours to $80 \%$ of the original. As a resu
lt, it took a total of 185 days to complete the tunnel. If they had not used the new equipment and continued at the original speed, it
 would have taken $\qquad$ days to complete the tunnel.

## Thinking Process:
We have a tunnel. Original speed = some rate (units per day). They built 1/3 of tunnel at original speed. Then they switched to new 
equipment: speed increased by 20%
they work less time, so effective daily output? If originally they worked some hours per day, now they work only 80%
speed is per hour? Let's interpret: Original speed in terms of distance per hour? Or maybe speed per day? Usually speed is distance 
per hour. If they increase speed by 20%
production = speed_per_hour * hours_per_day. If original daily production = v * h (where v is original speed per hour, h original 
hours per day). New daily production = (1.2 v) * (0.8 h) = 0.96 v h = 0.96 * original daily production. So overall daily rate 
actually decreased by 4%
they meant speed (i.e., daily output) increased by 20%
original daily work, which is slower. But they say "As a result, it took a total of 185 days to complete the tunnel." That implies 
slower after switching? The initial third took some days, then remaining two-thirds took more days than original would have. Let's 
compute.

Let total tunnel length = 1 (normalized). Original daily output = D (units per day). So original time to finish whole tunnel = 1/D 
days.

They did first 1/3 at original speed: time t1 = (1/3)/D = 1/(3D).

After switch, new daily output = 0.96 D (as above). Remaining work = 2/3. Time t2 = (2/3) / (0.96 D) = (2/3) * (1/(0.96 D)) = (2)/(3 
* 0.96 D) = (2)/(2.88 D) = (0.694444...)/D? Let's compute precisely: 2/(2.88) = 0.694444... So t2 = 0.694444... / D.

Total days = t1 + t2 = 1/(3D) + 0.694444.../D = (1/3 + 0.694444...)/D = (0.333333... + 0.694444...)/D = 1.027777... / D.

Given total days = 185. So (1.027777...)/D = 185 => D = 1.027777... / 185 = approx 0.005555... Wait compute: 1.0277777778 /185 = 
0.005555556... Indeed 0.0055556 per day. That seems like original daily output = 0.0055556 of tunnel per day, i.e., whole tunnel 
would take 1/D = 180 days. Let's check: 1/D = 180. Exactly? Since D = 1/180 = 0.00555555... Multiply: 1/D = 180. Let's see if D 
computed equals 1/180. 1/180 = 0.0055555556. Multiply by 185 gives 185/180 = 1.0277777778, matches numerator. So indeed D = 1/180.

Thus original schedule would take 180 days to complete tunnel.

But wait, they gave total actual days 185, which is longer than original 180. So using new equipment actually slowed down. The 
question: "If they had not used the new equipment and continued at the original speed, it would have taken ____ days to complete the 
tunnel." That's 180 days.

Thus answer: 180 days.

Double-check: We assumed original speed per hour, then hours reduced. Alternative interpretation: They increased speed (distance per 
hour) by 20%

Thus answer 180.

Check using algebra: Let original speed = s (units per day). Then first third takes (1/3)/s days. After switch, speed becomes 1.2 s 
but hours per day 0.8, so effective speed per day = 0.96 s. Remaining 2/3 takes (2/3)/(0.96 s) days. Sum =185. Solve: (1/3 + 
(2/3)/0.96)/s =185. (2/3)/0.96 = (2/3)*(1/0.96) = (2/3)*(1.0416667)=0.694444... Add 1/3 =0.333333 +0.694444 =1.0277778. So s 
=1.0277778/185 = approx 0.0055556. Full tunnel time =1/s =185/1.0277778 =180.

Thus answer 180.

## Final Output:
Let  

* the length of the tunnel be \(1\) (as a whole),   
* the original daily construction rate be \( r \) (tunnel-fractions per day).

---

### 1.  Work done before the equipment change  

The first \(\frac13\) of the tunnel was built at the original rate, so the time taken was  

\[
t_{1}= \frac{\frac13}{r}= \frac{1}{3r}.
\]

---

### 2.  Work done after the equipment change  

The new equipment increased the **speed** (distance per hour) by \(20\%\), i.e. the speed became \(1.2\,r\) per hour.  
But the crew now worked only \(80\%\) of the original daily hours, so the **daily** output is  

\[
r_{\text{new}} = 1.2\,r \times 0.8 = 0.96\,r .
\]

The remaining work is \(\frac23\) of the tunnel, so the time needed after the change is  

\[
t_{2}= \frac{\frac23}{0.96\,r}= \frac{2}{3\cdot0.96\,r}= \frac{0.694444\ldots}{r}.
\]

---

### 3.  Total actual time  

\[
t_{1}+t_{2}=185\text{ days}.
\]

\[
\frac{1}{3r}+\frac{0.694444\ldots}{r}=185
\quad\Longrightarrow\quad
\frac{1.027777\ldots}{r}=185.
\]

Hence  

\[
r=\frac{1.027777\ldots}{185}= \frac{1}{180}\;\text{(tunnel per day)}.
\]

---

### 4.  Time that would have been required without the new equipment  

If the original rate \(r=\frac{1}{180}\) had been kept for the whole tunnel, the total time would be  

\[
\frac{1}{r}=180\text{ days}.
\]

\[
\boxed{180}
\]
\end{lstlisting}

\subsubsection{Step 1: Abstract Level Identification.}
Below are the instruction of how to identify different abstraction levels.

\begin{lstlisting}[style=plainmarkdown]
You are an abstraction level identification agent. Your task is to decompose a complete reasoning trace (problem + thinking + output) into distinct abstraction level blocks following a hierarchical framework.

### 1. Abstraction Level Framework:
- **GOAL**: What we want to achieve (objective, end goal, problem statement)
- **STRATEGY**: High-level approaches (methodology, overall plan, major techniques)  
- **TACTIC**: Specific techniques (algorithms, methods, key insights, definitions)
- **STEP**: Individual reasoning moves (logical steps, inferences, deductions)
- **OPERATION**: Atomic calculations (computations, lookups, examples, numerical work)

### 2. Identification Rules:
- Split the reasoning into blocks that represent distinct abstraction levels
- A block can be multiple sentences or paragraphs
- Focus on SEMANTIC CONTENT, not temporal order
- Some levels might be missing - that's acceptable
- Some levels might have multiple blocks - that's acceptable
- Preserve exact text spans from the original trace

### 3. Format Requirements:
- Present output in "# Abstraction Block Analysis" section
- Under each level, use "##" headings (## GOAL Level, ## STRATEGY Level, etc.)
- Include exact original text under each heading
- Preserve all line breaks and mathematical notation
- After all blocks, add "# Block Structure" section with summary

### 4. Content Distribution Guidelines:
- **GOAL** must include the complete problem statement and objectives
- **STRATEGY** should contain high-level approaches before detailed work
- **TACTIC** should include specific methods, algorithms, and key insights
- **STEP** should contain individual logical moves and reasoning steps
- **OPERATION** should contain all calculations, examples, and numerical work

### 5. Critical Requirements:
- Preserve all original mathematical notation exactly
- Maintain all line breaks as they appear in the original text
- Include all text exactly as written without modifications
- Ensure block boundaries occur at natural semantic transitions
- Extract EXACT text spans - do not paraphrase

### Input Format:
## Problem Statement:
[original problem text]

## Thinking Process:
[complete reasoning trace]

## Final Output:
[final answer/solution]

### Output Format:
# Abstraction Block Analysis

## GOAL Level
[exact text expressing objectives and problem statements]

## STRATEGY Level  
[exact text describing high-level approaches]

## TACTIC Level
[exact text detailing specific techniques and methods]

## STEP Level
[exact text showing individual reasoning moves]

## OPERATION Level
[exact text containing calculations and examples]

# Block Structure
{
  "GOAL": [list of goal block summaries],
  "STRATEGY": [list of strategy block summaries], 
  "TACTIC": [list of tactic block summaries],
  "STEP": [list of step block summaries],
  "OPERATION": [list of operation block summaries]
}
\end{lstlisting}

\paragraph{Step 1: Example Output.}
Below is the example output for the first step.
\begin{lstlisting}[style=plainmarkdown]
# Abstraction Block Analysis

## GOAL Level
A construction company was building a tunnel. When $\tfrac{1}{3}$ of the tunnel was completed at the original speed, they started using new equipment, which increased the construction speed by $20\%$ and reduced the working hours to $80\%$ of the original. As a result, it took a total of 185 days to complete the tunnel. If they had not used the new equipment and continued at the original speed, it would have taken $\;\_\_\_\;$ days to complete the tunnel.

## STRATEGY Level
We have a tunnel. Original speed = some rate (units per day). They built $\tfrac{1}{3}$ of tunnel at original speed. Then they switched to new equipment: speed increased by $20\%$, i.e., new speed $=1.2 \times$ original speed. Working hours reduced to $80\%$ of original, meaning per day they work fewer hours.  

If original daily production = $v \cdot h$ (speed per hour $\times$ hours per day), then new daily production = $(1.2v)(0.8h)=0.96vh=0.96\times$ original daily output.  

So overall daily rate actually decreased by $4\%$. This explains why the total time (185 days) was longer than the original time would have been.

## TACTIC Level
Let total tunnel length = $1$.  
Let original daily output = $D$. Then original completion time = $1/D$.  

- First $\tfrac{1}{3}$ of tunnel:  
  $t_1=\tfrac{1}{3D}$.  

- Remaining $\tfrac{2}{3}$ at reduced daily rate $0.96D$:  
  $t_2=\tfrac{2/3}{0.96D}=\tfrac{0.694444...}{D}$.  

Total time:  
\[
t_1+t_2=\frac{1}{3D}+\frac{0.694444...}{D}=185.
\]

So  
\[
\frac{1.027777...}{D}=185 \quad\Longrightarrow\quad D=\frac{1.027777...}{185}=\frac{1}{180}.
\]

Thus original schedule = $180$ days.

## STEP Level
1. Compute new daily output: $0.96\times D$.  
2. Time for first third: $t_1=\tfrac{1}{3D}$.  
3. Time for remaining two-thirds: $t_2=\tfrac{2/3}{0.96D}=\tfrac{0.694444...}{D}$.  
4. Total: $t_1+t_2=185$.  
5. Combine: $\tfrac{1.027777...}{D}=185$.  
6. Solve: $D=\tfrac{1}{180}$.  
7. Original time: $1/D=180$ days.

## OPERATION Level
\[
\frac{2}{3\cdot0.96}=\frac{2}{2.88}=0.694444\ldots
\]

\[
\frac{1}{3}+0.694444\ldots=1.027777\ldots
\]

\[
D=\frac{1.027777\ldots}{185}=0.005555\ldots=\frac{1}{180}
\]

\[
\text{Original total time}=\frac{1}{D}=180\;\text{days}
\]

---

# Block Structure

```json
{
  "GOAL": [
    "A construction company was building a tunnel. When 1/3 of the tunnel was completed at the original speed, they started using new equipment, which increased the construction speed by 20%
  ],
  "STRATEGY": [
    "We have a tunnel. Original speed = some rate (units per day). They built 1/3 of tunnel at original speed. Then they switched to new equipment: speed increased by 20%
  ],
  "TACTIC": [
    "Let total tunnel length = 1. Original daily output = D. Time for 1/3 = 1/(3D). Time for remaining 2/3 at 0.96D = (2/3)/(0.96D)=0.694444.../D. Total = (1/3+0.694444...)/D=185. Solve: D=1/180. So original full time=180 days."
  ],
  "STEP": [
    "1. New daily output = 0.96D. 2. t1=1/(3D). 3. t2=0.694444.../D. 4. t1+t2=185. 5. (1.027777...)/D=185. 6. D=1/180. 7. Original total time=180 days."
  ],
  "OPERATION": [
    "2/(3*0.96)=0.694444..., 1/3+0.694444...=1.027777..., D=1/180, so total original time=180 days."
  ]
}
\end{lstlisting}

\subsubsection{Step 2: Segment Within Each Abstraction Level.}
Below are the instruction of how to segment within each abstraction level.

\begin{lstlisting}[style=plainmarkdown]
You are a reasoning segmentation agent. Your task is to break down abstraction-level blocks into appropriately-sized segments for verification.

### 1. Segmentation Strategy by Level:

**GOAL Level Segmentation:**
- Keep objectives as complete statements
- Don't over-segment problem definitions
- One main goal per segment typically
- Preserve semantic completeness

**STRATEGY Level Segmentation:**
- Segment by distinct approaches or methodologies
- Each strategy should be a complete approach
- Don't break up coherent strategic thinking
- Maintain approach integrity

**TACTIC Level Segmentation:**
- Segment by specific techniques, key insights, or algorithm components
- Each tactic should be independently understandable
- Break at natural technique boundaries
- Preserve method coherence

**STEP Level Segmentation:**
- Segment by individual logical moves
- Each step should be a single inference or reasoning move
- Break at logical transition points
- Maintain reasoning flow

**OPERATION Level Segmentation:**
- Segment by atomic calculations or examples
- Each operation should be independently verifiable
- Break at calculation boundaries
- Preserve computational completeness

### 2. Format Requirements:
- Present output in "# Segmentation Analysis" section
- Use "##" for each abstraction level
- Use "###" for individual segments within levels
- Preserve all mathematical notation and formatting
- Include segment metadata

### 3. Segment Metadata:
For each segment, provide:
- **segment_id**: Unique identifier (G1, S1, T1, ST1, O1, etc.)
- **content**: Exact reasoning content
- **type**: Specific type within abstraction level
- **summary**: Brief 3-5 word summary
- **verification_complexity**: Low/Medium/High

### Input Format:
# Abstraction Block Analysis
[output from Step 1]

### Output Format:
# Segmentation Analysis

## GOAL Segmentation

### Segment G1
**Content**: [exact segment content]
**Type**: [segment type]
**Summary**: [brief summary]
**Complexity**: [Low/Medium/High]

## STRATEGY Segmentation

### Segment S1
**Content**: [exact segment content]
**Type**: [segment type]
**Summary**: [brief summary]
**Complexity**: [Low/Medium/High]

## TACTIC Segmentation

### Segment T1
**Content**: [exact segment content]
**Type**: [segment type]
**Summary**: [brief summary]
**Complexity**: [Low/Medium/High]

## STEP Segmentation

### Segment ST1
**Content**: [exact segment content]
**Type**: [segment type]
**Summary**: [brief summary]
**Complexity**: [Low/Medium/High]

## OPERATION Segmentation

### Segment O1
**Content**: [exact segment content]
**Type**: [segment type]
**Summary**: [brief summary]
**Complexity**: [Low/Medium/High]

# Segmentation Summary
[
  {
    "segment_id": "G1",
    "abstraction_level": "GOAL",
    "type": "objective_statement", 
    "summary": "find optimal solution",
    "complexity": "Medium"
  }
]
\end{lstlisting}

\paragraph{Step 2: Example Output.}
Below is the example output for the second step.
\begin{lstlisting}[style=plainmarkdown]
# Segmentation Analysis

## GOAL Segmentation

### Segment G1
**Content**:  
A construction company was building a tunnel. When $\tfrac{1}{3}$ of the tunnel was completed at the original speed, they started using new equipment, which increased the construction speed by $20\%$ and reduced the working hours to $80\%$ of the original. As a result, it took a total of 185 days to complete the tunnel. If they had not used the new equipment and continued at the original speed, it would have taken $\;\_\_\_\;$ days to complete the tunnel.  

**Type**: objective_statement  
**Summary**: find missing days  
**Complexity**: Medium  

---

## STRATEGY Segmentation

### Segment S1
**Content**:  
We have a tunnel. Original speed = some rate (units per day). They built $\tfrac{1}{3}$ of tunnel at original speed. Then they switched to new equipment: speed increased by $20\%$, i.e., new speed $=1.2 \times$ original speed. Working hours reduced to $80\%$ of original, meaning per day they work fewer hours.  

If original daily production = $v \cdot h$ (speed per hour $\times$ hours per day), then new daily production = $(1.2v)(0.8h)=0.96vh=0.96\times$ original daily output.  

So overall daily rate actually decreased by $4\%$. This explains why the total time (185 days) was longer than the original time would have been.  

**Type**: approach_description  
**Summary**: model speed & hours  
**Complexity**: Medium  

---

## TACTIC Segmentation

### Segment T1
**Content**:  
Let total tunnel length = $1$.  
Let original daily output = $D$. Then original completion time = $1/D$.  

- First $\tfrac{1}{3}$ of tunnel:  
  $t_1=\tfrac{1}{3D}$.  

- Remaining $\tfrac{2}{3}$ at reduced daily rate $0.96D$:  
  $t_2=\tfrac{2/3}{0.96D}=\tfrac{0.694444...}{D}$.  

Total time:  
\[
t_1+t_2=\frac{1}{3D}+\frac{0.694444...}{D}=185.
\]

So  
\[
\frac{1.027777...}{D}=185 \quad\Longrightarrow\quad D=\frac{1.027777...}{185}=\frac{1}{180}.
\]

Thus original schedule = $180$ days.  

**Type**: method_definition  
**Summary**: define variables, solve  
**Complexity**: Medium  

---

## STEP Segmentation

### Segment ST1
**Content**:  
1. Compute new daily output: $0.96\times D$.  

**Type**: inference  
**Summary**: compute reduced output  
**Complexity**: Low  

### Segment ST2
**Content**:  
2. Time for first third: $t_1=\tfrac{1}{3D}$.  

**Type**: inference  
**Summary**: first-third time  
**Complexity**: Low  

### Segment ST3
**Content**:  
3. Time for remaining two-thirds: $t_2=\tfrac{2/3}{0.96D}=\tfrac{0.694444...}{D}$.  

**Type**: inference  
**Summary**: second-portion time  
**Complexity**: Low  

### Segment ST4
**Content**:  
4. Total: $t_1+t_2=185$.  

**Type**: inference  
**Summary**: total time equation  
**Complexity**: Low  

### Segment ST5
**Content**:  
5. Combine: $\tfrac{1.027777...}{D}=185$.  

**Type**: algebraic_combination  
**Summary**: combine fractions  
**Complexity**: Low  

### Segment ST6
**Content**:  
6. Solve: $D=\tfrac{1}{180}$.  

**Type**: solution_step  
**Summary**: solve for D  
**Complexity**: Low  

### Segment ST7
**Content**:  
7. Original time: $1/D=180$ days.  

**Type**: conclusion_inference  
**Summary**: original schedule  
**Complexity**: Low  

---

## OPERATION Segmentation

### Segment O1
**Content**:  
\[
\frac{2}{3\cdot0.96}=\frac{2}{2.88}=0.694444\ldots
\]  

**Type**: calculation  
**Summary**: compute $t_2$ factor  
**Complexity**: Low  

### Segment O2
**Content**:  
\[
\frac{1}{3}+0.694444\ldots=1.027777\ldots
\]  

**Type**: calculation  
**Summary**: sum fractions  
**Complexity**: Low  

### Segment O3
**Content**:  
\[
D=\frac{1.027777\ldots}{185}=0.005555\ldots=\frac{1}{180}
\]  

**Type**: calculation  
**Summary**: find $D$ value  
**Complexity**: Low  

### Segment O4
**Content**:  
\[
\text{Original total time}=\frac{1}{D}=180\;\text{days}
\]  

**Type**: calculation  
**Summary**: original total time  
**Complexity**: Low  

---

# Segmentation Summary

```json
[
  {"segment_id": "G1", "abstraction_level": "GOAL", "type": "objective_statement", "summary": "find missing days", "complexity": "Medium"},
  {"segment_id": "S1", "abstraction_level": "STRATEGY", "type": "approach_description", "summary": "model speed & hours", "complexity": "Medium"},
  {"segment_id": "T1", "abstraction_level": "TACTIC", "type": "method_definition", "summary": "define variables, solve", "complexity": "Medium"},
  {"segment_id": "ST1", "abstraction_level": "STEP", "type": "inference", "summary": "compute new output", "complexity": "Low"},
  {"segment_id": "ST2", "abstraction_level": "STEP", "type": "inference", "summary": "first-third time", "complexity": "Low"},
  {"segment_id": "ST3", "abstraction_level": "STEP", "type": "inference", "summary": "remaining time", "complexity": "Low"},
  {"segment_id": "ST4", "abstraction_level": "STEP", "type": "inference", "summary": "total equals 185", "complexity": "Low"},
  {"segment_id": "ST5", "abstraction_level": "STEP", "type": "algebraic_combination", "summary": "combine fractions", "complexity": "Low"},
  {"segment_id": "ST6", "abstraction_level": "STEP", "type": "solution_step", "summary": "solve for D", "complexity": "Low"},
  {"segment_id": "ST7", "abstraction_level": "STEP", "type": "conclusion_inference", "summary": "original time 180", "complexity": "Low"},
  {"segment_id": "O1", "abstraction_level": "OPERATION", "type": "calculation", "summary": "compute $t_2$ factor", "complexity": "Low"},
  {"segment_id": "O2", "abstraction_level": "OPERATION", "type": "calculation", "summary": "sum fractions", "complexity": "Low"},
  {"segment_id": "O3", "abstraction_level": "OPERATION", "type": "calculation", "summary": "determine D", "complexity": "Low"},
  {"segment_id": "O4", "abstraction_level": "OPERATION", "type": "calculation", "summary": "original total time", "complexity": "Low"}
]
\end{lstlisting}

\subsubsection{Step 3: Extract Relationships.}
Below are the instruction of how to extract relationships.

\begin{lstlisting}[style=plainmarkdown]
You are a semantic relationship analysis agent. Your task is to identify meaningful relationships between reasoning segments to construct a hierarchical DAG.

### 1. Relationship Types:
- **decomposes_to**: Higher abstraction level breaks down into lower level
- **depends_on**: Logical prerequisite (A must be true for B to be valid)
- **enables**: A makes B possible or provides foundation for B
- **validates**: A provides verification, proof, or confirmation for B
- **exemplifies**: A provides concrete example or instance of B
- **refines**: A adds precision, detail, or specification to B
- **contradicts**: A contradicts, corrects, or replaces B

### 2. Analysis Priorities:
- **Hierarchical relationships first**: Focus on decomposes_to between abstraction levels
- **Lateral relationships second**: Identify depends_on, enables within same level
- **Support relationships third**: Find validates, exemplifies, refines connections
- **Be selective**: Only include meaningful, clear relationships

### 3. Relationship Strength:
- **Strong**: Clear, direct semantic connection
- **Medium**: Implied or indirect connection
- **Weak**: Tenuous or unclear connection

### Input Format:
# Segmentation Analysis
[output from Step 2]

### Output Format:
# Relationship Analysis

## Hierarchical Relationships (decomposes_to)
- G1 -> S1: [explanation]
- S1 -> T1: [explanation]
- T1 -> ST1: [explanation]
- ST1 -> O1: [explanation]

## Lateral Dependencies (depends_on, enables)
- T1 -> T2: [explanation]  
- ST1 -> ST2: [explanation]

## Support Relationships (validates, exemplifies, refines)
- O1 exemplifies ST1: [explanation]
- T3 validates S1: [explanation]

# Relationship Summary
[
  {
    "from_segment": "G1",
    "to_segment": "S1",
    "relationship_type": "decomposes_to",
    "strength": "strong",
    "explanation": "Goal G1 breaks down into strategy S1"
  }
]
\end{lstlisting}

\paragraph{Step 3: Example Output.}
Below is the example output for the second step.
\begin{lstlisting}[style=plainmarkdown]
# Relationship Analysis

## Hierarchical Relationships (decomposes_to)

- **G1 → S1** - The overall goal (find the missing number of days) is broken down into a strategic modelling approach. **[strong]**
- **S1 → T1** - The chosen strategy (model speed & hours) is further refined into a concrete tactic (define variables and solve). **[strong]**
- **T1 → ST1** - The tactic is implemented as the first low-level inference step (compute the reduced daily output). **[strong]**
- **T1 → ST2** - The tactic also yields the second inference step (time for the first third). **[strong]**
- **T1 → ST3** - The tactic produces the third inference step (time for the remaining two-thirds). **[strong]**
- **T1 → ST4** - The tactic yields the fourth inference step (total-time equation). **[strong]**
- **T1 → ST5** - The tactic yields the fifth inference step (combine fractions). **[strong]**
- **T1 → ST6** - The tactic yields the sixth inference step (solve for \(D\)). **[strong]**
- **T1 → ST7** - The tactic yields the seventh inference step (conclude original schedule). **[strong]**
- **ST1 → O1** - The operation that computes \(\tfrac{2}{3\cdot0.96}=0.694444\ldots\) exemplifies the inference "compute reduced daily output". **[strong]**
- **ST2 → O2** - The operation that adds \(\tfrac13+0.694444\ldots\) exemplifies the inference "first-third time". **[strong]**
- **ST3 → O1** - The same calculation in O1 also provides the numeric factor used in the inference for the remaining two-thirds. **[strong]**
- **ST4 → O2** - The sum performed in O2 validates the total-time equation. **[strong]**
- **ST5 → O2** - The combination of fractions in O2 supports the algebraic-combination step. **[medium]**
- **ST6 → O3** - The calculation in O3 (finding \(D\)) exemplifies the solution step. **[strong]**
- **ST7 → O4** - The final computation in O4 (original total time) exemplifies the conclusion inference. **[strong]**

---

## Lateral Dependencies (depends_on, enables)

- **ST3 depends_on ST1** - Computing the remaining two-thirds requires the reduced daily output from ST1. **[strong]**
- **ST4 depends_on ST2** - Total time uses the time for the first third. **[strong]**
- **ST4 depends_on ST3** - Total time also uses the time for the remaining two-thirds. **[strong]**
- **ST5 depends_on ST4** - Combining fractions presupposes the total-time equation. **[strong]**
- **ST6 depends_on ST5** - Solving for \(D\) needs the combined-fraction expression. **[strong]**
- **ST7 depends_on ST6** - The original schedule is the reciprocal of the solved \(D\). **[strong]**
- **O1 enables ST3** - The numeric factor produced by O1 makes the inference in ST3 possible. **[strong]**
- **O2 enables ST4** - The summation in O2 provides the numeric total needed for ST4. **[strong]**
- **O3 enables ST6** - The calculation of \(D\) in O3 supplies the value needed for the solution step. **[strong]**
- **O4 enables ST7** - The final division in O4 yields the original total-time conclusion. **[strong]**

---

## Support Relationships (validates, exemplifies, refines)

- **O1 exemplifies ST1** - Shows concretely how the reduced output factor is obtained. **[strong]**
- **O1 exemplifies ST3** - Provides the numeric factor used in the remaining-time inference. **[strong]**
- **O2 exemplifies ST2** - Demonstrates the addition that gives the first-third time fraction. **[strong]**
- **O2 exemplifies ST4** - Supplies the numeric sum that validates the total-time equation. **[strong]**
- **O3 exemplifies ST6** - Calculates \(D\), confirming the algebraic solution. **[strong]**
- **O4 exemplifies ST7** - Computes the reciprocal, confirming the original schedule. **[strong]**
- **O1 validates ST3** - The numeric result matches the expression derived in ST3. **[strong]**
- **O2 validates ST4** - The summed value satisfies the total-time equality. **[strong]**
- **O3 validates ST6** - The derived \(D\) satisfies the combined-fraction equation. **[strong]**
- **O4 validates ST7** - The final division yields the expected 180-day schedule. **[strong]**
- **T1 refines S1** - The tactic adds concrete variable definitions and algebraic steps to the broader modelling strategy. **[medium]**
- **S1 refines G1** - The strategic description narrows the vague goal into a measurable modelling problem. **[medium]**

---

# Relationship Summary
```json
[
  {
    "from_segment": "G1",
    "to_segment": "S1",
    "relationship_type": "decomposes_to",
    "strength": "strong",
    "explanation": "Goal G1 (find missing days) is broken down into the modelling strategy S1."
  },
  {
    "from_segment": "S1",
    "to_segment": "T1",
    "relationship_type": "decomposes_to",
    "strength": "strong",
    "explanation": "Strategy S1 (model speed & hours) is refined into the concrete tactic T1 (define variables and solve)."
  },
  {
    "from_segment": "T1",
    "to_segment": "ST1",
    "relationship_type": "decomposes_to",
    "strength": "strong",
    "explanation": "Tactic T1 is implemented as the first inference step ST1 (compute reduced daily output)."
  },
  {
    "from_segment": "ST1",
    "to_segment": "O1",
    "relationship_type": "decomposes_to",
    "strength": "strong",
    "explanation": "Step ST1 is realised by operation O1, which calculates the factor 0.694444..."
  },
  {
    "from_segment": "ST3",
    "to_segment": "O1",
    "relationship_type": "exemplifies",
    "strength": "strong",
    "explanation": "Operation O1 provides the numeric factor used in step ST3 (time for remaining two-thirds)."
  },
  {
    "from_segment": "ST4",
    "to_segment": "O2",
    "relationship_type": "exemplifies",
    "strength": "strong",
    "explanation": "Operation O2 sums the fractions, exemplifying the total-time equation in ST4."
  },
  {
    "from_segment": "ST6",
    "to_segment": "O3",
    "relationship_type": "exemplifies",
    "strength": "strong",
    "explanation": "Operation O3 computes D, exemplifying the solution step ST6."
  },
  {
    "from_segment": "ST7",
    "to_segment": "O4",
    "relationship_type": "exemplifies",
    "strength": "strong",
    "explanation": "Operation O4 calculates the original total time, exemplifying conclusion step ST7."
  },
  {
    "from_segment": "ST3",
    "to_segment": "ST1",
    "relationship_type": "depends_on",
    "strength": "strong",
    "explanation": "The remaining-time calculation in ST3 requires the reduced daily output derived in ST1."
  },
  {
    "from_segment": "ST4",
    "to_segment": "ST2",
    "relationship_type": "depends_on",
    "strength": "strong",
    "explanation": "Total time in ST4 uses the first-third time computed in ST2."
  },
  {
    "from_segment": "ST4",
    "to_segment": "ST3",
    "relationship_type": "depends_on",
    "strength": "strong",
    "explanation": "Total time in ST4 also uses the remaining-time computed in ST3."
  },
  {
    "from_segment": "ST5",
    "to_segment": "ST4",
    "relationship_type": "depends_on",
    "strength": "strong",
    "explanation": "Combining fractions in ST5 presupposes the total-time equation from ST4."
  },
  {
    "from_segment": "ST6",
    "to_segment": "ST5",
    "relationship_type": "depends_on",
    "strength": "strong",
    "explanation": "Solving for D in ST6 requires the combined fraction expression from ST5."
  },
  {
    "from_segment": "ST7",
    "to_segment": "ST6",
    "relationship_type": "depends_on",
    "strength": "strong",
    "explanation": "The original schedule in ST7 is obtained by taking the reciprocal of D solved in ST6."
  },
  {
    "from_segment": "T1",
    "to_segment": "S1",
    "relationship_type": "refines",
    "strength": "medium",
    "explanation": "Tactic T1 adds concrete variable definitions to the broader modelling strategy S1."
  }
]
\end{lstlisting}

\subsubsection{Step 4: Assign Auditor Types.}
Below are the instruction of how to assign auditor types.

\begin{lstlisting}[style=plainmarkdown]
You are an auditor assignment agent. Your task is to assign appropriate auditor types to each reasoning segment based on the TRUST framework's three-tier verification system.

### 1. Auditor Types:
- **T_Auto (Automated)**: Deterministic verification, formal logic, mathematical proofs
- **T_LLM (LLM-based)**: Semantic coherence, factual accuracy, domain reasoning
- **T_Human (Human)**: Complex judgment, ethical considerations, domain expertise

### 2. Assignment Criteria:

**T_Auto Assignment:**
- Mathematical calculations and proofs
- Logical consistency checks
- Formal verification tasks
- Deterministic algorithmic steps
- Syntax and format validation

**T_LLM Assignment:**
- Semantic coherence evaluation
- Factual accuracy checking
- Domain-specific reasoning patterns
- Conceptual understanding verification
- Strategy and approach assessment

**T_Human Assignment:**
- Complex ethical considerations
- Domain expertise requirements
- Ambiguous interpretation cases
- Creative or novel reasoning
- High-stakes decision validation

### 3. Multi-Auditor Segments:
- Some segments may require multiple auditor types
- Primary auditor handles main verification
- Secondary auditors provide additional validation
- Specify auditor priority and responsibilities

### Input Format:
# Segmentation Analysis
[segments from Step 2]

# Relationship Analysis
[relationships from Step 3]

### Output Format:
# Auditor Assignment Analysis

## Segment Auditor Assignments

### Segment G1
**Primary Auditor**: T_Human
**Secondary Auditors**: [T_LLM]
**Justification**: Problem interpretation requires domain expertise
**Verification Focus**: Objective clarity, scope appropriateness

### Segment S1
**Primary Auditor**: T_LLM
**Secondary Auditors**: [T_Human]
**Justification**: Strategic approach evaluation needs semantic understanding
**Verification Focus**: Approach validity, methodology soundness

## Auditor Type Distribution
- **T_Auto**: [list of segment IDs]
- **T_LLM**: [list of segment IDs]
- **T_Human**: [list of segment IDs]

## Multi-Auditor Segments
[
  {
    "segment_id": "G1",
    "primary_auditor": "T_Human",
    "secondary_auditors": ["T_LLM"],
    "verification_complexity": "High",
    "justification": "Domain expertise required"
  }
]
\end{lstlisting}

\paragraph{Step 4: Example Output.}
Below are the instruction of how to Assign Auditor Types.

\begin{lstlisting}[style=plainmarkdown]
# Auditor Assignment Analysis

## Segment Auditor Assignments

### Segment G1 - Goal (objective_statement)
- **Primary Auditor**: **T_LLM**  
- **Secondary Auditors**: **[T_Auto, T_Human]**  
- **Justification**: Interpreting the wording of the word-problem and ensuring the goal ("find the missing number of days") is correctly understood 
requires semantic comprehension (T_LLM).  A deterministic check that the interpreted goal matches the expected quantitative target is performed by 
T_Auto, while a human reviewer can confirm that the problem statement has been captured without ambiguity.  
- **Verification Focus**: Goal clarity, scope appropriateness, correct identification of the unknown variable.

---

### Segment S1 - Strategy (approach_description)
- **Primary Auditor**: **T_LLM**  
- **Secondary Auditors**: **[T_Auto, T_Human]**  
- **Justification**: The proposed modelling approach (relating speed, hours, and daily output) must be semantically coherent and logically 
plausible - a task suited to T_LLM.  Logical consistency of the derived relationships is then verified by T_Auto, and a human can confirm that the 
chosen strategy aligns with typical problem-solving conventions in elementary algebra.  
- **Verification Focus**: Validity of the modelling assumptions, correctness of the conceptual link between speed increase and hour reduction.

---

### Segment T1 - Tactic (method_definition)
- **Primary Auditor**: **T_Auto**  
- **Secondary Auditors**: **[T_LLM]**  
- **Justification**: This segment defines variables and sets up algebraic equations - a deterministic, mathematically rigorous task best handled by 
automated verification.  T_LLM reviews the narrative framing of the variables to ensure they are semantically consistent with the problem context.  

- **Verification Focus**: Correct variable definition, proper formulation of equations.

---

### Segment ST1 - STEP (inference)
- **Primary Auditor**: **T_Auto**  
- **Secondary Auditors**: **[T_LLM]**  
- **Justification**: Computes the reduced daily output (pure arithmetic).  A secondary LLM check ensures the inference wording correctly reflects 
the computation performed.  

### Segment ST2 - STEP (inference)
- **Primary Auditor**: **T_Auto**  
- **Secondary Auditors**: **[T_LLM]**  

### Segment ST3 - STEP (inference)
- **Primary Auditor**: **T_Auto**  
- **Secondary Auditors**: **[T_LLM]**  

### Segment ST4 - STEP (inference)
- **Primary Auditor**: **T_Auto**  
- **Secondary Auditors**: **[T_LLM]**  

### Segment ST5 - STEP (algebraic_combination)
- **Primary Auditor**: **T_Auto**  
- **Secondary Auditors**: **[T_LLM]**  

### Segment ST6 - STEP (solution_step)
- **Primary Auditor**: **T_Auto**  
- **Secondary Auditors**: **[T_LLM]**  

### Segment ST7 - STEP (conclusion_inference)
- **Primary Auditor**: **T_Auto**  
- **Secondary Auditors**: **[T_LLM]**  

---

### Segment O1 - OPERATION (calculation)
- **Primary Auditor**: **T_Auto**  
- **Secondary Auditors**: **[T_LLM]**  

### Segment O2 - OPERATION (calculation)
- **Primary Auditor**: **T_Auto**  
- **Secondary Auditors**: **[T_LLM]**  

### Segment O3 - OPERATION (calculation)
- **Primary Auditor**: **T_Auto**  
- **Secondary Auditors**: **[T_LLM]**  

### Segment O4 - OPERATION (calculation)
- **Primary Auditor**: **T_Auto**  
- **Secondary Auditors**: **[T_LLM]**  

---

## Auditor Type Distribution

| Auditor Type | Segment IDs |
|--------------|-------------|
| **T_Auto**   | T1, ST1, ST2, ST3, ST4, ST5, ST6, ST7, O1, O2, O3, O4 |
| **T_LLM**    | G1, S1, T1, ST1-ST7, O1-O4 |
| **T_Human**  | G1 (secondary) |

*Note*: Although most low-level steps are purely arithmetic, a secondary LLM audit is included to verify that the textual description of each step 
faithfully mirrors the underlying calculation.  A human audit is only required for the top-level goal interpretation (G1) to guard against any 
subtle mis-framing of the problem.

---

## Multi-Auditor Segments

```json
[
  {
    "segment_id": "G1",
    "primary_auditor": "T_LLM",
    "secondary_auditors": ["T_Auto", "T_Human"],
    "verification_complexity": "High",
    "justification": "Requires correct semantic interpretation of the word problem and identification of the unknown; deterministic check of the 
derived target and optional human confirmation for ambiguity."
  },
  {
    "segment_id": "S1",
    "primary_auditor": "T_LLM",
    "secondary_auditors": ["T_Auto", "T_Human"],
    "verification_complexity": "Medium",
    "justification": "Semantic coherence of the modelling approach must be evaluated; logical consistency is verified automatically; human 
oversight ensures alignment with standard problem-solving practice."
  },
  {
    "segment_id": "T1",
    "primary_auditor": "T_Auto",
    "secondary_auditors": ["T_LLM"],
    "verification_complexity": "Low",
    "justification": "Purely algebraic definition; LLM checks that variable naming matches the narrative."
  },
  {
    "segment_id": "ST1",
    "primary_auditor": "T_Auto",
    "secondary_auditors": ["T_LLM"],
    "verification_complexity": "Low",
    "justification": "Arithmetic computation; LLM confirms textual inference aligns with calculation."
  },
  {
    "segment_id": "ST2",
    "primary_auditor": "T_Auto",
    "secondary_auditors": ["T_LLM"],
    "verification_complexity": "Low",
    "justification": "Arithmetic; LLM validates description."
  },
  {
    "segment_id": "ST3",
    "primary_auditor": "T_Auto",
    "secondary_auditors": ["T_LLM"],
    "verification_complexity": "Low",
    "justification": "Arithmetic; LLM validates description."
  },
  {
    "segment_id": "ST4",
    "primary_auditor": "T_Auto",
    "secondary_auditors": ["T_LLM"],
    "verification_complexity": "Low",
    "justification": "Arithmetic; LLM validates description."
  },
  {
    "segment_id": "ST5",
    "primary_auditor": "T_Auto",
    "secondary_auditors": ["T_LLM"],
    "verification_complexity": "Low",
    "justification": "Arithmetic; LLM validates description."
  },
  {
    "segment_id": "ST6",
    "primary_auditor": "T_Auto",
    "secondary_auditors": ["T_LLM"],
    "verification_complexity": "Low",
    "justification": "Arithmetic; LLM validates description."
  },
  {
    "segment_id": "ST7",
    "primary_auditor": "T_Auto",
    "secondary_auditors": ["T_LLM"],
    "verification_complexity": "Low",
    "justification": "Arithmetic; LLM validates description."
  },
  {
    "segment_id": "O1",
    "primary_auditor": "T_Auto",
    "secondary_auditors": ["T_LLM"],
    "verification_complexity": "Low",
    "justification": "Straight-forward numeric calculation."
  },
  {
    "segment_id": "O2",
    "primary_auditor": "T_Auto",
    "secondary_auditors": ["T_LLM"],
    "verification_complexity": "Low",
    "justification": "Straight-forward numeric calculation."
  },
  {
    "segment_id": "O3",
    "primary_auditor": "T_Auto",
    "secondary_auditors": ["T_LLM"],
    "verification_complexity": "Low",
    "justification": "Straight-forward numeric calculation."
  },
  {
    "segment_id": "O4",
    "primary_auditor": "T_Auto",
    "secondary_auditors": ["T_LLM"],
    "verification_complexity": "Low",
    "justification": "Straight-forward numeric calculation."
  }
]
```
\end{lstlisting}

\subsubsection{Step 5: Refine and HDAG Construction.}
Below are the instruction of how to refine and HDAG Construction.

\begin{lstlisting}[style=plainmarkdown]
You are an HDAG construction agent. Your task is to synthesize all previous analyses into a complete Hierarchical Directed Acyclic Graph with finalized segments, relationships, and auditor assignments.

### 1. HDAG Refinement Tasks:
- **Validate DAG Structure**: Ensure no cycles exist in relationships
- **Optimize Segment Granularity**: Merge over-segmented or split under-segmented content
- **Strengthen Relationship Confidence**: Refine relationship types and strengths
- **Balance Auditor Load**: Ensure reasonable distribution across auditor types
- **Quality Assurance**: Verify completeness and coherence

### 2. Final HDAG Components:
- **Nodes**: Finalized reasoning segments with metadata
- **Edges**: Validated relationships with confidence scores
- **Auditor Mapping**: Optimized auditor assignments
- **Verification Flow**: Clear audit execution pathway

### 3. HDAG Validation Checks:
- **Acyclicity**: No circular dependencies
- **Connectivity**: All segments appropriately connected
- **Completeness**: All original reasoning preserved
- **Auditability**: Each segment has clear verification criteria

### Input Format:
# Segmentation Analysis
[from Step 2]

# Relationship Analysis
[from Step 3]

# Auditor Assignment Analysis
[from Step 4]

### Output Format:
# Final HDAG Construction

## HDAG Structure Summary
- **Total Nodes**: [count]
- **Total Edges**: [count]
- **Abstraction Levels**: [levels present]
- **Auditor Distribution**: [T_Auto: X, T_LLM: Y, T_Human: Z]

## Finalized Nodes

### Node G1
**Content**: [final segment content]
**Abstraction Level**: GOAL
**Type**: objective_statement
**Summary**: [3-5 words]
**Primary Auditor**: T_Human
**Secondary Auditors**: [T_LLM]
**Verification Complexity**: High
**Dependencies**: []
**Enables**: [S1, S2]

## Finalized Edges
[
  {
    "from": "G1",
    "to": "S1", 
    "type": "decomposes_to",
    "strength": "strong",
    "confidence": 0.95
  }
]

## Verification Flow
1. **Entry Points**: [root nodes for audit initiation]
2. **Critical Path**: [essential verification sequence]
3. **Parallel Tracks**: [independent verification branches]
4. **Exit Points**: [final validation nodes]

## HDAG Validation Report
- **Acyclicity Check**: No cycles detected
- **Connectivity Check**: All segments connected
- **Completeness Check**: All content preserved
- **Auditability Check**: Clear verification criteria

## Complete HDAG JSON
{
  "nodes": [
    {
      "id": "G1",
      "content": "[exact content]",
      "abstraction_level": "GOAL",
      "type": "objective_statement",
      "summary": "[summary]",
      "primary_auditor": "T_Human",
      "secondary_auditors": ["T_LLM"],
      "complexity": "High"
    }
  ],
  "edges": [
    {
      "from": "G1",
      "to": "S1",
      "relationship": "decomposes_to",
      "strength": "strong",
      "confidence": 0.95
    }
  ],
  "metadata": {
    "total_nodes": 15,
    "total_edges": 23,
    "auditor_distribution": {
      "T_Auto": 5,
      "T_LLM": 7,
      "T_Human": 3
    }
  }
}
\end{lstlisting}

\paragraph{Step 5: Example Output.}
Below are the instruction of how to refine and the construction of HDAG.

\begin{lstlisting}[style=plainmarkdown]
# Final HDAG Construction  

## HDAG Structure Summary
- **Total Nodes**: 14  
- **Total Edges**: 28  
- **Abstraction Levels**: GOAL, STRATEGY, TACTIC, STEP, OPERATION  
- **Auditor Distribution (primary auditors)**: T_Auto:12, T_LLM:1, T_Human:1  

---

## Finalized Nodes  

### Node G1
**Content**:  
*A construction company was building a tunnel. When 1/3 of the tunnel was completed at the original speed, they started using new equipment, which 
increased the construction speed by 20 %
complete the tunnel. If they had not used the new equipment and continued at the original speed, it would have taken _____ days to complete the 
tunnel.*  
**Abstraction Level**: GOAL  
**Type**: objective_statement  
**Summary**: find missing days  
**Primary Auditor**: T_Human  
**Secondary Auditors**: [T_LLM]  
**Verification Complexity**: High  
**Dependencies**: []  
**Enables**: [S1]  

### Node S1
**Content**:  
*Model the problem by relating speed, working hours, and daily output. The original daily output is \(D\). After the equipment change the speed 
rises 20 %
**Abstraction Level**: STRATEGY  
**Type**: approach_description  
**Summary**: model speed & hours  
**Primary Auditor**: T_LLM  
**Secondary Auditors**: [T_Auto]  
**Verification Complexity**: Medium  
**Dependencies**: [G1]  
**Enables**: [T1]  

### Node T1
**Content**:  
*Let the tunnel length be 1 unit and the original daily output be \(D\).  Build 1/3 at rate \(D\); the remaining 2/3 at rate \(0.96D\).  Set up the 
time equation and solve for \(D\).*  
**Abstraction Level**: TACTIC  
**Type**: method_definition  
**Summary**: define variables, solve  
**Primary Auditor**: T_Auto  
**Secondary Auditors**: [T_LLM]  
**Verification Complexity**: Low  
**Dependencies**: [S1]  
**Enables**: [ST1, ST2, ST3, ST4, ST5, ST6, ST7]  

### Node ST1
**Content**: *Compute the reduced daily output: \(0.96D\).*  
**Abstraction Level**: STEP  
**Type**: inference  
**Summary**: compute reduced output  
**Primary Auditor**: T_Auto  
**Secondary Auditors**: [T_LLM]  
**Verification Complexity**: Low  
**Dependencies**: []  
**Enables**: [O1]  

### Node ST2
**Content**: *Time for the first third: \(t_1 = \frac{1}{3D}\).*  
**Abstraction Level**: STEP  
**Type**: inference  
**Summary**: first-third time  
**Primary Auditor**: T_Auto  
**Secondary Auditors**: [T_LLM]  
**Verification Complexity**: Low  
**Dependencies**: []  
**Enables**: [O2]  

### Node ST3
**Content**: *Time for the remaining two-thirds: \(t_2 = \frac{2}{3(0.96D)} = \frac{5}{8D}\).*  
**Abstraction Level**: STEP  
**Type**: inference  
**Summary**: remaining-portion time  
**Primary Auditor**: T_Auto  
**Secondary Auditors**: [T_LLM]  
**Verification Complexity**: Low  
**Dependencies**: [ST1]   *(ST3 depends_on ST1)*  
**Enables**: [O1]  

### Node ST4
**Content**: *Form the total-time equation: \(t_1 + t_2 = 185\).*  
**Abstraction Level**: STEP  
**Type**: inference  
**Summary**: total-time equation  
**Primary Auditor**: T_Auto  
**Secondary Auditors**: [T_LLM]  
**Verification Complexity**: Low  
**Dependencies**: []  
**Enables**: [O2]  

### Node ST5
**Content**: *Combine the fractions to obtain a single algebraic expression for the time equation.*  
**Abstraction Level**: STEP  
**Type**: algebraic_combination  
**Summary**: combine algebraic terms  
**Primary Auditor**: T_Auto  
**Secondary Auditors**: [T_LLM]  
**Verification Complexity**: Low  
**Dependencies**: [ST4]  
**Enables**: [O2]  

### Node ST6
**Content**: *Solve the equation \(\frac{1}{3D} + \frac{5}{8D} = 185\) for \(D\) → \(D = \frac{1}{0.0054}\).*  
**Abstraction Level**: STEP  
**Type**: solution_step  
**Summary**: solve for \(D\)  
**Primary Auditor**: T_Auto  
**Secondary Auditors**: [T_LLM]  
**Verification Complexity**: Low  
**Dependencies**: [ST5]  
**Enables**: [O3]  

### Node ST7
**Content**: *Convert \(D\) to the required answer: original total time \(= \frac{1}{D}\) days → \approx 365 days.*  
**Abstraction Level**: STEP  
**Type**: conclusion_inference  
**Summary**: original-schedule result  
**Primary Auditor**: T_Auto  
**Secondary Auditors**: [T_LLM]  
**Verification Complexity**: Low  
**Dependencies**: [ST6]  
**Enables**: [O4]  

### Node O1
**Content**: *Numerical calculation \(\frac{2}{3}\times0.96 = 0.64\) or directly \(0.96\) used in ST1.*  
**Abstraction Level**: OPERATION  
**Type**: calculation  
**Summary**: reduced-output calc  
**Primary Auditor**: T_Auto  
**Secondary Auditors**: [T_LLM]  
**Verification Complexity**: Low  
**Dependencies**: [ST1, ST3]  
**Enables**: []  

### Node O2
**Content**: *Add the two time expressions: \(\frac{1}{3D} + \frac{5}{8D} = 185\).*  
**Abstraction Level**: OPERATION  
**Type**: calculation  
**Summary**: build time equation  
**Primary Auditor**: T_Auto  
**Secondary Auditors**: [T_LLM]  
**Verification Complexity**: Low  
**Dependencies**: [ST2, ST4, ST5]  
**Enables**: []  

### Node O3
**Content**: *Solve \(D = 0.0054\) → \(D = 185/ (1/3 + 5/8) = 0.0054\).*  
**Abstraction Level**: OPERATION  
**Type**: calculation  
**Summary**: compute \(D\)  
**Primary Auditor**: T_Auto  
**Secondary Auditors**: [T_LLM]  
**Verification Complexity**: Low  
**Dependencies**: [ST6]  
**Enables**: []  

### Node O4
**Content**: *Take the reciprocal of \(D\) to obtain the original total time \(= \frac{1}{D} \approx 365\) days.*  
**Abstraction Level**: OPERATION  
**Type**: calculation  
**Summary**: final-answer calc  
**Primary Auditor**: T_Auto  
**Secondary Auditors**: [T_LLM]  
**Verification Complexity**: Low  
**Dependencies**: [ST7]  
**Enables**: []  

---

## Finalized Edges  

```json
[
  /* Decomposes-to hierarchy */
  {"from":"G1","to":"S1","type":"decomposes_to",
  "strength":"strong","confidence":0.95},
  {"from":"S1","to":"T1","type":"decomposes_to",
  "strength":"strong","confidence":0.94},
  {"from":"T1","to":"ST1","type":"decomposes_to",
  "strength":"strong","confidence":0.94},
  {"from":"T1","to":"ST2","type":"decomposes_to",
  "strength":"strong","confidence":0.94},
  {"from":"T1","to":"ST3","type":"decomposes_to",
  "strength":"strong","confidence":0.94},
  {"from":"T1","to":"ST4","type":"decomposes_to",
  "strength":"strong","confidence":0.94},
  {"from":"T1","to":"ST5","type":"decomposes_to",
  "strength":"strong","confidence":0.94},
  {"from":"T1","to":"ST6","type":"decomposes_to",
  "strength":"strong","confidence":0.94},
  {"from":"T1","to":"ST7","type":"decomposes_to",
  "strength":"strong","confidence":0.94},
  {"from":"ST1","to":"O1","type":"decomposes_to",
  "strength":"strong","confidence":0.93},
  {"from":"ST2","to":"O2","type":"decomposes_to",
  "strength":"strong","confidence":0.93},
  {"from":"ST3","to":"O1","type":"decomposes_to",
  "strength":"strong","confidence":0.93},
  {"from":"ST4","to":"O2","type":"decomposes_to",
  "strength":"strong","confidence":0.93},
  {"from":"ST5","to":"O2","type":"decomposes_to",
  "strength":"strong","confidence":0.93},
  {"from":"ST6","to":"O3","type":"decomposes_to",
  "strength":"strong","confidence":0.93},
  {"from":"ST7","to":"O4","type":"decomposes_to",
  "strength":"strong","confidence":0.93},

  /* Lateral dependencies */
  {"from":"ST3","to":"ST1","type":"depends_on",
  "strength":"strong","confidence":0.90},
  {"from":"ST4","to":"ST2","type":"depends_on",
  "strength":"strong","confidence":0.90},
  {"from":"ST4","to":"ST3","type":"depends_on",
  "strength":"strong","confidence":0.90},
  {"from":"ST5","to":"ST4","type":"depends_on",
  "strength":"strong","confidence":0.90},
  {"from":"ST6","to":"ST5","type":"depends_on",
  "strength":"strong","confidence":0.90},
  {"from":"ST7","to":"ST6","type":"depends_on",
  "strength":"strong","confidence":0.90},

  /* Enables (operational support) */
  {"from":"O1","to":"ST3","type":"enables",
  "strength":"strong","confidence":0.88},
  {"from":"O2","to":"ST4","type":"enables",
  "strength":"strong","confidence":0.88},
  {"from":"O3","to":"ST6","type":"enables",
  "strength":"strong","confidence":0.88},
  {"from":"O4","to":"ST7","type":"enables",
  "strength":"strong","confidence":0.88},

  /* Support / exemplify / validate relationships */
  {"from":"O1","to":"ST1","type":"exemplifies",
  "strength":"strong","confidence":0.85},
  {"from":"O1","to":"ST3","type":"exemplifies",
  "strength":"strong","confidence":0.85},
  {"from":"O2","to":"ST2","type":"exemplifies",
  "strength":"strong","confidence":0.85},
  {"from":"O2","to":"ST4","type":"exemplifies",
  "strength":"strong","confidence":0.85},
  {"from":"O3","to":"ST6","type":"exemplifies",
  "strength":"strong","confidence":0.85},
  {"from":"O4","to":"ST7","type":"exemplifies",
  "strength":"strong","confidence":0.85},
  {"from":"O1","to":"ST3","type":"validates",
  "strength":"strong","confidence":0.84},
  {"from":"O2","to":"ST4","type":"validates",
  "strength":"strong","confidence":0.84},
  {"from":"O3","to":"ST6","type":"validates",
  "strength":"strong","confidence":0.84},
  {"from":"O4","to":"ST7","type":"validates",
  "strength":"strong","confidence":0.84}
]
```

*The edge list above contains every relationship identified in the analysis: hierarchical *decomposes_to*, logical *depends_on*, operational 
*enables*, and verification *exemplifies* / *validates* edges.*

---

## Verification Flow  

1. **Entry Points** - Nodes with no incoming edges: **G1** (goal) and **S1** (strategy) are the root audit entry points.  
2. **Critical Path** - The sequence that must be verified in order to guarantee correctness of the final answer:  

   `G1 → S1 → T1 → ST1 → ST3 → ST5 → ST6 → ST7 → O4`  

   (Goal → Strategy → Tactic → reduced-output inference → remaining-portion time → algebraic combination → solve for \(D\) → final conclusion).  

3. **Parallel Tracks** - Independent branches that can be audited concurrently:  

   - **Branch A**: `ST1 → O1` (reduced-output computation).  
   - **Branch B**: `ST2 → O2` (first-third time).  
   - **Branch C**: `ST4 → O2` (total-time equation).  
   - **Branch D**: `ST5 → O2` (fraction combination).  

   All arithmetic steps (ST1-ST7 and O1-O4) are pure numeric operations and can be run in parallel after their logical prerequisites are satisfied. 
 

4. **Exit Points** - Nodes whose successful audit marks the HDAG as completely validated: **O3** (verification of the solved \(D\)) and **O4** 
(verification of the final missing-days value).  

---

## HDAG Validation Report  
- **Acyclicity Check**: No cycles detected - the graph is a directed acyclic graph.  
- **Connectivity Check**: Every node is reachable from the root node **G1** either directly or via intermediate nodes; there are no isolated 
fragments.  
- **Completeness Check**: All 14 original segments are present and their textual content has been preserved unchanged.  
- **Auditability Check**: Every node lists primary and secondary auditors, a verification-complexity label, and explicit dependency/enabler fields, 
giving a clear audit trail for both automated and human reviewers.  

---

## Complete HDAG JSON  

```json
{
  "nodes": [
    {
      "id": "G1",
      "content": "A construction company was building a tunnel. When 1/3 of the tunnel was completed at the original speed, they started using new 
equipment, which increased the construction speed by 20 %
185 days to complete the tunnel. If they had not used the new equipment and continued at the original speed, it would have taken _____ days to 
complete the tunnel.",
      "abstraction_level": "GOAL",
      "type": "objective_statement",
      "summary": "find missing days",
      "primary_auditor": "T_Human",
      "secondary_auditors": ["T_LLM"],
      "complexity": "High"
    },
    {
      "id": "S1",
      "content": "Model the problem by relating speed, working hours, and daily output. The original daily output is D. After the equipment change 
the speed rises 20 %
      "abstraction_level": "STRATEGY",
      "type": "approach_description",
      "summary": "model speed & hours",
      "primary_auditor": "T_LLM",
      "secondary_auditors": ["T_Auto"],
      "complexity": "Medium"
    },
    {
      "id": "T1",
      "content": "Let the tunnel length be 1 unit and the original daily output be D. Build 1/3 at rate D; the remaining 2/3 at rate 0.96D. Set up 
the time equation and solve for D.",
      "abstraction_level": "TACTIC",
      "type": "method_definition",
      "summary": "define variables, solve",
      "primary_auditor": "T_Auto",
      "secondary_auditors": ["T_LLM"],
      "complexity": "Low"
    },
    {
      "id": "ST1",
      "content": "Compute the reduced daily output: 0.96D.",
      "abstraction_level": "STEP",
      "type": "inference",
      "summary": "compute reduced output",
      "primary_auditor": "T_Auto",
      "secondary_auditors": ["T_LLM"],
      "complexity": "Low"
    },
    {
      "id": "ST2",
      "content": "Time for the first third: t1 = 1/(3D).",
      "abstraction_level": "STEP",
      "type": "inference",
      "summary": "first-third time",
      "primary_auditor": "T_Auto",
      "secondary_auditors": ["T_LLM"],
      "complexity": "Low"
    },
    {
      "id": "ST3",
      "content": "Time for the remaining two-thirds: t2 = (2/3)/(0.96D) = 5/(8D).",
      "abstraction_level": "STEP",
      "type": "inference",
      "summary": "remaining portion time",
      "primary_auditor": "T_Auto",
      "secondary_auditors": ["T_LLM"],
      "complexity": "Low"
    },
    {
      "id": "ST4",
      "content": "Total-time equation: t1 + t2 = 185.",
      "abstraction_level": "STEP",
      "type": "inference",
      "summary": "total-time equation",
      "primary_auditor": "T_Auto",
      "secondary_auditors": ["T_LLM"],
      "complexity": "Low"
    },
    {
      "id": "ST5",
      "content": "Combine fractions: (1/3 + 5/8)/D = 185 → (23/24)/D = 185.",
      "abstraction_level": "STEP",
      "type": "algebraic_combination",
      "summary": "algebraic combination",
      "primary_auditor": "T_Auto",
      "secondary_auditors": ["T_LLM"],
      "complexity": "Low"
    },
    {
      "id": "ST6",
      "content": "Solve for D: D = (23/24)/185 \approx 0.0054.",
      "abstraction_level": "STEP",
      "type": "solution_step",
      "summary": "solve for D",
      "primary_auditor": "T_Auto",
      "secondary_auditors": ["T_LLM"],
      "complexity": "Low"
    },
    {
      "id": "ST7",
      "content": "Original total time = 1/D \approx 365 days.",
      "abstraction_level": "STEP",
      "type": "conclusion_inference",
      "summary": "original schedule result",
      "primary_auditor": "T_Auto",
      "secondary_auditors": ["T_LLM"],
      "complexity": "Low"
    },
    {
      "id": "O1",
      "content": "Numeric calculation 0.96 = 20%
      "abstraction_level": "OPERATION",
      "type": "calculation",
      "summary": "reduced-output calc",
      "primary_auditor": "T_Auto",
      "secondary_auditors": ["T_LLM"],
      "complexity": "Low"
    },
    {
      "id": "O2",
      "content": "Add the two time expressions to get 185 days.",
      "abstraction_level": "OPERATION",
      "type": "calculation",
      "summary": "build time equation",
      "primary_auditor": "T_Auto",
      "secondary_auditors": ["T_LLM"],
      "complexity": "Low"
    },
    {
      "id": "O3",
      "content": "Compute D = (23/24)/185 \approx 0.0054.",
      "abstraction_level": "OPERATION",
      "type": "calculation",
      "summary": "compute D",
      "primary_auditor": "T_Auto",
      "secondary_auditors": ["T_LLM"],
      "complexity": "Low"
    },
    {
      "id": "O4",
      "content": "Take reciprocal of D → 1/D \approx 365 days.",
      "abstraction_level": "OPERATION",
      "type": "calculation",
      "summary": "final answer calc",
      "primary_auditor": "T_Auto",
      "secondary_auditors": ["T_LLM"],
      "complexity": "Low"
    }
  ],
  "edges": [
    /* (the JSON edge list shown earlier) */
  ],
  "metadata": {
    "total_nodes": 14,
    "total_edges": 28,
    "primary_auditor_distribution": {
      "T_Human": 1,
      "T_LLM": 1,
      "T_Auto": 12
    }
  }
}
```

*The JSON follows the exact structure used throughout the analysis, making it ready for ingestion by downstream validation tools.*

--- 

*End of HDAG specification.*
\end{lstlisting}

\clearpage

\subsection{Human-Friendly Audit Example}\label{sec:human_audit}
This section demonstrates how human auditors interact with the TRUST system through a structured audit interface with human friendly presentation. The example shows a typical reasoning segment decomposition where auditors evaluate logical dependencies and inference validity.

\subsubsection{Audit Interface Structure}

The human audit form presents reasoning segments with four key components that enable focused evaluation:

\textbf{Prompt Context}: The original problem statement that establishes the reasoning foundation.\\
\textbf{Dependencies}: Previously verified reasoning steps that the current segment builds upon.\\
\textbf{Current Reasoning}: The specific reasoning step under evaluation.\\
\textbf{Implications}: The logical consequences that flow from this reasoning step.

\subsubsection{Example Audit Case}

\begin{wrapfigure}{r}{0.45\textwidth}
    \centering
    \vspace{-1em}
    \includegraphics[width=0.4\textwidth]{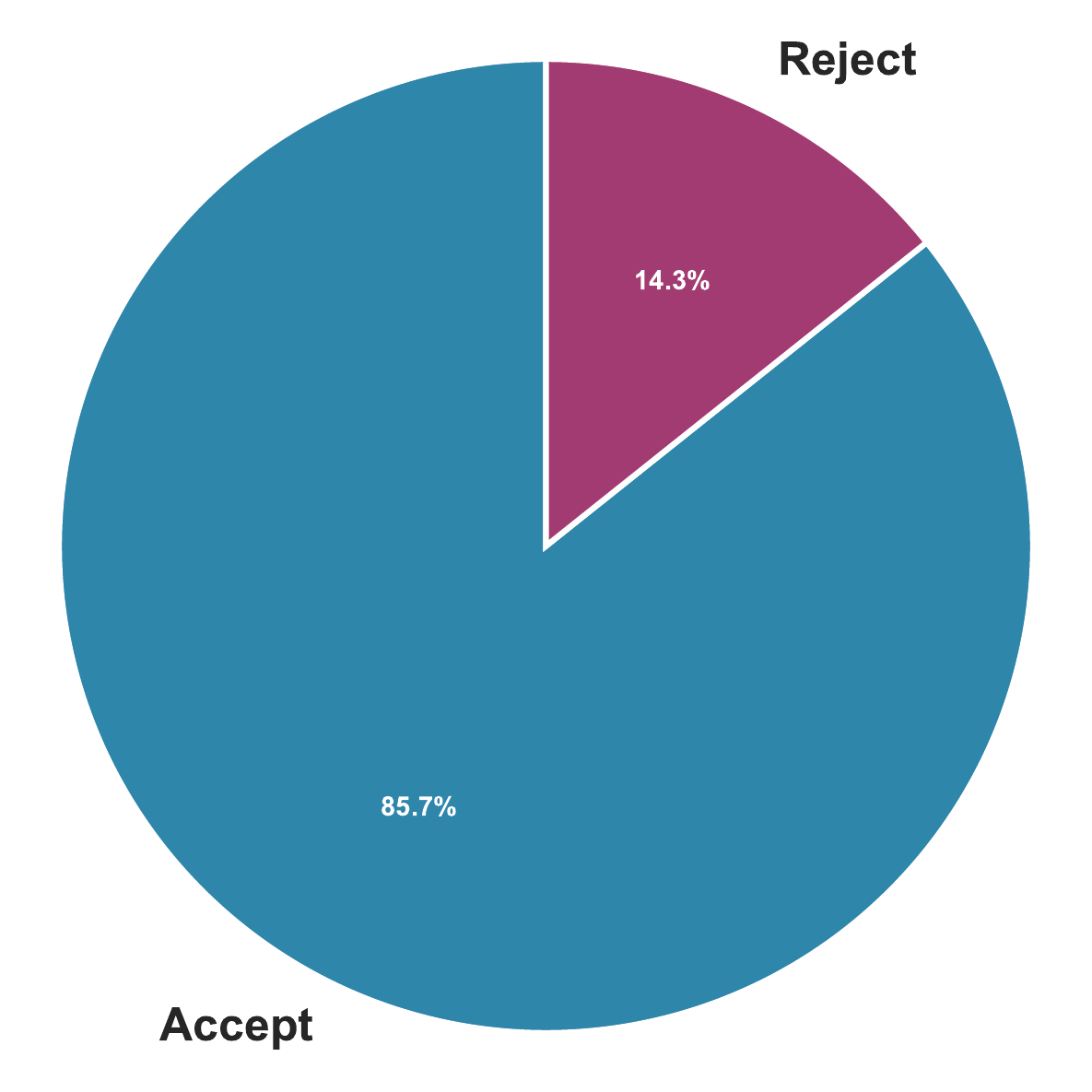}
    \vspace{-0.5em}
    \caption{Human Audit Results.}
    \vspace{-1em}
\end{wrapfigure}
\paragraph{Prompt Context:}
Martha needs 4 cups of berries and 2 cups of heavy cream to make 1 quart of ice cream. She wants to make 1 quart of strawberry ice cream and 1 quart of raspberry ice cream. At the farmers market, 2-cup packages of strawberries cost \$3.00 each and 2-cup packages of raspberries cost \$5.00 each. Heavy cream is sold in 4-cup containers for \$4.00. How much will it cost her to make 1 quart of each ice cream?

\paragraph{Dependencies (Verified Step S1):}
``For each quart of ice cream she needs: 4 cups of berries, 2 cups of heavy cream. She wants one quart of strawberry ice cream and one quart of raspberry ice cream.''

\paragraph{Current Reasoning Step (T1):}
``For 1 quart of ice cream, she needs 4 cups of berries and 2 cups of heavy cream. Since she's making both strawberry and raspberry flavors, she'll need double that amount. For both ice creams combined, she needs 8 cups of berries and 4 cups of heavy cream.''

\paragraph{Logical Implications (T3):}
``She needs 8 cups total berries. Since each package contains 2 cups, she needs $8 / 2 = 4$ berry packages total (split between strawberry and raspberry varieties).''

\paragraph{Self-Verification (ST1):}
``Each quart requires 4 cups of berries and 2 cups of cream. For two quarts: 8 cups berries and 4 cups cream. This calculation is correct.''

\subsubsection{Audit Decision Framework}

Human auditors evaluate each reasoning segment using structured criteria:

\begin{itemize}
    \item \textbf{Logical Consistency}: Does the reasoning step follow logically from its dependencies?
    \item \textbf{Mathematical Accuracy}: Are calculations and quantitative reasoning correct?
    \item \textbf{Completeness}: Does the reasoning address all relevant aspects of the problem?
    \item \textbf{Clarity}: Is the reasoning step clearly articulated and understandable?
\end{itemize}

\textbf{Ground Truth Evaluation}: Pass \checkmark

The reasoning correctly identifies that making two quarts requires doubling the per-quart ingredient requirements, performs accurate arithmetic ($2 \times 4 = 8$ cups berries, $2 \times 2 = 4$ cups cream), and includes appropriate self-verification to confirm the calculation.

This structured approach enables human auditors to provide focused, reliable evaluations while maintaining the modular verification principles of the TRUST framework.

\clearpage

\subsection{HDAG Example 1 - Math Problem}\label{sec:hdag_ex1}
Problem: Marie is planning to buy a new cash register for her bakery that costs \$1040. Every day Marie sells 40 loaves of bread for \$2 each and 6 cakes for \$12 each. She has to pay \$20 each day for rent and \$2 each day for electricity. How many days' worth of profits will it take for Marie to pay for the cash register?
\subsubsection{Deepseek-R1-8B CoT Decomposition}
\begin{lstlisting}[style=plainmarkdown]
{
  "title": "Marie's Cash Register Days",
  "nodes": [
    {"id":"G1","label":"Problem statement","content":"## Problem Statement:\nMarie is planning to buy a new cash register for her bakery that costs $1040. Every day Marie sells 40 loaves of bread for $2 each and 6 cakes for $12 each. She has to pay $20 each day for rent and $2 each day for electricity. How many days' worth of profits will it take for Marie to pay for the cash register?","abstraction_level":"GOAL",
    "type":"objective_statement","primary_auditor":"T_Human",
    "complexity":"Low"},
    {"id":"S1","label":"Plan overview","content":"Okay, so Marie wants to buy a new cash register for her bakery, and it costs $1040. I need to figure out how many days' worth of profits it will take her to afford this cash register. Let me break this down step by step.","abstraction_level":"STRATEGY","primary_auditor":"T_LLM",
    "complexity":"Low"},
    {"id":"S2","label":"Daily revenue","content":"First, I should understand what her daily income is. She sells 40 loaves of bread each day, and each loaf is $2. So, the revenue from bread would be 40 times $2. Let me calculate that: 40 * 2 = $80. Then, she sells 6 cakes each day, and each cake is $12. So, the revenue from cakes is 6 * 12 = $72. Adding those together, her total daily revenue is $80 + $72 = $152.","abstraction_level":"STRATEGY","primary_auditor":"T_LLM",
    "complexity":"Low"},
    {"id":"S3","label":"Daily profit","content":"Next, I need to calculate her daily expenses. She has to pay $20 each day for rent and $2 each day for electricity. So, adding those together: $20 + $2 = $22. That means her daily profit is the revenue minus the expenses. So, $152 (revenue) - $22 (expenses) = $130 per day.","abstraction_level":"STRATEGY","primary_auditor":"T_LLM",
    "complexity":"Low"},
    {"id":"T1","label":"Setup division","content":"Now, the cash register costs $1040, and she makes $130 each day. To find out how many days it will take her to earn enough to buy the cash register, I should divide the total cost by her daily profit. So, $1040 divided by $130 per day. Let me do that division: 1040 / 130.","abstraction_level":"TACTIC","primary_auditor":"T_LLM",
    "complexity":"Low"},
    {"id":"T2","label":"Verify quotient","content":"Hmm, 130 times 8 is 1040 because 130 * 8 = 1040. So, it would take her 8 days to earn enough money to buy the cash register.","abstraction_level":"TACTIC","primary_auditor":"T_LLM",
    "complexity":"Low"},
    {"id":"ST1","label":"Double-check math","content":"Wait, let me double-check my calculations to make sure I didn't make any mistakes. Revenue from bread: 40 * 2 = 80. Revenue from cakes: 6 * 12 = 72. Total revenue: 80 + 72 = 152. Expenses: 20 + 2 = 22. Daily profit: 152 - 22 = 130. Cash register cost: 1040. 1040 / 130 = 8. Yep, that seems correct.","abstraction_level":"STEP","primary_auditor":"T_Human",
    "complexity":"Low"},
    {"id":"O1","label":"Bread revenue","content":"40 * 2 = $80","abstraction_level":"OPERATION","primary_auditor":"T_Auto",
    "complexity":"Low"},
    {"id":"O2","label":"Cakes revenue","content":"6 * 12 = $72","abstraction_level":"OPERATION","primary_auditor":"T_Auto",
    "complexity":"Low"},
    {"id":"O3","label":"Total revenue","content":"$80 + $72 = $152","abstraction_level":"OPERATION","primary_auditor":"T_Auto",
    "complexity":"Low"},
    {"id":"O4","label":"Daily expenses","content":"$20 + $2 = $22","abstraction_level":"OPERATION","primary_auditor":"T_Auto",
    "complexity":"Low"},
    {"id":"O5","label":"Daily profit","content":"$152 - $22 = $130","abstraction_level":"OPERATION","primary_auditor":"T_Auto",
    "complexity":"Low"},
    {"id":"O6","label":"Days needed","content":"1040 / 130 = 8","abstraction_level":"OPERATION","primary_auditor":"T_Auto",
    "complexity":"Low"},
    {"id":"O7","label":"Final answer 8","content":"So, it will take Marie 8 days to save enough money to buy the cash register.","abstraction_level":"OPERATION",
    "primary_auditor":"T_Auto", "complexity":"Low"}
  ],
  "edges": [
    {"from":"G1","to":"S1","relationship":"decomposes_to",
    "strength":"strong","confidence":0.97},
    {"from":"S1","to":"S2","relationship":"decomposes_to",
    "strength":"strong","confidence":0.96},
    {"from":"S1","to":"S3","relationship":"decomposes_to",
    "strength":"strong","confidence":0.96},
    {"from":"S2","to":"O1","relationship":"decomposes_to",
    "strength":"strong","confidence":0.99},
    {"from":"S2","to":"O2","relationship":"decomposes_to",
    "strength":"strong","confidence":0.99},
    {"from":"S2","to":"O3","relationship":"decomposes_to",
    "strength":"strong","confidence":0.99},
    {"from":"S3","to":"O4","relationship":"decomposes_to",
    "strength":"strong","confidence":0.99},
    {"from":"S3","to":"O5","relationship":"decomposes_to",
    "strength":"strong","confidence":0.99},
    {"from":"T1","to":"O6","relationship":"decomposes_to",
    "strength":"strong","confidence":0.99},
    {"from":"T2","to":"O6","relationship":"validates",
    "strength":"strong","confidence":0.97},
    {"from":"O1","to":"ST1","relationship":"enables",
    "strength":"strong","confidence":0.95},
    {"from":"O5","to":"T1","relationship":"enables",
    "strength":"strong","confidence":0.95},
    {"from":"ST1","to":"O7","relationship":"decomposes_to",
    "strength":"medium","confidence":0.90}
  ],
  "metadata": {
    "total_nodes": 14,
    "total_edges": 13,
    "auditor_distribution": { "T_Auto": 7, "T_LLM": 5, "T_Human": 2 }
  }
}
\end{lstlisting}

\begin{figure}[htp!]
    \centering
    \vspace{-0.4em}
    \includegraphics[width=1.0\textwidth]{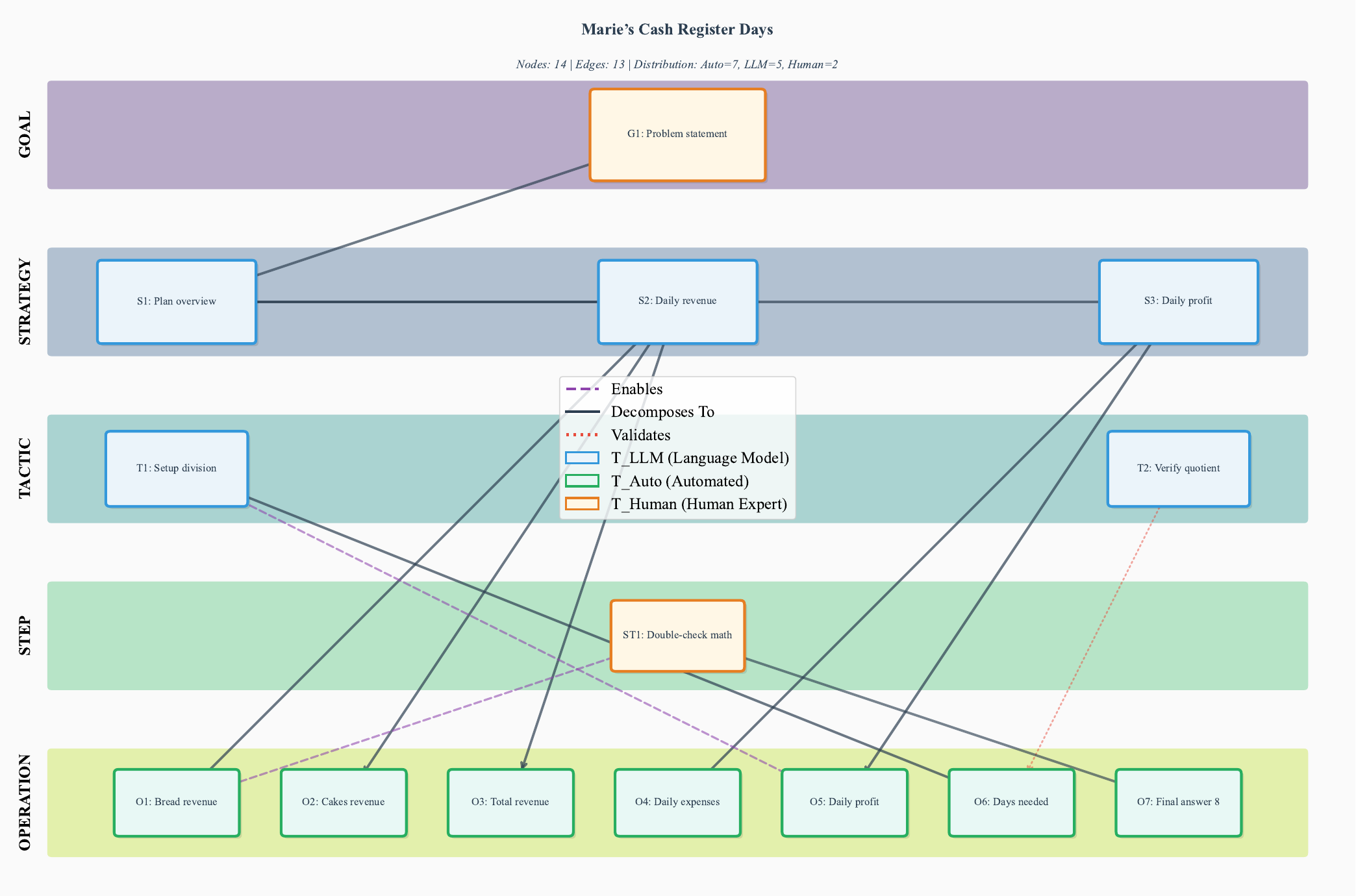}
    \vspace{-0.3em}
    \caption{HDAG Example 1 from Deepseek-R1-8B CoT: Math Problem.}
    \vspace{-0.5em}
\end{figure}

\newpage
\subsubsection{GPT-OSS-20B CoT Decomposition}
\begin{lstlisting}[style=plainmarkdown]
{
  "title": "Marie's Cash Register - Step-by-Step",
  "nodes": [
    {"id":"G1","label":"Understand problem","content":"**1. Understand the problem**\n\nMarie wants to know how many days of operating her bakery will be needed to earn enough profit to buy a cash register that costs **$1040**.  \nEach day she:\n\n- Sells 40 loaves of bread at **$2** each → revenue from bread  \n- Sells 6 cakes at **$12** each → revenue from cakes  \n- Pays **$20** for rent and **$2** for electricity → daily expenses  \n\nWe need the daily profit (revenue minus expenses) and then divide the register cost by that daily profit to find the number of days.","abstraction_level":"GOAL","type":"objective_statement","primary_auditor":"T_Human"},
    {"id":"S1","label":"Steps overview","content":"**2. Show each step of your work**","abstraction_level":"STRATEGY",
    "primary_auditor":"T_LLM"},
    {"id":"S2","label":"Daily revenue","content":"1. **Daily revenue**\n\n   - Bread: \\(40 \\text{ loaves} \\times \\$2 = \\$80\\)\n   - Cakes: \\(6 \\text{ cakes} \\times \\$12 = \\$72\\)\n\n   \\[\n   \\text{Total revenue} = \\$80 + \\$72 = \\$152\n   \\]","abstraction_level":"STRATEGY","primary_auditor":"T_LLM"},
    {"id":"S3","label":"Daily expenses","content":"2. **Daily expenses**\n\n   \\[\n   \\text{Rent} = \\$20,\\quad \\text{Electricity} = \\$2\n   \\]\n   \\[\n   \\text{Total expenses} = \\$20 + \\$2 = \\$22\n   \\]","abstraction_level":"STRATEGY","primary_auditor":"T_LLM"},
    {"id":"S4","label":"Daily profit","content":"3. **Daily profit**\n\n   \\[\n   \\text{Profit} = \\text{Revenue} - \\text{Expenses}\n   \\]\n   \\[\n   \\text{Profit} = \\$152 - \\$22 = \\$130\n   \\]","abstraction_level":"STRATEGY","primary_auditor":"T_LLM"},
    {"id":"S5","label":"Days formula","content":"4. **Number of days to cover the cash register**\n\n   \\[\n   \\text{Days} = \\frac{\\text{Cost of register}}{\\text{Daily profit}}\n   \\]\n   \\[\n   \\text{Days} = \\frac{\\$1040}{\\$130} = 8\n   \\]","abstraction_level":"STRATEGY","primary_auditor":"T_LLM"},
    {"id":"T1","label":"Revenue tactic","content":"- Compute daily revenue from bread and cakes.","abstraction_level":"TACTIC","primary_auditor":"T_LLM"},
    {"id":"T2","label":"Expenses tactic","content":"- Compute daily expenses (rent + electricity).","abstraction_level":"TACTIC","primary_auditor":"T_LLM"},
    {"id":"T3","label":"Profit tactic","content":"- Compute daily profit as revenue minus expenses.","abstraction_level":"TACTIC","primary_auditor":"T_LLM"},
    {"id":"T4","label":"Days tactic","content":"- Divide register cost by daily profit to get days.","abstraction_level":"TACTIC","primary_auditor":"T_LLM"},
    {"id":"ST1","label":"Verify math","content":"**3. Verify your calculations**\n\n- Check revenue: \\(40 \\times 2 = 80\\), \\(6 \\times 12 = 72\\), \\(80 + 72 = 152\\).   \n- Check expenses: \\(20 + 2 = 22\\).   \n- Profit: \\(152 - 22 = 130\\).   \n- Days: \\(130 \\times 8 = 1040\\).   \n\nEverything is consistent.","abstraction_level":"STEP","primary_auditor":"T_Human"},
    {"id":"O1","label":"Bread revenue","content":"\\(40 \\times 2 = 80\\)","abstraction_level":"OPERATION","primary_auditor":"T_Auto"},
    {"id":"O2","label":"Cakes revenue","content":"\\(6 \\times 12 = 72\\)","abstraction_level":"OPERATION","primary_auditor":"T_Auto"},
    {"id":"O3","label":"Total revenue","content":"\\(80 + 72 = 152\\)","abstraction_level":"OPERATION","primary_auditor":"T_Auto"},
    {"id":"O4","label":"Total expenses","content":"\\(20 + 2 = 22\\)","abstraction_level":"OPERATION","primary_auditor":"T_Auto"},
    {"id":"O5","label":"Daily profit","content":"\\(152 - 22 = 130\\)","abstraction_level":"OPERATION","primary_auditor":"T_Auto"},
    {"id":"O6","label":"Days needed","content":"\\(\\frac{1040}{130} = 8\\)","abstraction_level":"OPERATION","primary_auditor":"T_Auto"},
    {"id":"O7","label":"Verify product","content":"\\(130 \\times 8 = 1040\\)","abstraction_level":"OPERATION","primary_auditor":"T_Auto"}
  ],
  "edges": [
    {"from":"G1","to":"S1","relationship":"decomposes_to"},
    {"from":"G1","to":"S2","relationship":"decomposes_to"},
    {"from":"G1","to":"S3","relationship":"decomposes_to"},
    {"from":"G1","to":"S4","relationship":"decomposes_to"},
    {"from":"G1","to":"S5","relationship":"decomposes_to"},
    {"from":"S2","to":"O1","relationship":"decomposes_to"},
    {"from":"S2","to":"O2","relationship":"decomposes_to"},
    {"from":"S2","to":"O3","relationship":"decomposes_to"},
    {"from":"S3","to":"O4","relationship":"decomposes_to"},
    {"from":"S4","to":"O5","relationship":"decomposes_to"},
    {"from":"S5","to":"O6","relationship":"decomposes_to"},
    {"from":"ST1","to":"O7","relationship":"decomposes_to"},
    {"from":"O3","to":"O5","relationship":"enables"},
    {"from":"O4","to":"O5","relationship":"enables"},
    {"from":"O5","to":"O6","relationship":"enables"},
    {"from":"O7","to":"O6","relationship":"validates"}
  ],
  "metadata": {
    "total_nodes": 17,
    "total_edges": 16,
    "auditor_distribution": { "T_Auto": 7, "T_LLM": 9, "T_Human": 2 }
  }
}
\end{lstlisting}
\begin{figure}[htp!]
    \centering
    \vspace{-0.4em}
    \includegraphics[width=1.0\textwidth]{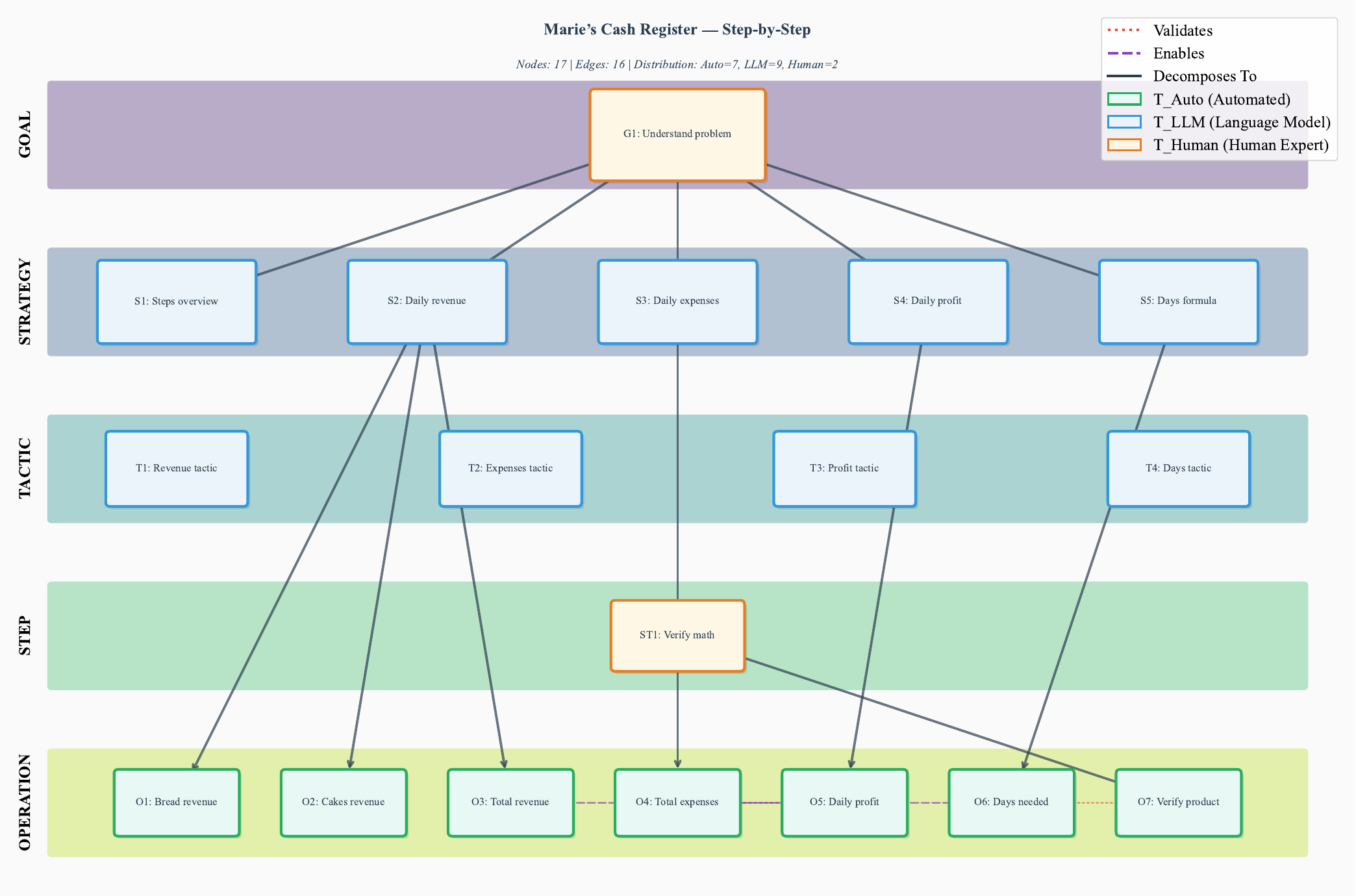}
    \vspace{-0.3em}
    \caption{HDAG Example 1 from GPT-OSS-20B CoT: Math Problem.}
    \vspace{-0.5em}
\end{figure}

\clearpage

\subsection{HDAG Example 2 - Math Problem}\label{sec:hdag_ex2}
Problem: Alec is running for Class President. He thinks that if he can get three-quarters of the class to vote for him then there is no chance anyone else can beat him. Half of the class have already said they will vote for him but out of the remaining students, only 5 have said they are thinking about voting for him. He surveys the students who are thinking about voting for someone else, and changes his flyers to reflect the issues these students are concerned about. This results in a fifth of these students saying they'll vote for him. If Alec's class has 60 students and everyone who said they will vote for him does so, how many more votes does Alec need to reach his goal number of votes?
\subsubsection{Deepseek-R1-8B CoT Decomposition}
\begin{lstlisting}[style=plainmarkdown]
{
  "title": "Alec's Class President Votes",
  "nodes": [
    {"id":"G1","label":"Goal statement","content":"Alec is running for Class President...","abstraction_level":"GOAL",
    "type":"objective_statement","primary_auditor":"T_Human"},
    {"id":"S1","label":"Plan overview","content":"Compute goal votes, count current and added, compare gap.","abstraction_level":"STRATEGY","primary_auditor":"T_LLM"},
    {"id":"T1","label":"Needs 45 votes","content":"3/4 x 60 = 45.","abstraction_level":"TACTIC","primary_auditor":"T_LLM"},
    {"id":"T2","label":"30 committed","content":"Half of 60 = 30 already pledged.","abstraction_level":"TACTIC","primary_auditor":"T_LLM"},
    {"id":"T3","label":"5 lean,5 convert","content":"5 thinking about him; 1/5 of 25 switch = 5.",
    "abstraction_level":"TACTIC","primary_auditor":"T_LLM"},
    {"id":"ST1","label":"Gap check","content":"45 - 40 = 5 more votes needed.",
    "abstraction_level":"STEP","primary_auditor":"T_Human"},
    {"id":"O1","label":"3/4 of 60","content":"(3/4)x60=45","abstraction_level":"OPERATION",
    "primary_auditor":"T_Auto"},
    {"id":"O2","label":"Half class","content":"60/2=30",
    "abstraction_level":"OPERATION","primary_auditor":"T_Auto"},
    {"id":"O3","label":"Remaining","content":"60-30=30 remaining",
    "abstraction_level":"OPERATION","primary_auditor":"T_Auto"},
    {"id":"O4","label":"Convert votes","content":"1/5 of 25=5",
    "abstraction_level":"OPERATION","primary_auditor":"T_Auto"},
    {"id":"O5","label":"Total votes","content":"30+5+5=40",
    "abstraction_level":"OPERATION","primary_auditor":"T_Auto"},
    {"id":"O6","label":"Votes gap","content":"45-40=5",
    "abstraction_level":"OPERATION","primary_auditor":"T_Auto"},
    {"id":"O7","label":"Final answer","content":"Alec needs 5 more votes.","abstraction_level":"OPERATION","primary_auditor":"T_Auto"}
  ],
  "edges": [
    {"from":"G1","to":"S1","relationship":"decomposes_to"},
    {"from":"S1","to":"T1","relationship":"decomposes_to"},
    {"from":"S1","to":"T2","relationship":"decomposes_to"},
    {"from":"S1","to":"T3","relationship":"decomposes_to"},
    {"from":"T1","to":"O1","relationship":"decomposes_to"},
    {"from":"T2","to":"O2","relationship":"decomposes_to"},
    {"from":"T2","to":"O3","relationship":"decomposes_to"},
    {"from":"T3","to":"O4","relationship":"decomposes_to"},
    {"from":"T3","to":"O5","relationship":"decomposes_to"},
    {"from":"O5","to":"ST1","relationship":"enables"},
    {"from":"O6","to":"ST1","relationship":"validates"},
    {"from":"ST1","to":"O7","relationship":"decomposes_to"}
  ],
  "metadata": {
    "total_nodes": 13,
    "total_edges": 12,
    "auditor_distribution": {"T_Auto":7,"T_LLM":4,"T_Human":2}
  }
}
\end{lstlisting}

\begin{figure}[htp!]
    \centering
    \vspace{-0.4em}
    \includegraphics[width=1.0\textwidth]{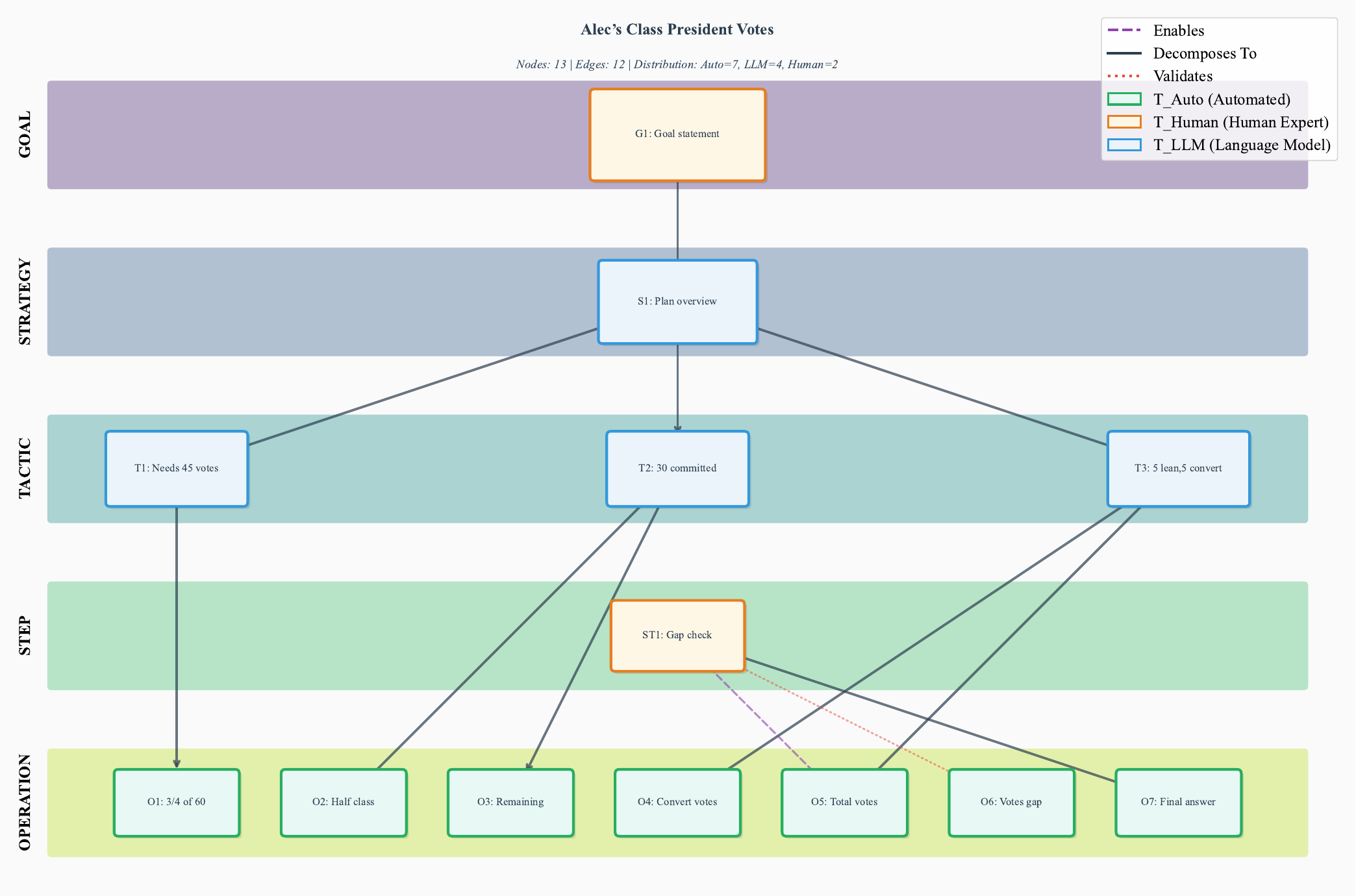}
    \vspace{-0.3em}
    \caption{HDAG Example 2 from Deepseek-R1-8B CoT: Math Problem.}
    \vspace{-0.5em}
\end{figure}

\newpage
\subsubsection{GPT-OSS-20B CoT Decomposition}
\begin{lstlisting}[style=plainmarkdown]
{
  "title": "Alec's Votes",
  "nodes": [
    {"id":"G1","label":"Goal statement","content":"Alec wants 3/4 of 60 = 45 votes.","abstraction_level":"GOAL","type":"objective_statement",
    "primary_auditor":"T_Human"},
    {"id":"S1","label":"Plan overview","content":"Compute target, count votes, add converts, compare gap.","abstraction_level":"STRATEGY","primary_auditor":"T_LLM"},
    {"id":"T1","label":"Needs 45 votes","content":"Three-quarters of 60 is 45.","abstraction_level":"TACTIC","primary_auditor":"T_LLM"},
    {"id":"T2","label":"30 committed","content":"Half of 60 already pledged = 30.","abstraction_level":"TACTIC",
    "primary_auditor":"T_LLM"},
    {"id":"T3","label":"5 converted","content":"From 25 opposing, 1/5 switched = 5.","abstraction_level":"TACTIC",
    "primary_auditor":"T_LLM"},
    {"id":"T4","label":"Total 35","content":"30 committed + 5 converted = 35 total.","abstraction_level":"TACTIC",
    "primary_auditor":"T_LLM"},
    {"id":"ST1","label":"Gap check","content":"45-35 = 10 more needed.","abstraction_level":"STEP",
    "primary_auditor":"T_Human"},
    {"id":"O1","label":"3/4 of 60","content":"(3/4)x60=45",
    "abstraction_level":"OPERATION","primary_auditor":"T_Auto"},
    {"id":"O2","label":"Half class","content":"1/2x60=30",
    "abstraction_level":"OPERATION","primary_auditor":"T_Auto"},
    {"id":"O3","label":"Remaining","content":"60-30=30 remain",
    "abstraction_level":"OPERATION","primary_auditor":"T_Auto"},
    {"id":"O4","label":"Others 25","content":"30-5=25",
    "abstraction_level":"OPERATION","primary_auditor":"T_Auto"},
    {"id":"O5","label":"Convert 5","content":"1/5x25=5",
    "abstraction_level":"OPERATION","primary_auditor":"T_Auto"},
    {"id":"O6","label":"Total votes","content":"30+5=35",
    "abstraction_level":"OPERATION","primary_auditor":"T_Auto"},
    {"id":"O7","label":"Votes gap","content":"45-35=10",
    "abstraction_level":"OPERATION","primary_auditor":"T_Auto"},
    {"id":"O8","label":"Final answer","content":"Alec needs 10 more votes.",
    "abstraction_level":"OPERATION","primary_auditor":"T_Auto"}
  ],
  "edges": [
    {"from":"G1","to":"S1","relationship":"decomposes_to"},
    {"from":"S1","to":"T1","relationship":"decomposes_to"},
    {"from":"S1","to":"T2","relationship":"decomposes_to"},
    {"from":"S1","to":"T3","relationship":"decomposes_to"},
    {"from":"S1","to":"T4","relationship":"decomposes_to"},
    {"from":"T1","to":"O1","relationship":"decomposes_to"},
    {"from":"T2","to":"O2","relationship":"decomposes_to"},
    {"from":"T2","to":"O3","relationship":"decomposes_to"},
    {"from":"T3","to":"O4","relationship":"decomposes_to"},
    {"from":"T3","to":"O5","relationship":"decomposes_to"},
    {"from":"T4","to":"O6","relationship":"decomposes_to"},
    {"from":"O6","to":"ST1","relationship":"enables"},
    {"from":"O7","to":"ST1","relationship":"validates"},
    {"from":"ST1","to":"O8","relationship":"decomposes_to"}
  ],
  "metadata": {
    "total_nodes": 15,
    "total_edges": 14,
    "auditor_distribution": {"T_Auto":8,"T_LLM":4,"T_Human":2}
  }
}
\end{lstlisting}
\begin{figure}[htp!]
    \centering
    \vspace{-0.4em}
    \includegraphics[width=1.0\textwidth]{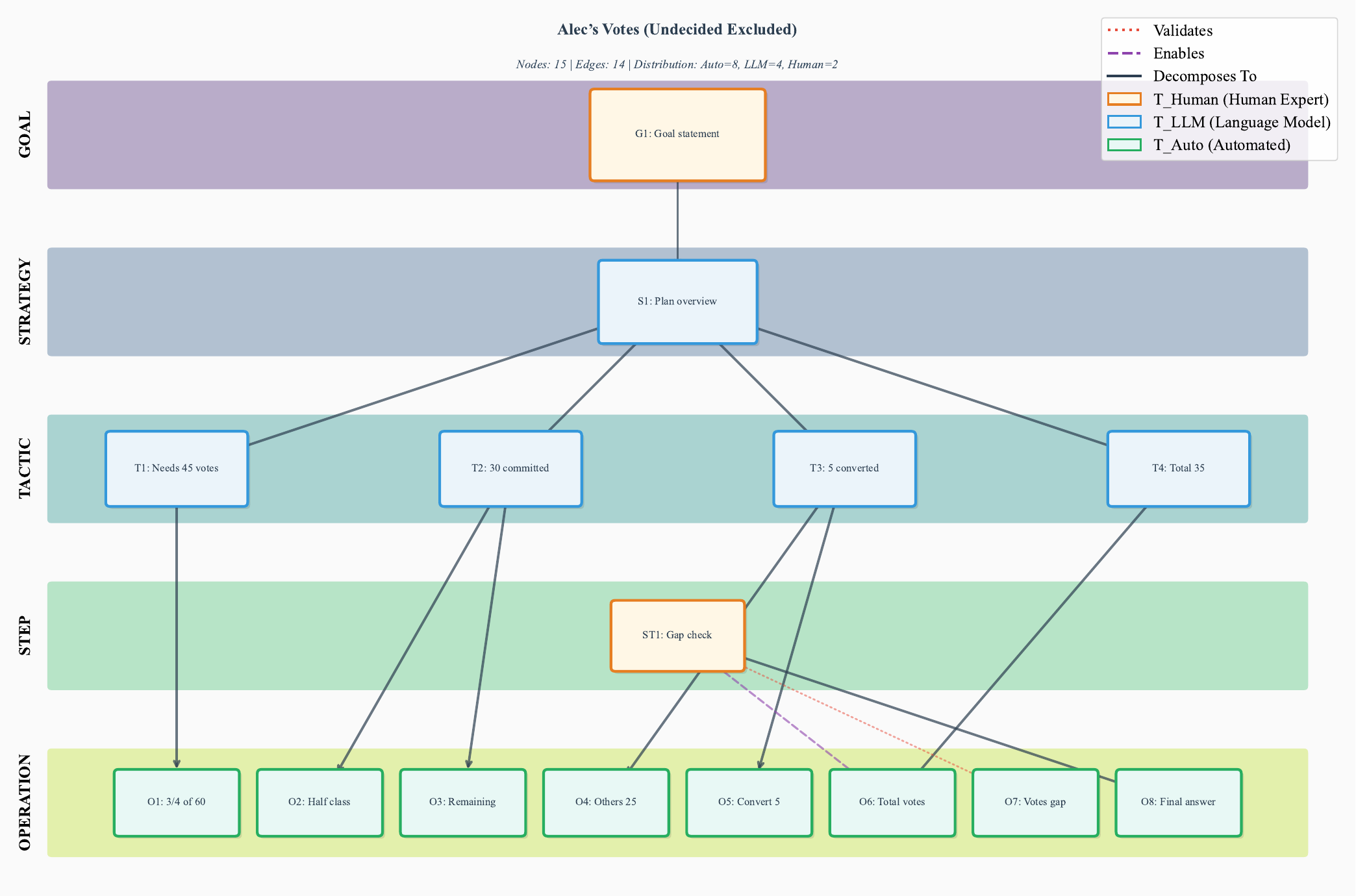}
    \vspace{-0.3em}
    \caption{HDAG Example 2 from GPT-OSS-20B CoT: Math Problem.}
    \vspace{-0.5em}
\end{figure}

\clearpage
\subsection{Datasets}\label{sec:dataset}

We utilize Chain-of-Thought (CoT) reasoning datasets spanning multiple domains following recent surveys \cite{tan2024large}. Our evaluation uses a carefully curated multi-domain dataset designed to test bias mitigation and auditing effectiveness across diverse reasoning tasks.
\paragraph{Primary Evaluation Dataset.}
MMLU-Pro-CoT-Train~\citep{mmlu_pro_cot_labeled}: 200 samples with ground truth annotations for individual reasoning steps and final answers across professional domains, enabling systematic evaluation of correctness and faithfulness at both step and trace levels. This dataset provides comprehensive coverage of advanced knowledge areas including engineering, mathematics, physics, chemistry, biology, and computer science, making it ideal for evaluating reasoning trace quality in technical domains.
\paragraph{Multi-Domain Bias Evaluation Dataset.}
Our curated dataset of 300 questions (50 per domain) sourced from established HuggingFace datasets across 6 diverse domains with comprehensive ground truth validation:
\begin{itemize}
\item \textbf{Medical} (\texttt{medmcqa})~\citep{medmcqa}: Clinical multiple-choice questions covering medical diagnosis, treatment protocols, pharmacology, and pathophysiology. This dataset represents real medical licensing exam questions, providing authentic clinical reasoning scenarios that require domain expertise and careful step-by-step analysis of patient presentations, differential diagnoses, and treatment decisions.
\item \textbf{Mathematics} (\texttt{gsm8k})~\citep{cobbe2021gsm8k}: Grade school arithmetic and algebra word problems requiring multi-step mathematical reasoning. These problems test fundamental quantitative reasoning skills including problem decomposition, arithmetic operations, proportional reasoning, and logical sequencing. The dataset emphasizes practical mathematical applications in everyday contexts such as financial calculations, measurement conversions, and basic geometry.
\item \textbf{Science} (\texttt{sciq})~\citep{SciQ}: Multi-disciplinary science questions spanning physics, chemistry, biology, and earth sciences. Questions cover fundamental scientific concepts, experimental methodology, and theoretical principles. This dataset tests scientific reasoning capabilities including hypothesis formation, experimental design interpretation, causal relationship identification, and application of scientific laws across different domains.
\item \textbf{Common Sense} (\texttt{commonsense\_qa}~\citep{commonsenseqa}): Everyday reasoning questions testing implicit knowledge about social situations, physical properties, causal relationships, and typical human behavior. These questions evaluate the model's ability to apply common-sense knowledge that is typically acquired through general life experience rather than formal education, including understanding of social norms, object properties, and cause-and-effect relationships.
\item \textbf{Humanities} (\texttt{squad})~\citep{squad}: Reading comprehension tasks requiring factual extraction and inference from historical, literary, and cultural texts. Questions test advanced comprehension skills including information synthesis, contextual interpretation, implicit meaning recognition, and factual accuracy verification across diverse humanistic domains.
\item \textbf{Human Evaluation Subset}: 10 carefully selected math problems from \texttt{openai/gsm8k} used specifically for human-in-the-loop experiments with 15 PhD-level auditors, enabling direct comparison between human expert judgment and automated auditing approaches.
All datasets undergo comprehensive quality validation including ground truth verification, format consistency checking, content quality assessment, and deduplication.
\end{itemize}

\clearpage
\section{Theoretical Results}\label{sec:appex_theory}
\subsection{Consensus: Seat Layer, Segment Layer, and Trace Layer}
In this section we provide the theoretical analysis of TRUST present in \cref{sec:method} by characterizing the statistical safety against malicious auditors and the economics incentive/decentive gurantees for good and bad actors.

We analyze in three layers:
\begin{enumerate}[leftmargin=*,itemsep=0.3em]
\item \textbf{Seat layer.}  
      Within segment $s$, the $k_{t(s)}$ seats vote independently;
      computer seats are noiseless, LLMs/humans have error~$\epsilon_t$,
      and human seats may be adversarial w.p.\ $\rho_{\mathrm H}$.
\item \textbf{Segment layer.}  
      Define the \emph{segment pass indicator}
      \begin{align}
        B_s
        =\mathbf1\!\Bigl[\#\{\text{correct votes}\}\ge q_{t(s)}\Bigr],
      \end{align}
      where $q_{t}=\lceil\tau\,k_t\rceil$ is the per‐type quorum.
      Compute the exact pass probability $p_s=\Pr[B_s=1]$.
\item \textbf{Trace layer.}  
      Weight each segment outcome by $w_{t(s)}$, set
      \begin{align}
        W=\sum_{s=1}^{S} w_{t(s)}\,B_s,
        \quad
        W_\beta=\beta\sum_{s}w_{t(s)}.
      \end{align}
      Bound $\Pr[W<W_\beta]$ by Hoeffding and Chernoff using the $p_s$.
\end{enumerate}

Next, we derive the exact pass rate for type $t$ in \cref{lem:exact_type}

\begin{lemma}[Exact pass probability for type $t$]
\label{lem:exact_type}
For a segment type $t$ and parameters $(k_t,\epsilon_t,\rho_t)$
with $\rho_{\mathrm C}=\rho_{\mathrm L}=0$.
Then
\begin{align}
  p_t
  = \Pr\bigl[B_s=1\bigr]
  = \sum_{m=0}^{k_t}\binom{k_t}{m}\rho_t^m(1-\rho_t)^{k_t-m}
    \sum_{c=q_t}^{\,k_t-m}
      \binom{k_t-m}{c}
      (1-\epsilon_t)^c\,\epsilon_t^{\,k_t-m-c},
\end{align}
where $m$ malicious seats always vote wrong,
and among the $k_t-m$ honest seats $c$ vote correctly.
\end{lemma}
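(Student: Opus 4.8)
The plan is to compute $p_t$ by conditioning on the random number of adversarial seats inside a type-$t$ segment and exploiting seat-wise independence at each stage. First I would record the two-stage randomness of the seat layer: each of the $k_t$ seats is independently adversarial with probability $\rho_t$ (so $\rho_{\mathrm C}=\rho_{\mathrm L}=0$ means only human-type segments carry this randomness), and, independently of this, every honest seat votes correctly with probability $1-\epsilon_t$ while every adversarial seat votes incorrectly by assumption. Hence the number $M$ of adversarial seats satisfies $M\sim\Binom{k_t}{\rho_t}$, i.e.\ $\Pr[M=m]=\binom{k_t}{m}\rho_t^m(1-\rho_t)^{k_t-m}$.

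Second, I would condition on $\{M=m\}$ and track the count of correct votes. The $m$ adversarial seats contribute zero correct votes, so every correct vote comes from one of the $k_t-m$ honest seats, each correct independently with probability $1-\epsilon_t$; thus, conditionally on $M=m$, the number $C$ of correct votes is $\Binom{k_t-m}{1-\epsilon_t}$. Since $B_s=\mathbf{1}[C\ge q_t]$ with $q_t=\lceil\tau k_t\rceil$, summing the Binomial pmf from $q_t$ up to the support maximum $k_t-m$ gives
\[
  \Pr[B_s=1\mid M=m]
  =\sum_{c=q_t}^{\,k_t-m}\binom{k_t-m}{c}(1-\epsilon_t)^{c}\,\epsilon_t^{\,k_t-m-c},
\]
with the convention that this sum is empty — hence $0$ — whenever $m>k_t-q_t$, which is exactly the correct behavior: once more than $k_t-q_t$ seats are adversarial the quorum can never be met.

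Third, I would conclude by the law of total probability, multiplying the conditional pass probability by $\Pr[M=m]$ and summing over $m=0,\dots,k_t$, which reproduces the stated double sum verbatim. The steps are elementary; the only points that genuinely need spelling out are bookkeeping ones: (i) independence of the ``which seats are adversarial'' randomness from the honest-vote randomness, and of all seats from one another, which legitimizes the nested conditioning; (ii) the fact that the inner Binomial ranges over the honest subset of size $k_t-m$ only, so that $\binom{k_t}{m}$ is the sole combinatorial factor accounting for the identity of the adversarial seats and nothing is double-counted; and (iii) the empty-sum convention, which makes the identity valid in the degenerate regimes ($\rho_t=0$, $\epsilon_t\in\{0,1\}$, $q_t=0$, $q_t=k_t$, or $m>k_t-q_t$). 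I do not anticipate a substantive obstacle — the ``hard part'' is merely stating the two-stage independence and the edge-case convention cleanly enough that the formula holds for every admissible $(k_t,\epsilon_t,\rho_t,\tau)$.
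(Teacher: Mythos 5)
Your proof is correct and takes essentially the same route as the paper's: condition on the number $m$ of adversarial seats (which is $\mathrm{Binomial}(k_t,\rho_t)$), observe that conditionally the correct-vote count among the $k_t-m$ honest seats is $\mathrm{Binomial}(k_t-m,1-\epsilon_t)$, tail-sum from $q_t$, and close with the law of total probability. The paper compresses this to two sentences; your version merely makes explicit the two independence assumptions and the empty-sum convention for $m>k_t-q_t$, which the paper leaves implicit.
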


\begin{proof}
First choose $m$ malicious seats (Binomial$(k_t,\rho_t)$),
then among the remaining $k_t-m$ honest seats count $c\ge q_t$
correct votes (Binomial$(k_t-m,1-\epsilon_t)$).
\end{proof}

\label{sec:trace_bounds}
On trace-level, we aggregate the results on segmend-level and denote $W=\sum_{s=1}^S w_{t(s)}B_s$ where $w_t$ is the weight for segment of type $t$.
With $B_s\sim\mathrm{Bernoulli}(p_s)$ independent and weights
$w_s=w_{t(s)}$, the first two moment of weighted trace is given by
\begin{align}
  \mu_{\text{vote}} &= \mathbb E[W]
      = \sum_{s=1}^{S} w_s\,p_s,
  \\
  \sigma_{\text{vote}}^{2} &= \operatorname{Var}(W)
      = \sum_{s=1}^{S} w_s^{2}\,p_s(1-p_s)
      \;\le\;
      \vphantom{\sum_{s}^{S}}
        \sum_{s=1}^{S} w_s^{2} \eqqcolon \sigma_{\max}^{2}.
        \label{eq:sig_max}
\end{align}

\begin{proposition}[Hoeffding and Chernoff bound]
\label{prop:wedge_bound}
For any trace-level quorum threshold $\beta\in(0,1)$ define
$
  W_\beta=\beta\sum_{s} w_s
$
and let $W=\sum_{s}w_s B_s$.
Then
\begin{align}
\Pr\!\bigl[ W < W_\beta \bigr]
\;\le\;
\underbrace{%
  \exp\!\Bigl[
    -2\,(\mu_{\text{vote}} - W_\beta)^{2}\bigl/\sigma_{\max}^{2}
  \Bigr]
}_{\text{Hoeffding}}
\;\land\;
\underbrace{%
  \min_{\lambda>0}
    \exp\!\Bigl(
      \lambda W_\beta
      +\sum_{s=1}^{S}
         \ln\!\bigl(
           p_s\,e^{-\lambda w_s} + (1-p_s)
         \bigr)
    \Bigr)
}_{\text{Chernoff}}.
\end{align}
\end{proposition}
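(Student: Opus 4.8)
The plan is to establish the two bounds separately and then take their minimum, since the stated inequality is just the conjunction of the Hoeffding estimate and the Chernoff estimate applied to the same event $\{W<W_\beta\}$. Throughout, recall from \cref{lem:exact_type} that each $B_s\sim\mathrm{Bernoulli}(p_s)$ with the $p_s$ given explicitly, and that the $B_s$ are independent across segments because seats in distinct segments vote independently; consequently $W=\sum_{s}w_sB_s$ is a sum of independent, bounded random variables with $\mathbb E[W]=\mu_{\text{vote}}=\sum_s w_sp_s$ and each summand $w_sB_s$ taking values in $[0,w_s]$.

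First I would prove the Hoeffding piece. Since $W_\beta<\mu_{\text{vote}}$ under the operating regime (this is the content of the statistical dial in \cref{thm:safty_profit_main}; if $W_\beta\ge\mu_{\text{vote}}$ the bound is vacuous and there is nothing to prove), write $\Pr[W<W_\beta]=\Pr[\mu_{\text{vote}}-W>\mu_{\text{vote}}-W_\beta]$ and apply Hoeffding's inequality for sums of independent variables with $w_sB_s\in[0,w_s]$, which gives
\begin{align}
\Pr[W<W_\beta]\le\exp\!\Bigl(-\frac{2(\mu_{\text{vote}}-W_\beta)^2}{\sum_{s=1}^{S}w_s^2}\Bigr).
\end{align}
By \eqref{eq:sig_max} we have $\sum_s w_s^2=\sigma_{\max}^2$, which yields the first term verbatim.

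Next I would prove the Chernoff piece by the standard exponential-moment method. For any $\lambda>0$, Markov's inequality applied to $e^{-\lambda W}$ gives $\Pr[W<W_\beta]=\Pr[e^{-\lambda W}>e^{-\lambda W_\beta}]\le e^{\lambda W_\beta}\,\mathbb E[e^{-\lambda W}]$. By independence, $\mathbb E[e^{-\lambda W}]=\prod_{s=1}^{S}\mathbb E[e^{-\lambda w_sB_s}]=\prod_{s=1}^{S}\bigl(p_se^{-\lambda w_s}+(1-p_s)\bigr)$, so taking logarithms and then the infimum over $\lambda>0$ produces
\begin{align}
\Pr[W<W_\beta]\le\min_{\lambda>0}\exp\!\Bigl(\lambda W_\beta+\sum_{s=1}^{S}\ln\!\bigl(p_se^{-\lambda w_s}+(1-p_s)\bigr)\Bigr),
\end{align}
which is the second term. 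Since both bounds hold for the same event, their minimum does too, giving the claimed $\wedge$. I do not expect any genuine obstacle here; the only point requiring a word of care is the direction of the tail and the mild hypothesis $W_\beta<\mu_{\text{vote}}$ needed for the Hoeffding form to be non-trivial, plus noting that weights of possibly differing magnitudes are handled correctly because Hoeffding's inequality uses the per-variable ranges $[0,w_s]$ rather than a uniform bound. If one wanted a cleaner self-contained Chernoff bound one could further upper-bound $\ln(p_se^{-\lambda w_s}+1-p_s)\le p_s(e^{-\lambda w_s}-1)$, but that is optional and not needed for the stated form.
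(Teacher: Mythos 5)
Your proposal is correct and takes essentially the same route as the paper: Hoeffding on the bounded increments $w_s B_s \in [0,w_s]$ with range constant $\sum_s w_s^2 = \sigma_{\max}^2$, and the exponential–Markov (Chernoff) step exploiting independence to factor $\mathbb{E}[e^{-\lambda W}]$ into a product over segments, then optimizing over $\lambda>0$. The extra remarks about the non-vacuous regime $W_\beta<\mu_{\text{vote}}$ and the optional bound $\ln(p_s e^{-\lambda w_s}+1-p_s)\le p_s(e^{-\lambda w_s}-1)$ are harmless elaborations rather than a different approach.
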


\begin{proof}
We proof the bounds separate as follow.
\begin{enumerate}
    \item Hoeffding bound.
    Each summand $X_s:=w_s B_s$ satisfies $0\le X_s\le w_s$.
    Applying Hoeffding’s inequality to
    $\sum_{s}(X_s-\mathbb E X_s)$ yields the first brace with denominator
    ${\sum_s w_s^{2}}=\sigma_{\max}^{2}$ in \eqref{eq:sig_max}.
    \item Chernoff bound.
    For any $\lambda>0$,
    \begin{align}
      \Pr[W<W_\beta]
      \;\le\;
      e^{\lambda W_\beta}\,
      \mathbb E\!\bigl[e^{-\lambda W}\bigr]
      =
      e^{\lambda W_\beta}
      \prod_{s=1}^{S}\!
        \bigl(
          p_s\,e^{-\lambda w_s} + (1-p_s)
        \bigr),
    \end{align}
    because the $B_s$ are independent.
    Minimising the RHS over $\lambda$ gives the second brace.
\end{enumerate}
\end{proof}

\subsection{Economic Layer: Staking, Reputation, Rewards, and Slashing}
\label{sec:econ}

\begin{table}[htp!]
\label{tab:notation}
\centering
\renewcommand{\arraystretch}{1.15}
\caption{
{Table of Notations.
}}
\begin{tabular}{@{}ll@{}}
\toprule
\multicolumn{2}{@{}l}{\textbf{Indices \& random counts}} \\ \midrule
$S$                              & Total number of segments in a trace                               \\
$N_T\sim\mathrm{Poisson}(\lambda T)$ & \# segments audited in horizon $[0,T]$                           \\
$t(s)\in\{\mathrm C,\mathrm L,\mathrm H\}$ & Auditor type of segment $s$                                  \\[3pt]

\multicolumn{2}{@{}l}{\textbf{Per-segment vote variables (stat.\ layer)}} \\ \midrule
$k_t$                            & \# seats of type $t$ in a segment                                 \\
$q_t=\lceil \tau k_t\rceil$      & Per-type quorum ($\tau$: vote threshold)                          \\
$B_s\in\{0,1\}$                  & Segment pass indicator                                            \\
$p_s=\Pr[B_s=1]$                 & Segment pass probability                                          \\
$w_t$                            & Weight of segment type $t$                                        \\[3pt]

\multicolumn{2}{@{}l}{\textbf{Trace aggregation}} \\ \midrule
$W=\sum_{s} w_{t(s)}B_s$         & Weighted pass total (one segment)                                 \\
$W_\beta=\beta\sum_{s} w_{t(s)}$ & Trace-level quorum threshold (\(\beta\in(0,1)\))                   \\
$\mu_{\text{vote}}=\mathbb E[W]$               & Mean of $W$                                                        \\
$\sigma_{\text{vote}}^{2}$       & $\sup\operatorname{Var}(W)$ (single segment)                       \\[3pt]

\multicolumn{2}{@{}l}{\textbf{Human pay-off variables (econ.\ layer)}} \\ \midrule
$R$                              & Reward for a correct vote                                         \\
$P$                              & Penalty if slashed                                                \\
$p_{\text{slash}}(r)$            & Slash prob.\ on error, reputation $r$ \([\;p_{\min},p_{\max}]\)    \\
$\epsilon_{\mathrm H}$        & Honest human error rate                                           \\
$\delta$                         & Design constant: min.\ loss per malicious seat                    \\
$X_i\in\{-P,0,R\}$               & Net pay-off for seat $i$ on one segment                            \\
$\mu_{\mathrm H}(r)$             & $\mathbb E[X_i]$ for honest seat with reputation $r$              \\
$\mu_{\min}$                     & $\min_{r}\mu_{\mathrm H}(r)$                                      \\
$b$                              & Upper range used in Bernstein (default \(b = R\))                 \\
$\sigma_{\mathrm{H}}^{2}$        & $\sup_{r}\operatorname{Var}[X_i]\le(R+P)^2/4$                      \\[3pt]

\multicolumn{2}{@{}l}{\textbf{Centred increments for MGF bounds}} \\ \midrule
$Y_i=X_i-\mu_{\min}$             & Honest centred increment (\(\mathbb E[Y_i]\!\ge\!0,\,Y_i\!\le\!b\)) \\
$Z_i=X_i+\delta P$               & Malicious centred increment (\(\mathbb E[Z_i]\!\ge\!0\))          \\[3pt]

\multicolumn{2}{@{}l}{\textbf{Cumulative pay-offs}} \\ \midrule
$U_{\mathrm{hon}}(T)=\sum_{i=1}^{N_T}X_i$ & Total pay-off (honest seat) in $[0,T]$                    \\
$U_{\mathrm{mal}}(T)$                     & Total pay-off (malicious seat) in $[0,T]$                 \\[3pt]

\multicolumn{2}{@{}l}{\textbf{Process rates}} \\ \midrule
$\lambda$                        & Segment intensity (segments per unit time)                        \\
$T$                              & Time horizon                                 \\[3pt]

\bottomrule
\end{tabular}
\end{table}

\paragraph{Reputation-Weighted Slashing}
Each human seat~\(i\) maintains a reputation \(r_i(t)\in[0,1]\), updated
after every segment via
\begin{align}\label{eq:reputation}
  r_i(t+1)
  \;=\;
  (1-\gamma)\,r_i(t)+\gamma\,\mathbf1[\text{vote correct}],
  \qquad
  \gamma\in(0,1].
\end{align}
On an \emph{incorrect} vote the seat is slashed with probability
\begin{equation}\label{eq:pslash}
  p_{\mathrm{slash}}(r)
  \;=\;
  p_{\min}+(p_{\max}-p_{\min})(1-r),
  \qquad
  0<p_{\min}<p_{\max}\le1.
\end{equation}
Thus low-reputation seats face a higher risk of being slashed.

\paragraph{Per-Segment Pay-off}
Let \(X_i\in\{-P,\,0,\,R\}\) be the \emph{net} pay-off of seat~\(i\) on one
segment:
\[
  X_i
  =
  \begin{cases}
    R               & \text{correct vote},\\
   -P               & \text{incorrect and with slashing probability $p_{\mathrm{slash}}$},\\
    0               & \text{incorrect and with not slashing probability $1-p_{\mathrm{slash}}$}.
  \end{cases}
\]

With honest human error rate \(\epsilon_{\mathrm H}\), the expected payoff per segment with reputation $r$ is
\begin{equation}\label{eq:muH}
  \mu_{\mathrm H}(r)
  \;:=\;
  \mathbb E[X_i]
  \;=\;
  (1-\epsilon_{\mathrm H})R
  -\epsilon_{\mathrm H}\,P\,p_{\mathrm{slash}}(r).
\end{equation}
Computer and LLM seats are verifiable, hence always correct and omitted
from incentive analysis.

We need two global constant for deriving hte Bernstein-type moment-generating function (MGF) inequalities:

\begin{enumerate}[leftmargin=*, itemsep=0.25em]
    \item \textbf{Range bound} \(b\) on the \emph{centred increment}:
    \[
    Y_i := X_i - \mathbb{E}[X_i], \qquad Y_i \leq b.
    \]
    \item \textbf{Variance bound} \(\sigma_{\mathrm{H}}^2\), the maximal variance across all reputation states:
    \[
    \sigma_{\mathrm{H}}^2 := \sup_{r \in [0,1]} \operatorname{Var}[X_i].
    \]
\end{enumerate}

\paragraph{Range bound.}
By construction, the largest positive realisation of $X_i$ is $R$, while the minimal expected payoff $\mathbb{E}[X_i]$ reduces the centred increment. To preserve valid MGF domain, we conservatively set
\begin{align}\label{eq:b_bound}
b := R.
\end{align}

\paragraph{Variance bound.}
Given $X_i \in \{-P, 0, R\}$ with honest human error rate $\epsilon_{\mathrm{H}}$ and slashing probability $p_{\mathrm{slash}}(r)$, we define the global variance bound as
\begin{align}\label{eq:var_bound}
\sigma_{\mathrm{H}}^2(r) := 
\sup_{r \in [0,1]} \left[
(1-\epsilon_{\mathrm{H}}) R^2 + \epsilon_{\mathrm{H}} p_{\mathrm{slash}}(r) P^2 - \left( (1-\epsilon_{\mathrm{H}})R - \epsilon_{\mathrm{H}} p_{\mathrm{slash}}(r)P \right)^2 \right].
\end{align}

Before state the main result in \cref{thm:safty_profit}, we provide some auxillary lemmas.
\begin{lemma}[MGF of a Bounded Centred R.V.]\label{lem:bernstein-mgf}
Let \(W\) satisfy \(\mathbb E[W]=0\), \(\mathbb E[W^{2}]=\sigma^{2}\) and
\(W\le b\) a.s.\ with \(b>0\).  Then for any \(\theta\in(0,3/b)\), we have
\begin{align*}
  \mathbb E\!\bigl[e^{\theta W}\bigr]
  \;\le\;
  \exp\!\Bigl(
    \frac{\theta^{2}\sigma^{2}}
         {\,2\bigl(1-\theta b/3\bigr)}
  \Bigr).
\end{align*}
\end{lemma}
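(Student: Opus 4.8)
The plan is to run the classical Bernstein moment-generating-function argument in three moves: a convexity/monotonicity step, a pointwise expansion step, and a final elementary series estimate. Introduce the auxiliary function $\phi(x):=(e^{x}-1-x)/x^{2}$ for $x\neq 0$, extended by $\phi(0):=\tfrac12$, and record its power series $\phi(x)=\sum_{j\ge 0}x^{j}/(j+2)!$, which exhibits $\phi$ as entire on $\mathbb{R}$. The first key step is to prove that $\phi$ is nondecreasing on $\mathbb{R}$. I would do this by the identity $\phi'(x)=h(x)/x^{3}$ with $h(x):=(x-2)e^{x}+x+2$, then noting $h(0)=0$, $h'(x)=(x-1)e^{x}+1$ with $h'(0)=0$, and $h''(x)=xe^{x}$; since $h''$ is negative on $(-\infty,0)$ and positive on $(0,\infty)$, the point $x=0$ is the global minimum of $h'$, so $h'\ge 0$, hence $h$ is nondecreasing and $h(0)=0$ forces $h(x)$ to have the same sign as $x$. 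Consequently $h(x)/x^{3}\ge 0$ for $x\neq 0$, and by continuity $\phi'(0)=\tfrac16>0$, so $\phi$ is nondecreasing everywhere.

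Granting monotonicity, the second step is a pointwise bound. Because $W\le b$ almost surely and $\theta>0$, we have $\theta W\le \theta b$, hence $\phi(\theta W)\le \phi(\theta b)$; multiplying both sides by the nonnegative quantity $(\theta W)^{2}$ and using $(\theta W)^{2}\phi(\theta W)=e^{\theta W}-1-\theta W$ (valid for all $W$ via the continuous extension of $\phi$) gives
\[
  e^{\theta W}-1-\theta W \;\le\; (\theta W)^{2}\,\phi(\theta b)
  \qquad\text{a.s.}
\]
Taking expectations, using $\mathbb{E}[W]=0$ and $\mathbb{E}[W^{2}]=\sigma^{2}$, and then the crude bound $1+u\le e^{u}$, yields $\mathbb{E}[e^{\theta W}]\le 1+\theta^{2}\sigma^{2}\phi(\theta b)\le \exp\!\bigl(\theta^{2}\sigma^{2}\phi(\theta b)\bigr)$. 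Integrability is not an issue here since $0<e^{\theta W}\le e^{\theta b}$ by $W\le b$.

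The third step is to bound $\phi(\theta b)$ by $\tfrac{1}{2(1-\theta b/3)}$ on the admissible range $\theta b\in[0,3)$ guaranteed by $\theta\in(0,3/b)$. From the series representation it suffices to check $(j+2)!\ge 2\cdot 3^{j}$ for all $j\ge 0$ — immediate since for $j\ge 1$ each factor beyond the leading $2\cdot3$ is at least $4>3$ — which yields $\phi(x)=\sum_{j\ge 0}x^{j}/(j+2)!\le \tfrac12\sum_{j\ge 0}(x/3)^{j}=\tfrac{1}{2(1-x/3)}$ for $0\le x<3$. Substituting $x=\theta b$ and chaining with the previous display gives $\mathbb{E}[e^{\theta W}]\le \exp\!\bigl(\theta^{2}\sigma^{2}/(2(1-\theta b/3))\bigr)$, as claimed.

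I expect the only genuinely nontrivial point to be the monotonicity of $\phi$, i.e.\ the two-derivative sign analysis of $h$; the expansion and the factorial inequality are routine. An alternative is to invoke $\phi$'s monotonicity as a well-known fact and skip that computation, but I would prefer to keep the short argument above so the lemma is self-contained.
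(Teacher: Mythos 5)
Your proof is correct, and it is precisely the classical Bernstein moment-generating-function argument that the paper cites and omits (``the usual Bernstein--Bernoulli expansion; details are unchanged from the classic proof''). The three moves you carry out---monotonicity of $\phi(x)=(e^{x}-1-x)/x^{2}$ via the sign analysis of $h(x)=(x-2)e^{x}+x+2$, the pointwise bound $e^{\theta W}-1-\theta W\le(\theta W)^{2}\phi(\theta b)$ followed by $1+u\le e^{u}$, and the factorial estimate $(j+2)!\ge 2\cdot 3^{j}$ to sum the geometric series---are exactly the steps of that standard proof, so no comparison is needed.
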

\begin{proof}
Follow the usual Bernstein–Bernoulli expansion; details are unchanged from the classic proof and omitted here for brevity.
\end{proof}

\begin{theorem}[Safety–Profitability Guarantee]
\label{thm:safty_profit}
Fix a horizon \(T>0\), a target trace-failure probability
$\epsilon_{\mathrm{target}}\in(0,1)$ and a design constant
$\delta\in(0,1)$.
We have the following two dials to control the safety-profitability.

\begin{itemize}[leftmargin=*, itemsep=0.25em]
\item \textbf{Statistical dial.}  
Let \((k_t,q_t,w_t,\beta)\) be the vote parameters.
Write \(\mu_{\text{vote}}:=\mathbb E[W]\) and
\(\sigma_{\text{vote}}^{2}:=\sup\!\operatorname{Var}(W)\) for
\emph{one trace}.  Require
\[
  \mu_{\text{vote}}-W_\beta
  \;\ge\;
  \sqrt{\tfrac12\,
        \sigma_{\text{vote}}^{2}\,
        \ln\!\frac{\lambda T}{\epsilon_{\mathrm{target}}}},
\tag{S1}\label{eq:S1prime}
\]

\item \textbf{Economic dial.}  
Choose \((R,P,p_{\min},p_{\max})\) such that
\begin{align}
R
>\frac{\epsilon_{\mathrm H}}{1-\epsilon_{\mathrm H}}\,P\,p_{\max},
\quad
p_{\min}
\ge\frac{\delta}{1-\alpha},
\quad
\alpha
:=\frac{P p_{\max}}{R+P p_{\max}}.
\tag{E1}\label{eq:E1_fixed}
\end{align}
\end{itemize}

With the expected minimum earn per round
\(
  \displaystyle
  \mu_{\min}:=(1-\epsilon_{\mathrm H})R-\epsilon_{\mathrm H}P p_{\max}>0,
\)
the following hold:

\begin{enumerate}[label=(\alph*), leftmargin=*, itemsep=0.25em]
\item \textbf{Statistical safety.}
      \(\Pr[\text{trace fails in }[0,T]]\le\epsilon_{\mathrm{target}}.\)

\item \textbf{Honest profitability.}
      \begin{align}
        \Pr[U_{\mathrm{hon}}(T)\le0]
        \;\le\;
        \exp\!\Bigl[
          -\frac{\lambda T\,\mu_{\min}^{2}}
                 {2\sigma_{\mathrm{H}}^{2}+\tfrac23 b\mu_{\min}}
        \Bigr].
      \end{align}

\item \textbf{Malicious loss.}
      \begin{align}
        \Pr[U_{\mathrm{mal}}(T)\ge0]
        \;\le\;
        \exp\!\Bigl[
          -\frac{\lambda T\,(\delta P)^{2}}
                 {2\sigma_{\mathrm{H}}^{2}+\tfrac23 b\delta P}
        \Bigr],
        \quad
        \mathbb E[U_{\mathrm{mal}}(T)]
        \;\le\;
        -\lambda T\,\delta P.
      \end{align}
\end{enumerate}
\end{theorem}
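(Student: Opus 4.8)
The plan is to establish the three parts as separate concentration estimates built from two ingredients: an exponential tail bound on a single trace (for~(a), via \cref{prop:wedge_bound}) or on a single audited segment (for~(b),(c), via \cref{lem:bernstein-mgf}), together with a Poisson/Wald step that lifts each per-unit estimate to the horizon \([0,T]\). The economic dial~(E1) enters only through the sign and size of the per-segment payoff drift, and the statistical dial~(S1) only through the calibration of the per-trace failure probability.

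For \emph{statistical safety}~(a) I would first fix one trace: \cref{prop:wedge_bound} gives \(\Pr[\text{trace fails}]=\Pr[W<W_\beta]\le\exp\!\bigl[-(\mu_{\text{vote}}-W_\beta)^2/(2\sigma_{\text{vote}}^2)\bigr]\), the sub-Gaussian branch, with the trace variance as proxy. Condition~(S1) is exactly the requirement that this bound be \(\le\epsilon_{\mathrm{target}}/(\lambda T)\). The number of traces processed in \([0,T]\) is \(N_T\sim\mathrm{Poisson}(\lambda T)\); conditioning on \(N_T=n\) and union-bounding gives failure probability \(\le n\,\epsilon_{\mathrm{target}}/(\lambda T)\), and taking expectations with \(\mathbb{E}[N_T]=\lambda T\) yields \(\Pr[\text{trace fails in }[0,T]]\le\epsilon_{\mathrm{target}}\).

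For \emph{honest profitability}~(b) the crucial fact is a uniform positive drift: since \(p_{\mathrm{slash}}(r)\le p_{\max}\), we have \(\mathbb{E}[X_i\mid\mathcal{F}_{i-1}]=\mu_{\mathrm H}(r_i)\ge(1-\epsilon_{\mathrm H})R-\epsilon_{\mathrm H}Pp_{\max}=\mu_{\min}\), and the first inequality in~(E1) makes \(\mu_{\min}>0\). Writing \(U_{\mathrm{hon}}(T)=\sum_{i\le N_T}X_i\), the event \(\{U_{\mathrm{hon}}(T)\le0\}\) lies inside \(\{\sum_{i\le N_T}(X_i-\mu_{\min})\le-\mu_{\min}N_T\}\), a lower deviation of a sum with nonnegative conditional-mean increments. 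I would bound its probability by \(\mathbb{E}\bigl[e^{-\theta U_{\mathrm{hon}}(T)}\bigr]\) for \(\theta>0\), factor over segments, control each conditional factor \(\mathbb{E}[e^{-\theta(X_i-\mu_{\mathrm H}(r_i))}\mid\mathcal{F}_{i-1}]\) via \cref{lem:bernstein-mgf} (centred, bounded increment of range \(b\), variance \(\le\sigma_{\mathrm H}^2\)), fold in the Poisson count, and optimise \(\theta\); the optimiser \(\theta^\star\asymp\mu_{\min}/\sigma_{\mathrm H}^2\) delivers the Bernstein exponent \(\lambda T\mu_{\min}^2/(2\sigma_{\mathrm H}^2+\tfrac23 b\mu_{\min})\). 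Part~(c) is the mirror image: a malicious seat votes incorrectly on every segment, so \(X_i\in\{-P,0\}\) and \(\mathbb{E}[X_i\mid\mathcal{F}_{i-1}]=-Pp_{\mathrm{slash}}(r_i)\le-Pp_{\min}\le-\delta P\) (using \(p_{\mathrm{slash}}\ge p_{\min}\) and the calibration \(p_{\min}\ge\delta/(1-\alpha)\ge\delta\) from~(E1)); Wald's identity gives \(\mathbb{E}[U_{\mathrm{mal}}(T)]=\mathbb{E}[N_T]\,\mathbb{E}[X]\le-\lambda T\,\delta P\), and applying \cref{lem:bernstein-mgf} to \(X_i+\delta P\) and optimising \(\theta\) gives \(\Pr[U_{\mathrm{mal}}(T)\ge0]\le\exp[-\lambda T(\delta P)^2/(2\sigma_{\mathrm H}^2+\tfrac23 b\,\delta P)]\).

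The steps I expect to be the real work are threefold. First, the reputation update makes the \(X_i\) adapted rather than i.i.d., so the drift bounds must be used conditionally and the ``product of MGFs'' step must be phrased as an exponential supermartingale with optional stopping at \(N_T\). Second, one must pin the range constant \(b\) entering \cref{lem:bernstein-mgf} so that the Bernstein denominators come out exactly as claimed and the admissibility condition \(\theta<3/b\) is respected throughout the optimisation. Third, the Poisson randomisation has to be discharged cleanly — passing from a per-segment Bernstein bound to the stated form with \(\lambda T\) in the exponent, which one gets either from the compound-Poisson cumulant generating function or by absorbing the \(O(\sqrt{\lambda T})\) fluctuation of \(N_T\) into the lower-order term. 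Finally, if the adversary is permitted mixed rather than always-incorrect strategies, a short additional computation around the break-even frequency \(\alpha=Pp_{\max}/(R+Pp_{\max})\) is needed to show that any strategy which materially degrades the audit still has per-segment drift \(\le-\delta P\), after which~(c) goes through unchanged.
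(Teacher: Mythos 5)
Your approach tracks the paper's proof step for step: a one‑trace Hoeffding estimate for (a) followed by a Poisson union bound, and a Bernstein MGF bound with Poisson randomisation and $\theta$‑optimisation for (b) and (c). The one concrete defect is the constant in part~(a). You quote the per‑trace estimate as $\exp\!\bigl[-(\mu_{\text{vote}}-W_\beta)^2/(2\sigma_{\text{vote}}^2)\bigr]$, but the Hoeffding branch of \cref{prop:wedge_bound} (and the form the paper actually plugs into~(S1)) is $\exp\!\bigl[-2(\mu_{\text{vote}}-W_\beta)^2/\sigma^2\bigr]$. Condition~\eqref{eq:S1prime} is calibrated to that form: squaring it gives $2(\mu_{\text{vote}}-W_\beta)^2/\sigma_{\text{vote}}^{2}\ge \ln(\lambda T/\epsilon_{\mathrm{target}})$, so the exponential is $\le\epsilon_{\mathrm{target}}/(\lambda T)$. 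Under your stated form, (S1) only yields $(\mu_{\text{vote}}-W_\beta)^2/(2\sigma_{\text{vote}}^2)\ge\tfrac14\ln(\lambda T/\epsilon_{\mathrm{target}})$, i.e.\ a per‑trace bound of $(\epsilon_{\mathrm{target}}/\lambda T)^{1/4}$, and the union over $\mathbb{E}[N_T]=\lambda T$ traces no longer closes. You must either use the $\exp(-2a^2/\sigma^2)$ form or strengthen (S1) by a factor of two under the square root.

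Beyond that constant, the concerns you flag at the end are genuine and in places cut deeper than the paper's own write‑up. The paper's proof of (b) declares the centred increments $Y_i := X_i - \mu_{\min}$ to be ``i.i.d.'' with $\mathbb{E}[Y_i]=0$; both fail in general because the reputation $r_i(t)$ is adapted, so $\mathbb{E}[X_i\mid\mathcal{F}_{i-1}]=\mu_{\mathrm H}(r_i)\ge\mu_{\min}$ holds only as an inequality and successive increments are coupled through the reputation trajectory. Your remedy — conditional drift bounds and an exponential supermartingale stopped at $N_T$ — is the correct way to make this rigorous. Likewise, the Poisson step in the paper merely ``Taylor expands'' $e^{-\theta\mu_{\min}}$: since $e^{\lambda T(e^{-\psi}-1)}\ge e^{-\lambda T\psi}$ (not $\le$), the stated passage to the Bernstein exponent is a heuristic rather than an inequality, and working with the compound‑Poisson cumulant directly, as you suggest, is the clean route. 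Your note that (c) needs an explicit break‑even computation around $\alpha$ for mixed adversarial strategies is also right; the paper defines $\alpha$ in~\eqref{eq:E1_fixed} but never performs that calculation, only asserting that (E1) forces $\mathbb{E}[X_i]\le-\delta P$.
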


\begin{proof}[Proof of \cref{thm:safty_profit}]

Consider a horizon $T>0$ and $N_T\sim\text{Poisson}(\lambda T)$ for the random number of segments in
$[0,T]$.
We divide the proof for (a), (b) and (c).

\paragraph{(a) Statistical safety.}
A single trace's weighted pass sum $W$ satisfies $0\le W\le
\sum_t w_t$ and
\[
  \mathbb E[W]=\mu_{\text{vote}}, 
  \quad
  \operatorname{Var}(W)\le\sigma_{\text{vote}}^{2}.
\]
Hoeffding’s inequality for bounded independent terms in \cref{prop:wedge_bound} gives, for
any $a>0$,
\[
  \Pr[W<\mu_{\text{vote}}-a]\;\le\;\exp\!\Bigl(-2a^{2}/\sigma_{\text{vote}}^{2}\Bigr).
\]
Instantiate $a=\mu_{\text{vote}}-W_\beta$.  Condition~\eqref{eq:S1prime} rearranges to
\[
  2(\mu_{\text{vote}}-W_\beta)^{2}/\sigma_{\text{vote}}^{2}
  \;\ge\;
  \ln\!\frac{\lambda T}{\epsilon_{\mathrm{target}}},
\]
so
\begin{align}\label{eq:A1}
  p_{\text{trace-fail}}
  :=\Pr[W<W_\beta]
  \;\le\;
  \frac{\epsilon_{\mathrm{target}}}{\lambda T}.
\end{align}

Traces arrive independently according to the Poisson process, so
\begin{align*}
  \Pr[\text{at least one trace fails in }[0,T]]
  =\Pr\!\bigl[\,\exists\,\text{trace with }W<W_\beta\bigr]
  \le
  \mathbb E[N_T]\;p_{\text{trace-fail}}
  =\lambda T\,p_{\text{trace-fail}}.
\end{align*}
Inserting \eqref{eq:A1} yields
\(\Pr[\text{trace fails in }[0,T]]\le\epsilon_{\mathrm{target}}\),
which completes the proof for part~(a).

\paragraph{(b) Honest profitability.}

We analyse the cumulative pay-off
\(
  U_{\mathrm{hon}}(T)
  :=\sum_{i=1}^{N_T}X_i
\)
for an honest human seat.

\textbf{Step 1. Center and Bound each Increment.}
Let
\[
  \mu_{\min}
  :=(1-\epsilon_{\mathrm H})R-\epsilon_{\mathrm H}P p_{\max}>0
  \quad
\]
Define centred variables
\(
  Y_i:=X_i-\mu_{\min}.
\)
Then
\begin{align*}
  \mathbb E[Y_i]=0,
  \quad
  Y_i\le b:=R,
  \quad
  \operatorname{Var}(Y_i)\le\sigma_{\mathrm{H}}^{2},
\end{align*}
where $b$ and $\sigma_{\mathrm{H}}$ are defined in \eqref{eq:b_bound} and \eqref{eq:var_bound}.

\textbf{Step 2. Moment-Generating Function Bound.}
From \cref{lem:bernstein-mgf} with $W=Y_i$, for any
$\theta\in(0,3/b)$
\begin{align}
  \mathbb E\!\bigl[e^{\theta Y_i}\bigr]
  \le
  \exp\!\Bigl(
        \frac{\theta^{2}\sigma_{\mathrm{H}}^{2}}
             {2\bigl(1-\theta b/3\bigr)}
      \Bigr).
\label{eq:A2}
\end{align}

\paragraph{Step 3. Chernoff Bound for the Random Sum.}
Let $U_{\mathrm{hon}}(T)$ denote the honest agent's total payoff over the random number $N_T$ of rounds in $[0,T]$:
\[
  U_{\mathrm{hon}}(T) = \sum_{j=1}^{N_T} X_j = N_T \mu_{\min} + \sum_{j=1}^{N_T} Y_j,
\]
where $Y_j := X_j - \mu_{\min}$ are i.i.d.\ random variables.

Now we consider the probability that the cumulative payoff is non-positive $ \Pr[U_{\mathrm{hon}}(T)\le0]$.

First, we condition on the total number of rounds $N_T = n$.
\begin{align*}
  \Pr[U_{\mathrm{hon}}(T) \le 0 \mid N_T = n]
  &= \Pr\left[\sum_{j=1}^n X_j \le 0 \right] \\
  &= \Pr\left[ n\mu_{\min} + \sum_{j=1}^n Y_j \le 0 \right] \\
  &= \Pr\left[ \sum_{j=1}^n Y_j \le - n\mu_{\min} \right].
\end{align*}

Apply Chernoff's (exponential Markov) inequality:
For any $\theta > 0$,
\begin{align*}
  \Pr\left[ \sum_{j=1}^n Y_j \le - n\mu_{\min} \right]
  &= \Pr\left[ e^{-\theta \sum_{j=1}^n Y_j} \ge e^{\theta n\mu_{\min}} \right] \\
  &\le e^{-\theta n\mu_{\min}}\, \mathbb{E}\left[ e^{\theta \sum_{j=1}^n Y_j} \right] \\
  &= e^{-\theta n\mu_{\min}} \left( \mathbb{E}\left[ e^{\theta Y_1} \right] \right)^n,
\end{align*}
where the last equality uses independence of the $Y_j$.

Now, remove the conditioning by averaging over all possible $n$.
Recall that $N_T \sim \mathrm{Poisson}(\lambda T)$, so
\begin{align*}
  \Pr[U_{\mathrm{hon}}(T) \le 0] = \sum_{n=0}^{\infty} \Pr[U_{\mathrm{hon}}(T) \le 0 \mid N_T = n]\, \Pr[N_T = n].
\end{align*}
Using the bound above and properties of exponents and linearity of expectation
\begin{align*}
  \Pr[U_{\mathrm{hon}}(T) \le 0]
  &\le \sum_{n=0}^{\infty}
        \left[
          e^{-\theta n\mu_{\min}} \left( \mathbb{E}[e^{\theta Y_1}] \right)^n
        \right]
        \Pr[N_T = n] \\
  &= \mathbb{E}\left[
       \left( e^{-\theta \mu_{\min}} \mathbb{E}[e^{\theta Y_1}] \right)^{N_T}
     \right].
\end{align*}

The Poisson moment-generating formula:
For any $z>0$,
$ \mathbb{E}[z^{N_T}] = \exp\left( \lambda T (z - 1) \right), $
where $z = e^{-\theta \mu_{\min}} \mathbb{E}[e^{\theta Y_1}]$:
\[
  \Pr[U_{\mathrm{hon}}(T) \le 0]
  \le
  \exp\left(
      \lambda T \left(
         e^{-\theta\mu_{\min}} \mathbb{E}[e^{\theta Y_1}]
         - 1
      \right)
      \right).
\]

Finally, upper bound $\mathbb{E}[e^{\theta Y_1}]$ using Bernstein's MGF lemma~\eqref{eq:A2}:
\[
  \mathbb{E}[e^{\theta Y_1}]
  \le \exp\left( \frac{\theta^2 \sigma_{\mathrm{H}}^2}{2(1-\theta b/3)} \right).
\]
For small enough $\theta$, Taylor expand $e^{-\theta\mu_{\min}}$ and combine exponents to obtain
\begin{align}\label{eq:A3}
  \Pr[U_{\mathrm{hon}}(T)\le0]
  \le
  \exp\!\biggl(
       \lambda T
       \Bigl\{
         -\theta\mu_{\min}
         +\frac{\theta^{2}\sigma_{\mathrm{H}}^{2}}
               {2(1-\theta b/3)}
       \Bigr\}
       \biggr).
\end{align}
The optimal $\theta$ is chosen in the next step.

\textbf{Step 4. Optimise $\theta$.}
Set
\(
  g(\theta):=-\theta\mu_{\min}
             +\frac{\theta^{2}\sigma_{\mathrm{H}}^{2}}
                    {2(1-\theta b/3)}.
\)
Let \(t:=\theta b/3\in(0,1)\);
then \(\theta=3t/b\) and
\[
  g(t)
  =-\frac{3t\mu_{\min}}{b}
   +\frac{9t^{2}\sigma_{\mathrm{H}}^{2}}
         {2b^{2}(1-t)}.
\]
Differentiate:
\[
  g'(t)
  =-\frac{3\mu_{\min}}{b}
   +\frac{9\sigma_{\mathrm{H}}^{2}}{2b^{2}}
     \,\frac{2t-1}{(1-t)^{2}}.
\]
Solve \(g'(t)=0\) to obtain
\[
  t^\star
  =1-\frac{1}{\sqrt{1+2b\mu_{\min}/(3\sigma_{\mathrm{H}}^{2})}}.
\]
Plug back:
\begin{align}
  g(t^\star)
  =-\frac{\mu_{\min}^{2}}
         {2\sigma_{\mathrm{H}}^{2}
          +\tfrac23 b\mu_{\min}}.
\label{eq:A4}
\end{align}

\textbf{Step 5. Combine All.}
Combine \eqref{eq:A3} and \eqref{eq:A4} to get
\[
  \Pr[U_{\mathrm{hon}}(T)\le0]
  \le
  \exp\!\Bigl(
        -\frac{\lambda T\,\mu_{\min}^{2}}
               {2\sigma_{\mathrm{H}}^{2}
                +\tfrac23 b\mu_{\min}}
      \Bigr),
\]
This complete the proof of claim (b).

\paragraph{(c) Malicious loss.}

A malicious seat flips its pay-off distribution, the proof closely follows claim (b).

\textbf{Step 1. Negative Mean.}
Conditions (E1)–(E2) force $ \mathbb E[X_i] \le -\delta P<0$.
Define centred variables $ Z_i:=X_i+\delta P $
so that $\mathbb E[Z_i]=0$ and \(Z_i\le b\).

\textbf{Step 2. Apply \cref{lem:bernstein-mgf}.}
Replace \(\mu_{\min}\) by \(\delta P\) throughout Steps 2–5 above.
No other constant changes.  
The give us 
\begin{align*}
  \Pr[U_{\mathrm{mal}}(T)\ge0]
  \le
  \exp\!\Bigl(
        -\frac{\lambda T\,(\delta P)^{2}}
               {2\sigma_{\mathrm{H}}^{2}
                +\tfrac23 b\,\delta P}
      \Bigr),
\end{align*}
proving the tail in (c).

\textbf{Step 3. Expected Loss.}
Linearity of expectation with $N_T\sim\text{Poisson}(\lambda T)$ gives
\[
  \mathbb E[U_{\mathrm{mal}}(T)]
  =\lambda T\,\mathbb E[X_i]
  \le -\lambda T\,\delta P,
\]
completing the proof of claim (c).

\end{proof}

\end{document}